\pdfoutput=1
\documentclass[10pt]{article}
\usepackage{enumerate}
\usepackage[OT1]{fontenc}
\usepackage{amsmath,amssymb}
\usepackage{natbib}
\usepackage[usenames,dvipsnames]{xcolor}
\usepackage{geometry}
\usepackage{dsfont}
\usepackage{pgfplots}
\usepackage{smile_new}
\usepackage{multirow}
\usepackage{rotating}
\usepackage{enumerate}
\usepackage{esvect}
\usepackage{tikz}
\usetikzlibrary{patterns}
\usetikzlibrary{arrows}
\usepackage{hyperref}
\definecolor{mydarkblue}{rgb}{0,0.08,0.45}
\hypersetup{
	colorlinks=true,
	linkcolor=mydarkblue,
	urlcolor=magenta,
	citecolor=mydarkblue,
}
\usepackage{algorithm}
\usepackage{algorithmic}
\usepackage[most,skins]{tcolorbox}
\definecolor{rliableolive}{HTML}{BBCC33}
\definecolor{rliableblue}{HTML}{77AADD}
\definecolor{rliablered}{HTML}{EE8866}
\definecolor{LightCyan}{rgb}{0.88,1,1}
\definecolor{darkblue}{HTML}{2878D9}
\definecolor{navyblue}{HTML}{0000FF}

\definecolor{keycolor}{HTML}{B5605C}         
\definecolor{morandibluelight}{HTML}{E2ECF7} 
\definecolor{morandibluedeep}{HTML}{6E8AAD}  

\definecolor{figureblue}{HTML}{2870A8}
\definecolor{figureorange}{HTML}{E26B38}
\definecolor{figuregreen}{HTML}{426043}
\definecolor{figurered}{HTML}{5F0014}
\definecolor{figurelightblue}{HTML}{76B5C3}
\newcommand{\kw}[1]{\textcolor{keycolor}{#1}}

\newtcolorbox{highlightbox}{
  enhanced,
  colback=morandibluelight,
  colframe=morandibluelight,
  boxrule=0pt, arc=0pt,
  left=10pt, right=8pt, top=6pt, bottom=6pt,
  borderline west={2pt}{0pt}{morandibluedeep}
}
\usepackage{enumitem}
\usepackage{acronym}
\acrodef{svd}[SVD]{Singular-Value Decomposition}
\acrodef{rgi}[RGI]{Relative Generalization Invariance}
\acrodef{llm}[LLM]{Large Language Model}
\acrodef{ntk}[NTK]{Neural Tangent Kernel}
\acrodef{mf}[MF]{Mean-Field}
\acrodef{ccc}[CCC]{Concordance Correlation Coefficient}
\acrodef{pcfg}[PCFG]{Probabilistic Context-Free Grammar}
\acrodef{nds}[NDS]{Normalized Directional Sharpness}
\newcommand{\gd}{\mathsf{GD}}
\newcommand{\muon}{\mathsf{Muon}}
\newcommand{\spec}{\texttt{spec}}
\newcommand{\sgn}{\texttt{sgn}}
\newcommand{\mat}{\texttt{mat}}

\newcommand{\Sc}{{\mathcal{S}}}

\newcommand{\opt}{\mathrm{opt}}
\DeclareMathOperator{\Diag}{Diag}
\usepackage{xspace}
\newcommand{\adam}{\mathsf{Adam}\xspace}

\theoremstyle{plain}

\usepackage{mathrsfs}
\usepackage{fullpage}

\usepackage[protrusion=false,expansion=true]{microtype}

\title{\huge Why Muon Outperforms Adam: A Curvature Perspective}
\author{Shuche Wang\textsuperscript{1,$\ast$}\quad Fengzhuo Zhang\textsuperscript{2,$\ast$,$\dagger$}\quad Jiaxiang Li\textsuperscript{3}\quad Dirk Bergemann\textsuperscript{2}\quad Zhuoran Yang\textsuperscript{2}\\
\textsuperscript{1}National University of Singapore\quad\textsuperscript{2}Yale University\quad \textsuperscript{3}University of Minnesota 
}
\date{May 6, 2026}
\begin{document}
\maketitle
\renewcommand\thefootnote{}\footnotetext{$\ast$ Equal contribution.}\footnotetext{$\dagger$ Project Lead.}\footnotetext{Email: Corresponding to \texttt{fzzhang@u.nus.edu}, \texttt{zhuoran.yang@yale.edu} }

\begin{abstract}
Muon improves training efficiency over Adam in large language-model training by about two times, but the local geometric source of this advantage remains unclear. Our work takes a first step toward demystifying Muon's superiority over Adam from a curvature perspective. First, we apply a second-order Taylor approximation to the training landscape and show that Muon achieves a larger one-step loss decrease than Adam at matched validation loss. 
The two optimizers have comparable first-order gains, but Muon consistently incurs a smaller second-order curvature penalty. 
Second, we decompose this curvature penalty into the squared update norm and Normalized Directional Sharpness (NDS). 
We find that Muon and Adam have comparable update norms, so Muon's smaller curvature penalty is driven by lower NDS, not update scale. Third, we study how training data and model structure shape Muon's NDS advantage. Using Zipf-Probabilistic Context-Free Grammar (PCFG) data with controlled imbalance, we show that data imbalance amplifies Muon's NDS advantage over Adam. A within-/cross-layer decomposition further shows that, in the middle and late stages of training, Muon's lower NDS is mainly sustained by smaller within-layer curvature. 
Beyond empirical evidence, we analyze stylized quadratic problems with heterogeneous curvature and gradient alignment toward high-curvature modes. 
We prove that Muon attains a smaller average NDS than GD by balancing update energy across curvature groups; when curvature heterogeneity is sufficiently strong, this also yields lower local quadratic loss after the same number of steps.
\end{abstract}
\section{Introduction}
Muon has emerged as a powerful alternative to Adam for \ac{llm} pretraining~\citep{jordan6muon}. 
It exploits the matrix structure of parameters by spectrally normalizing the gradient momentum matrix, effectively setting its nonzero singular values to the same scale. 
This matrix-aware design enables Muon to achieve up to about $2\times$ faster training than Adam in \ac{llm} pretraining across various model scales~\citep{liu2025muon,modded_nanogpt_2024,shah2025practical}.
Recent works have sought to explain Muon's advantage over Adam through the lenses of associative memory and data long-tailedness~\citep{wang2025muon,vasudeva2025muon}.

Different from existing mechanistic interpretations of Muon's superiority, we take a first step toward understanding Muon's advantage over Adam from the perspective of the optimization landscape. Specifically, we ask the following questions:
\begin{highlightbox}
\begin{itemize}[leftmargin=2em]
    \item[1.] \emph{What landscape property supports Muon's advantage over Adam?}
    \item[2.] \emph{How do pretraining factors, such as the training data and model structure, influence this property?}
\end{itemize}
\end{highlightbox}
To answer the first question, we analyze the one-step loss decrease of Muon and Adam via a second-order Taylor expansion of the training loss. We find that this expansion accurately predicts the realized loss decrease and reveals that, while the two optimizers achieve comparable first-order (gradient alignment) gains, Muon incurs a much smaller \kw{curvature penalty}. 
That is, Muon's larger realized loss decrease is primarily driven by its smaller second-order curvature cost. 
We further decompose the curvature penalty into contributions from the update norm and \ac{nds}, showing that Muon's smaller curvature penalty originates from \kw{lower \ac{nds}}, which is determined by the update direction, rather than from smaller update norms. 
Thus, our analysis identifies lower \ac{nds} as the main contributor to Muon's superiority over Adam.\looseness=-1

To answer the second question, we investigate how training data and model structure contribute to Muon's smaller \ac{nds}. 
Motivated by \citet{wang2025muon,vasudeva2025muon}, 
we first examine the role of \kw{data imbalance} by training on synthetic data generated from a   Zipf-\ac{pcfg} with controlled imbalance levels. 
We find that Muon's \ac{nds} advantage over Adam becomes larger as the dataset becomes more imbalanced. 
Complementing the data perspective, we then turn to the model-structure perspective and decompose \ac{nds} into \kw{within-layer} and \kw{cross-layer} contributions. 
This decomposition shows that Muon's cross-layer contribution decreases rapidly during training, so in the middle and late stages of pretraining, its smaller \ac{nds} is mainly sustained by a smaller within-layer \ac{nds}.

To theoretically understand these observations, we study Muon's behavior on stylized quadratic optimization problems designed to mirror the landscape characteristics of \ac{llm} training. 
On these problems, Adam and GD empirically exhibit comparable \ac{nds} and loss decreases, so we focus the analysis on GD and Muon for simplicity. 
Under heterogeneous curvatures and gradient alignment with high-curvature directions, we prove that Muon has a smaller average \ac{nds} than GD because its update \kw{balances high- and low-curvature directions} more evenly. 
Moreover, when curvature heterogeneity is sufficiently strong, Muon achieves a smaller loss than GD after the same number of steps. 
These results provide theoretical support for our main empirical findings.\looseness=-1

Summarizing the empirical and theoretical findings, we identify a concrete mechanism behind Muon's advantage over Adam.
\begin{highlightbox}
Muon achieves a larger one-step loss decrease than Adam because its spectrally normalized update direction has lower \ac{nds}, thereby incurring a smaller second-order curvature cost.
\end{highlightbox}

\section{Related Work}
\label{app:related-work}

\paragraph{Muon and structured matrix updates.}
Recent works study Muon from empirical, algorithmic, and geometric perspectives.
\citet{jordan6muon} first established Muon's gains on matrix-valued parameters, while \citet{shah2025practical,liu2025muon,sato2025analysis,li2025note,kim2026convergence,si2025adamuon} examine its practical behavior, scalability, convergence properties, and algorithmic variants.
One line of follow-up work asks what geometry Muon's orthogonalized update is implicitly optimizing, as reflected by \citet{chen2025muon,ma2026preconditioning,shulgin2025beyond,kim2026convergence}.
Another line studies how to make orthogonalized matrix updates cheaper, more scalable, or more robust in practical training systems.
This includes distributed, periodic, low-rank, adaptive, federated, quantized, and long-tailed variants such as~\citet{ahn2025dion,khaled2025muonbp,si2025adamuon,he2025low,li2025normuon,liu2025fedmuon,zhang2026teon,zhang2026muon+,liu2026muon,cheng2026trasmuon}. Because Muon's practical implementation depends on approximate polar or Newton--Schulz maps, work on faster orthogonalization, including \citet{grishina2025accelerating,hu2026unso,boissin2025turbo}, is also closely connected.
At a broader level, \citet{bernstein2024old,kovalev2025understanding,pethick2025training,lau2025polargrad,an2025asgo,wen2025fantastic} place Muon and related orthogonalized or structured updates inside a larger geometric family.
This broader structured-update viewpoint also connects Muon to~\citet{bernstein2024modular,pethick2025training,lau2025polargrad,an2025asgo}, and matrix- or block-aware preconditioning methods such as~\citet{vyas2024soap,shazeer2018adafactor,anil2020scalable,martens2015optimizing,gupta2018shampoo}. There is also a line of work understanding and explaining why Muon outperforms Adam as \citet{wang2025muon, li2026muon, kim2026sharp, vasudeva2025generalization, qi2026delving}. Our results are also complementary to the memory-centric explanation of \citet{wang2025muon}, which attributes Muon's advantage to more balanced tail-end associative memory learning under heavy-tailed distributions.
However, we identify a local curvature mechanism: at the same iterate, Muon can pay a smaller directional sharpness penalty than Adam along its matrix update.

\paragraph{Adam and adaptive optimization.}
Our comparison also connects to the large literature on Adam and adaptive gradient methods, beginning with \citep{kingma2014adam,duchi2011adaptive}.
This literature ranges from decoupled regularization and convergence theory, as in~\citet{loshchilov2017decoupled,reddi2019convergence,chen2018convergence,zhou2018convergence,defossez2020simple,guo2021novel,li2023convergence,zhang2022adam,zou2019sufficient, liu2019variance,zhuang2020adabelief,zhang2024adam}. Other works examine why adaptive methods behave differently from SGD or spectral-gradient methods in language modeling, imbalanced data, and Transformer training \citep{kunstner2024heavy,vasudeva2025generalization,pan2023toward,zhang2024transformers}. We do not analyze Adam's convergence in this work. We use it as the baseline and compare the curvature it encounters along its matrix update against the corresponding Muon update.\looseness=-1

\paragraph{Sharpness, curvature, and optimization geometry.}
A large body of literature studies the role of sharpness and curvature in deep learning optimization and generalization.
Early work connected large-batch training, minima geometry, and generalization, while also pointing out that naive sharpness notions can be sensitive to parameterization.
This line includes~\citet{keskar2016large,dinh2017sharp,li2018visualizing,izmailov2018averaging}. Another line studies sharpness as a training-dynamics or algorithm-design quantity,
including sharpness-aware minimization and its efficient variants~\citep{cohen2021gradient,foret2020sharpness,kwon2021asam,andriushchenko2023modern,wen2023sharpness,du2021efficient,du2022sharpness}.
Closer to our setting, recent Transformer and LLM studies connect curvature heterogeneity to optimization behavior, including~\citet{zhang2024transformers,wang2025sharpness,pan2023toward,kalra2026scalable}.
These works primarily analyze global sharpness, flatness, or curvature-aware training rules.
We adopt a local, optimizer-dependent viewpoint: because Muon and Adam induce different update matrices at the same parameter point, we compare the Hessian curvature along each update direction, which explains why the optimizer gap appears in the second-order term even when the first-order gains are similar.

\section{Preliminaries}

In this section, we introduce the details of Adam and Muon optimizers and establish the notation used throughout the paper.
 
\textbf{Adam} has been the default optimizer for training \acp{llm} over the past decade~\citep{kingma2014adam}. 
It normalizes parameter updates coordinate-wise using exponential moving averages of the first and second moments of the stochastic gradient. 
For a matrix parameter $W_t^{\adam} \in \mathbb{R}^{m \times n}$, let 
$G_t^{\adam}=\nabla_{W}\mathcal L_{D_t}(W_t^{\adam})$ denote the gradient on the mini-batch $D_t\subseteq\calD$, where $\mathcal L_{D_t}$ is the empirical training loss on $D_t$. 
Adam maintains the following states:
\begin{align*}
    M_t=\beta_1 M_{t-1}+(1-\beta_1)G_t^{\adam},\quad V_t=\beta_2 V_{t-1}+(1-\beta_2)(G_t^{\adam}\odot G_t^{\adam}),
\end{align*}
which estimate the first moment and the element-wise second moment of the stochastic gradient, respectively. 
Here, $\beta_1,\beta_2\in[0,1)$ are hyperparameters, and $\odot$ denotes the Hadamard product. 
After bias correction, $M'_t=M_t/(1-\beta_1^t)$ and $V'_t=V_t/(1-\beta_2^t)$, Adam uses the update direction
$Z_t^{\adam}=\eta_t M'_t/(\sqrt{V'_t}+\epsilon)$
and updates the parameter by
$W_{t+1}^{\adam}=W_t^{\adam}-Z_t^{\adam}$,
where the square root and division are applied element-wise.

\textbf{Muon} is a matrix-parameter optimizer that explicitly exploits the spectral structure of matrix gradients~\citep{jordan6muon}. 
This matrix-structure-aware design has been shown to outperform Adam in large-scale \ac{llm} pretraining~\citep{liu2025muon}. 
For a matrix parameter $W_t^{\muon}\in\bbR^{m\times n}$, let $G_t^{\muon}=\nabla_{W}\mathcal L_{D_t}(W_t^{\muon})$ denote the gradient on the mini-batch $D_t\subseteq\calD$. Muon maintains a momentum accumulator
$B_t=\mu B_{t-1}+G_t^{\muon}$, with $B_0=0$ and $\mu\in[0,1)$. 
Given the singular value decomposition $B_t=U_tS_tV_t^\top$, Muon spectrally normalizes the momentum matrix by setting $O_t=U_tV_t^\top$ and updates the parameter by
$$W_{t+1}^{\muon}=W_t^{\muon}-Z_t^{\muon}, \qquad 
\text{where} ~~Z_t^{\muon}=\eta_t O_t.$$
In practice, $O_t$ can be efficiently approximated by a small number of Newton--Schulz iterations, rather than computed by an exact singular value decomposition. 
The resulting update is scale-invariant: multiplying $B_t$ by any positive scalar does not change the update direction $Z_t^{\muon}$.

\paragraph{Notation.}
For a positive integer $N$, let $[N]=\{1,\ldots,N\}$.
For matrices $A, B \in \mathbb{R}^{m \times n}$, define the Frobenius inner product and the corresponding norm as $\langle A, B \rangle= \Tr(A^\top B), \|A\|_F= \sqrt{\langle A, A \rangle}.$
Throughout, $\mathcal{L}_D$ denotes the empirical training loss on $D$, which is twice continuously differentiable in the region of interest. $W$ denotes the matrix parameter. $G= \nabla_W \mathcal{L}_D(W)$ denotes the gradient, and $\mathcal{H}=\nabla_W^2 \mathcal{L}_D(W):\mathbb{R}^{m\times n}\rightarrow \mathbb{R}^{m\times n}$ denotes the Hessian operator acting on matrix perturbations. For any matrix perturbation $Z\in\mathbb{R}^{m\times n}$, we define $\mathcal H[Z]=\frac{d}{d\epsilon}
\nabla_W \mathcal L_D(W+\epsilon Z)|_{\epsilon=0}$.
We write $\mat(\mathcal{H})\in\mathbb{R}^{mn\times mn}$ for the matrix representation of $\mathcal{H}$ under vectorization, i.e., $\mathrm{vec}(\mathcal{H}[Z])=\mat(\mathcal{H})\,\mathrm{vec}(Z)$.
For a vector $x=(x_1,\ldots,x_d)$, $\Diag(x_1,\ldots,x_d)$ denotes the $d\times d$ diagonal matrix with entries $x_1,\ldots,x_d$.

\section{Main Results}
\label{sec:main-results}

In this section, we present the main empirical findings that characterize Muon's advantage over Adam from a curvature perspective. In Section~\ref{sec:one-step-decrease}, we show that Muon achieves a larger one-step loss decrease than Adam, and that this advantage is driven by a smaller second-order curvature penalty rather than a larger first-order gain. In Section~\ref{sec:sharpness-comparison}, we decompose the curvature penalty and identify lower \ac{nds} as the source of Muon's smaller curvature cost. In Section~\ref{sec:imbalance-results}, we examine how dataset imbalance amplifies the \ac{nds} gap between the two optimizers. Finally, in Section~\ref{sec:inner-cross-results}, we decompose \ac{nds} into within-layer and cross-layer contributions to understand how different layers contribute to Muon's curvature advantage.

\subsection{Muon Incurs a Smaller Second-Order Curvature Penalty Than Adam}
\label{sec:one-step-decrease}

To study the superiority of Muon over Adam from the curvature perspective, we begin with the local decomposition of one-step optimization progress. 
Specifically, for the parameter matrix $W$ and an update $Z$, the empirical loss decrease $\Delta_{D}(W,Z)=\mathcal{L}_{D}(W)-\mathcal{L}_{D}(W - Z)$ on the mini-batch $D$ can be approximated via 
\begin{equation}
\Delta_{D}(W,Z)\approx
\langle G, Z \rangle
- \frac{1}{2} \langle Z, \mathcal H[Z] \rangle =I_D^{(1)}(W,Z)-I_D^{(2)}(W,Z).
\label{eq:taylor}
\end{equation}
This equation decomposes the loss decrease $\Delta_{D}(W,Z)$ into the first-order decrease $I_D^{(1)}(W,Z)=\langle G, Z \rangle$ and the curvature penalty $I_D^{(2)}(W,Z)=1/2\cdot\langle Z, \mathcal H[Z] \rangle$. 
The first-order term measures the loss reduction induced by moving along the update direction, whereas the curvature penalty captures the second-order increase in loss that offsets this reduction. We refer to the right-hand side of Eqn.~\eqref{eq:taylor} as the predicted loss decrease, and the left-hand side $\Delta_{D}(W,Z)$ as the realized loss decrease.

\begin{figure}[t]
\centering
\subfigure[Predicted vs.\ realized loss decreases.\label{fig:descent-comparison-a}]{
\includegraphics[width=0.31\textwidth]{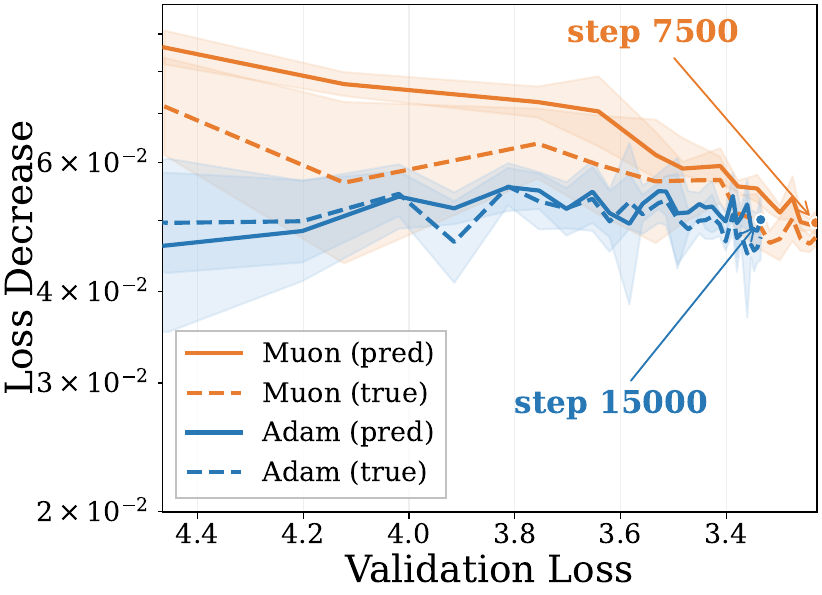}
}
\subfigure[First-order decrease.\label{fig:descent-comparison-b}]{
\includegraphics[width=0.31\textwidth]{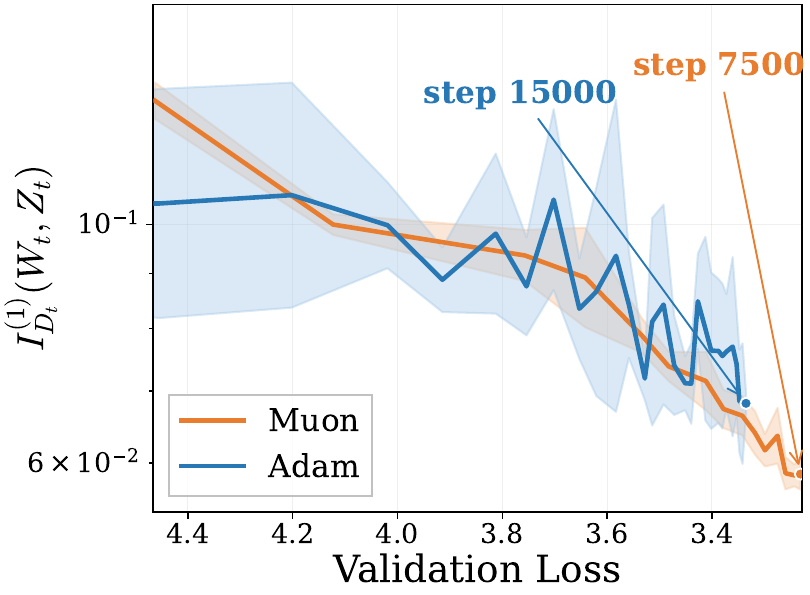}
}
\subfigure[Curvature penalty.\label{fig:descent-comparison-c}]{
\includegraphics[width=0.31\textwidth]{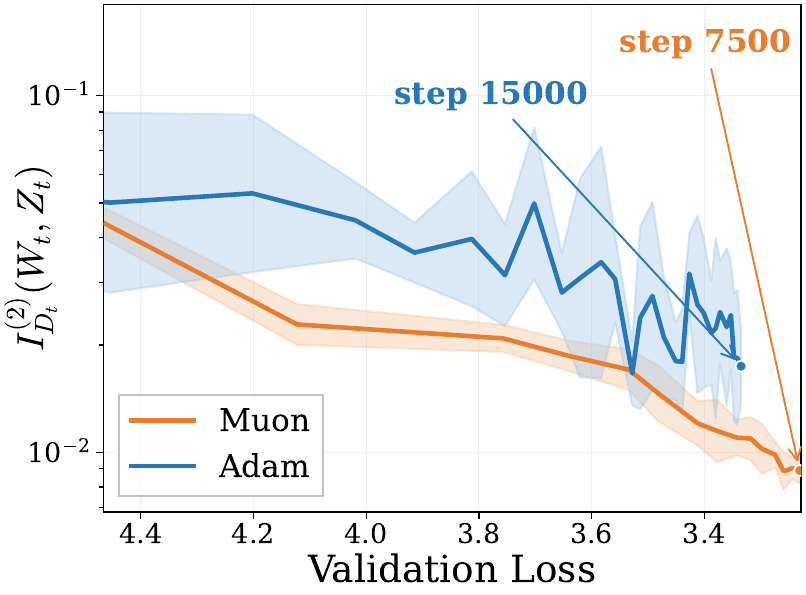}
}
\caption{Decomposition of one-step optimization progress along the update directions of Muon and Adam.
Panel (a) compares the predicted one-step loss decrease, $I_{D_t}^{(1)}(W_t,Z_t)-I_{D_t}^{(2)}(W_t,Z_t)$, with the realized one-step loss decrease, $\Delta_{D_t}(W_t,Z_t)$.
Panel (b) reports the first-order decrease $I_{D_t}^{(1)}(W_t,Z_t)$, and Panel (c) reports the curvature penalty $I_{D_t}^{(2)}(W_t,Z_t)$.
The results show that Muon and Adam achieve similar first-order decreases, while Muon incurs a smaller curvature penalty.}
\vspace{-1em}
\label{fig:descent-comparison}
\end{figure}

\vspace{5pt}
\noindent \textbf{Experiment setups.} 
To evaluate whether the approximation in Eqn.~\eqref{eq:taylor} explains the advantage of Muon over Adam, we compute $\Delta_{D}(W,Z)$, $I_D^{(1)}$, and $I_D^{(2)}$ under the following setting. We train a 124M-parameter NanoGPT model on the FineWeb dataset~\citep{penedo2024fineweb}. 
We emphasize this scale because 124M parameters is among the largest model sizes considered in existing studies of curvature in \ac{llm} pretraining~\citep{zhang2024transformers,dong2025towards}, as Hessian-based computation scales quadratically with the parameter dimension. 
For both Adam and Muon, we select the optimal learning rate by grid search.  Full experimental details are provided in Appendix~\ref{app:exp-details}. 
At step $t$ of Muon's trajectory, given the training batch $D_t$, the current parameter $W_t^{\muon}$, and the update direction $Z_t^{\muon}$, we compute $\Delta_{D_t}(W_t^{\muon},Z_t^{\muon})$, $I_{D_t}^{(1)}(W_t^{\muon},Z_t^{\muon})$, and $I_{D_t}^{(2)}(W_t^{\muon},Z_t^{\muon})$.
We compute the same quantities for Adam along its own optimization trajectory using its corresponding parameter updates. 
Since Muon reaches lower loss than Adam at the same training step, for a fair comparison, we compare the two optimizers at matched validation loss rather than at matched training step.

\vspace{5pt}
\noindent \textbf{Experiment findings.} 
In Figure~\ref{fig:descent-comparison-a}, we plot the predicted loss decrease 
$I_{D_t}^{(1)}(W_t^{\opt},Z_t^{\opt})-I_{D_t}^{(2)}(W_t^{\opt},Z_t^{\opt})$ 
and the realized loss decreases 
$\Delta_{D_t}(W_t^{\opt},Z_t^{\opt})$ for $\opt\in\{\mathsf{Muon},\mathsf{Adam}\}$ at matched validation-loss levels. 
Across validation-loss levels, Muon achieves a larger realized loss decrease than Adam, consistent with its superior training efficiency. 
For Adam, the predicted loss decrease closely matches the realized loss decrease. 
For Muon, the predicted loss decrease is slightly smaller than the realized loss decrease. 
Thus, the following analysis explains Muon's advantage over Adam within the second-order approximation in Eqn.~\eqref{eq:taylor}, while leaving open the possibility that higher-order effects further contribute to  Muon's advantage.

In Figures~\ref{fig:descent-comparison-b} and \ref{fig:descent-comparison-c}, we report the first-order decrease and curvature penalty of Adam and Muon at matched validation-loss levels. 
As shown in Figure~\ref{fig:descent-comparison-b}, Adam and Muon have \kw{comparable first-order decreases} throughout optimization, as the two curves remain at similar levels across all validation-loss values, although Adam exhibits higher variability across seeds. 
In contrast, Figure~\ref{fig:descent-comparison-c} shows a clear gap in the curvature term: {\color{figureblue}Adam's curve (blue)} lies consistently above {\color{figureorange}Muon's curve (orange)}, indicating that Muon incurs a substantially \kw{smaller Hessian quadratic-form penalty} along its update directions. 
Together with Figure~\ref{fig:descent-comparison-a}, these results show that the larger one-step loss decrease of Muon is driven primarily by a smaller second-order curvature penalty, rather than by a larger first-order gain. This leads to our first observation.

\begin{highlightbox}
    {\bf Observation 1:} Muon achieves a larger one-step loss decrease than Adam under validation-loss alignment. The gap is primarily due to a smaller second-order curvature cost.
\end{highlightbox}

\subsection{Muon's Smaller Curvature Penalty Comes from Its Update Direction}\label{sec:sharpness-comparison}

Observation~1 shows that Muon's advantage in one-step loss decrease is primarily explained by its smaller second-order curvature cost.
Note that the curvature penalty $I^{(2)}_D(W,Z) = 1/2\cdot\langle Z, \mathcal{H}[Z]\rangle$ depends on both the scale of the update and the curvature along its direction. We now ask: does Muon's smaller curvature cost arise because it takes smaller steps, or because its update direction encounters less curvature? 
To disentangle the effects of update direction and update scale, we measure the curvature of the local quadratic form by normalizing out the scale. 
Specifically, following \citet{pan2023toward}, we define the \acf{nds} along a nonzero update $Z$ as
\begin{equation}
    \Sc_F(W;Z)=\langle Z, \mathcal{H} [Z] \rangle/\|Z\|_F^2. 
    \label{eq:taylor_sharpness}
\end{equation}
With this definition, the curvature penalty factorizes as $I^{(2)}_D(W,Z) =1/2 \cdot \|Z\|_F^2 \cdot \Sc_F(W;Z)$, so a smaller curvature cost can \emph{only} come from a smaller update norm or a lower \ac{nds}.
We therefore examine whether Muon's smaller curvature penalty is driven by a smaller update norm or by a smaller \ac{nds}. 

Similar to Section~\ref{sec:one-step-decrease}, we plot the update norm and \ac{nds} at matched validation-loss levels. To compute the ratios between Adam and Muon, we linearly interpolate the values between adjacent checkpoints, since the two optimizers may not reach exactly the same validation loss at a recorded step. We also report the corresponding ratios at matched training steps in Appendix~\ref{app:sharpness}
, which leads to the same conclusion.


\vspace{5pt}
\noindent \textbf{Experiment findings.} In Figures~\ref{fig:sharpness-valloss} and \ref{fig:sharpness-frob-a}, we plot the values of $\|Z_t\|_{F}$ and $\calS_F$ aligned according to the validation loss. 
As shown in Figure~\ref{fig:sharpness-valloss}, {\color{figureblue}Adam's curve (blue)} lies consistently above {\color{figureorange}Muon's curve (orange)}, indicating that Muon has \kw{lower \ac{nds}} than Adam throughout training. In contrast, Figure~\ref{fig:sharpness-frob-a} shows that the two  optimizers have \kw{comparable update norms}: both curves remain nearly flat and close together across all validation-loss levels. Figure~\ref{fig:sharpness-frob-b} confirms this decomposition quantitatively by plotting the Adam-to-Muon ratios: {\color{figuregreen}the update-scale $\|Z_t\|_F^2$ ratio (green dotted)} stays close to $1$, while {\color{figurered}the curvature-penalty $I^{(2)}_{D_t}$ ratio (dark red solid)} and {\color{figurelightblue}the \ac{nds} $\Sc_F$ ratio (cyan dashed)} track each other closely, with an average \ac{nds} ratio of $1.76$. This shows that the curvature-penalty gap is almost entirely accounted for by the \ac{nds} gap, not by differences in update scale.
Thus, we reach the following conclusion.

\begin{figure}[t]
\centering
\subfigure[Sharpness comparison.\label{fig:sharpness-valloss}]{
\includegraphics[width=0.31\textwidth]{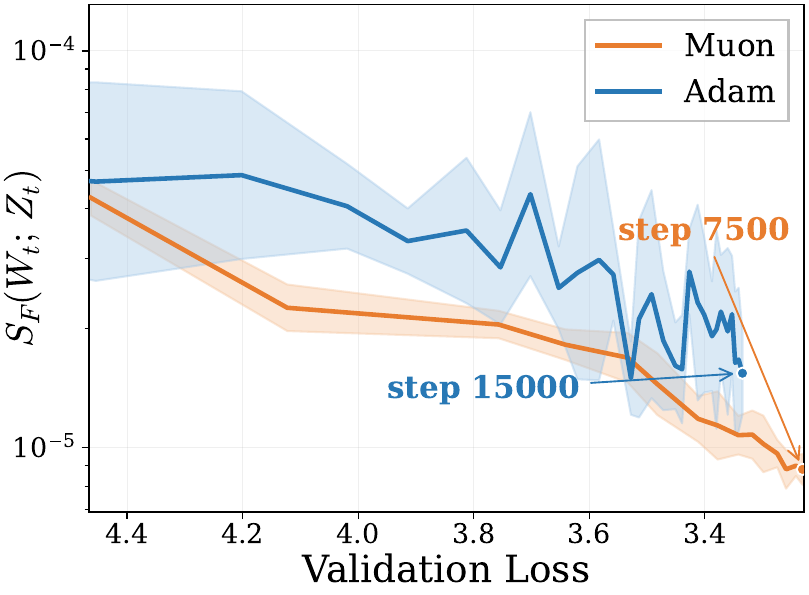}
}
\subfigure[Update norm comparison. \label{fig:sharpness-frob-a}]{
\includegraphics[width=0.31\textwidth]{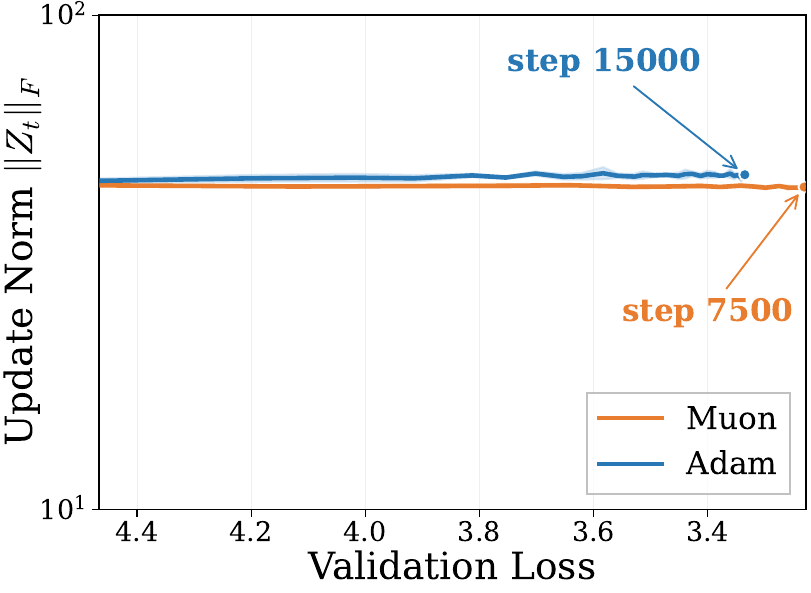}
}
\subfigure[Decomposition of the Adam-to-Muon sharpness gap.\label{fig:sharpness-frob-b}]{
\includegraphics[width=0.31\textwidth]{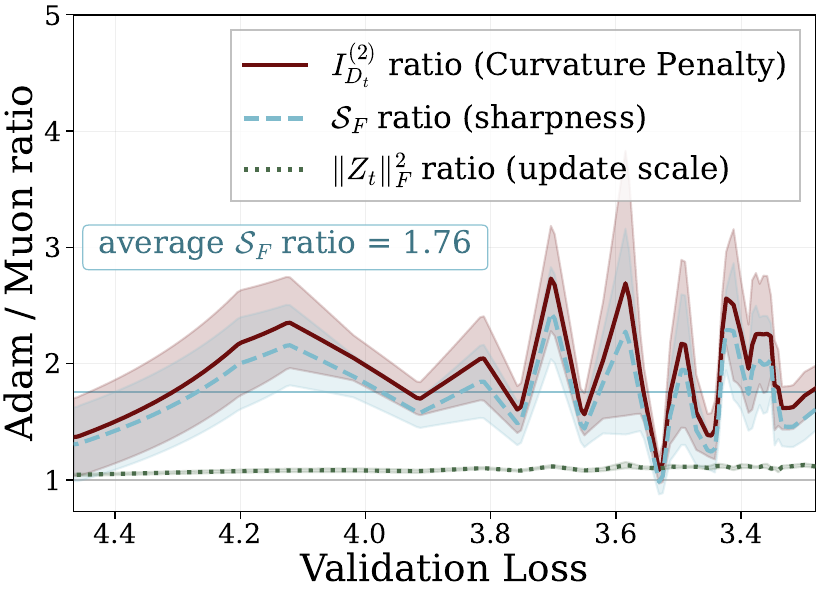}
}
\caption{\ac{nds} and update-norm comparisons between Muon and Adam.
Panel (a) plots \ac{nds} for Muon and Adam.
Panel (b) plots the update norms of Muon and Adam.
Panel (c) reports the Adam-to-Muon ratios of the curvature penalty, \ac{nds}, and the squared Frobenius norm of the update.
Muon and Adam have similar update norms, whereas Muon has smaller \ac{nds} than Adam. Moreover, the Adam-to-Muon ratio of \ac{nds} closely tracks that of the curvature penalty.}
\label{fig:sharpness-frob}
\end{figure}

\begin{highlightbox}
    {\bf Observation 2:} Muon and Adam have comparable update norms, so Muon's smaller curvature penalty is driven by the notably smaller \ac{nds}.
\end{highlightbox}
Observations~1 and~2 show that Muon's larger one-step loss decrease is driven by a smaller second-order curvature penalty, which in turn arises from lower \ac{nds} rather than smaller update norms. 
We next investigate how this curvature advantage depends on two key factors: the training data and the model architecture. 
On the data side, we identify which data properties affect the magnitude of Muon's curvature advantage. 
On the model side, we examine how the interaction between different layers contributes to this advantage.

\subsection{Dataset Imbalance Widens the \ac{nds} Gap between Adam and Muon}
\label{sec:imbalance-results}

Prior work suggests that the tail structure and imbalance of training data can interact strongly with optimizer behavior in two ways. 
First, Hessian analyses of neural networks show that the Hessian spectrum depends sensitively on the data mixture~\citep{sagun2017empirical,papyan2018full}. 
Second, recent studies find that Muon can outperform Adam especially on heavy-tailed data~\citep{wang2025muon,vasudeva2025muon}. 
Motivated by these findings, we study whether dataset imbalance amplifies the normalized directional-sharpness gap identified above.

\begin{figure}[t]
\centering
\subfigure[$\widetilde{S}_{\opt}(s)$ at $s=0, 0.5,  1$.\label{fig:imbalance-sharpness-a}]{
\begin{minipage}[b]{0.47\textwidth}
\centering
\includegraphics[width=\linewidth]{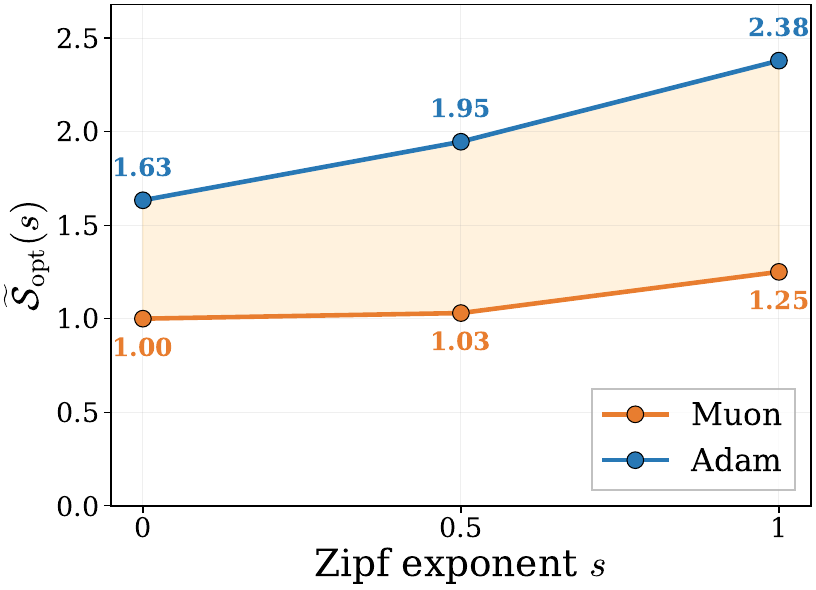}
\end{minipage}
}
\subfigure[$\Delta(s)$ at $s=0, 0.5,  1$\label{fig:imbalance-sharpness-b}]{
\begin{minipage}[b]{0.47\textwidth}
\centering
\includegraphics[width=\linewidth]{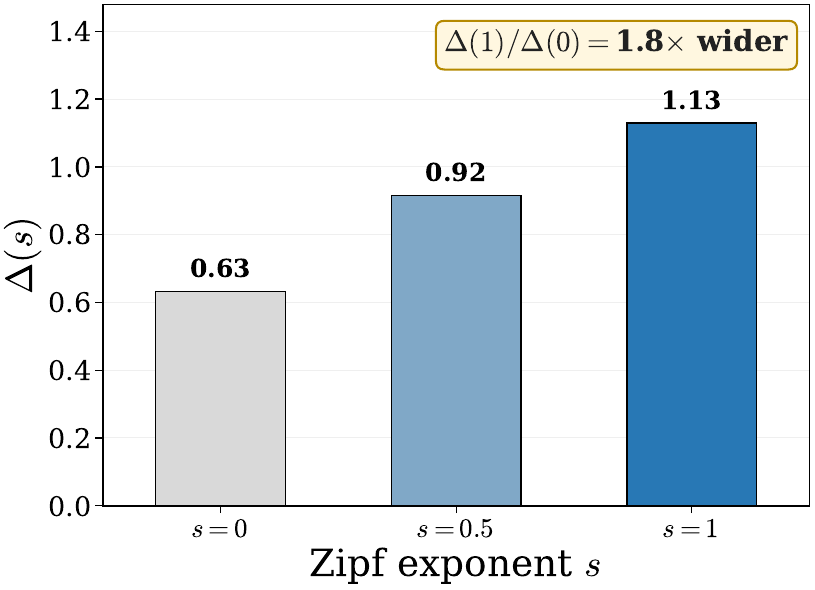}
\end{minipage}
}
\caption{Effect of different levels of imbalance ($s=0, 0.5, 1$) on \ac{nds}. Panel~(a) reports the trajectory-averaged \ac{nds}, normalized by Muon's value at $s=0$, under three imbalance levels ($s=0, 0.5, 1$).
Panel~(b) reports the Adam--Muon gap in \ac{nds} under the same settings.
The results show that Muon's advantage over Adam in \ac{nds} becomes larger as the data become more imbalanced.}
\label{fig:imbalance-sharpness}
\end{figure}

\vspace{5pt}
\noindent \textbf{Experiment setup: data generation.}
To study how data imbalance affects \ac{nds}, we construct synthetic training data using a Zipf-\ac{pcfg}, which allows us to explicitly control the degree of imbalance.
Specifically, we instantiate a Zipf-\ac{pcfg} with topics $k\in[K]$ and partition the vocabulary $\calV$ into $C$ token classes $\calV_1,\dots,\calV_C$, so that $\calV=\cup_{c=1}^C\calV_c$.
Different token classes correspond to different sentence components, such as nouns, adjectives, and verbs.
Each topic $k$ has its own preference distribution within each token class $c$.
For example, the topic $k=\text{``food''}$ assigns higher probabilities to eating-related verbs.
Given topic $k$ and token class $c$, let $\phi_{k,c}(j)$ denote the base probability of token $j\in\calV_c$.
To introduce controllable imbalance, we rank tokens in $\calV_c$ in decreasing order of $\phi_{k,c}(j)$ and denote the rank of token $j$ by $r(j,c,k)$, where the most probable token has rank $1$ and $r(j,c,k)\in\{1,\dots,|\calV_c|\}$.
At imbalance level $s$, we sample tokens from the reweighted distribution proportional to $r(j,c,k)^{-s}\phi_{k,c}(j)$. We then train a 9M-parameter NanoGPT model (4 layers, 4 attention heads, model dimension 256) on datasets generated under imbalance levels $s\in\{0, 0.5, 1\}$ for $10{,}000$ steps, using both Adam and Muon with learning rates selected by grid search. Additional experimental details are provided in Appendix~\ref{app:exp-details}.

\vspace{5pt}
\noindent \textbf{Metric: trajectory-averaged NDS.}
To evaluate \ac{nds} along the full training trajectory, for each optimizer $\opt\in\{\mathsf{Muon},\mathsf{Adam}\}$, we define the trajectory-averaged \ac{nds} at imbalance level $s$ as $$\bar S_{\opt}(s)=\sum_{t\in\calT} S_F(W_t^{\opt,s}; Z_t^{\opt,s})/|\calT|, $$ where $\calT$ denotes the set of training steps, $s\in\{0,0.5,1\}$ is the imbalance level, and $W_t^{\opt,s}$ and $Z_t^{\opt,s}$ denote the parameter and update induced by optimizer $\opt$ under imbalance level $s$ at step $t$.
To highlight the difference between Adam and Muon, we normalize $\bar S_{\opt}(s)$ by Muon's value at $s=0$: $\tilde S_{\opt}(s)=\bar S_{\opt}(s)/\bar S_{\mathsf{Muon}}(0)$.
We further define the normalized sharpness gap at imbalance level $s$ as $\Delta(s)=\tilde S_{\mathsf{Adam}}(s)-\tilde S_{\muon}(s)$.

\vspace{5pt}
\noindent \textbf{Experiment findings.}
In Figure~\ref{fig:imbalance-sharpness-a} we plot  the normalized trajectory-averaged \ac{nds} $\tilde S_{\opt}(s)$ against the  Zipf exponent   $s \in \{0,\,0.5,\,1\}$.
As shown in this figure, both optimizers' \ac{nds} increases monotonically with imbalance, but the \kw{effect is far stronger for Adam}:
Adam's normalized  \ac{nds} rises from $1.63$ to $2.38$ as $s$ increases from $0$ to $1$, whereas Muon's grows only from $1.00$ to $1.25$. The widening shaded region between the two curves reflects the growing gap. In addition, Figure~\ref{fig:imbalance-sharpness-b} quantifies this gap  directly: $\Delta(s)$ \kw{widens monotonically} from $0.63$ at $s=0$ to $1.13$ at $s=1$, a $1.8\times$ increase as the data becomes more imbalanced.
We summarize this observation as follows.

\begin{highlightbox}
    {\bf Observation 3:} Increasing imbalance level of the dataset not only amplifies the \ac{nds} for both Muon and Adam, but  also widens the \ac{nds} gap between them.
\end{highlightbox}

\subsection{Muon's \ac{nds} increasingly shifts toward within-layer Hessian blocks}
\label{sec:inner-cross-results}

Observation~2 establishes a normalized directional-sharpness gap between Muon and Adam over all model parameters.
In this section, we study how different layers contribute to this gap.

\vspace{5pt}
\noindent \textbf{Experiment setup: within-/cross-layer decomposition.}
Consider a model with $L$ layers, where the full parameter at step $t$ is $W_t=(W_{t,1},\ldots,W_{t,L})$ with $W_{t,\ell}\in\mathbb{R}^{m_\ell\times n_\ell}$ denoting the weight matrix of layer $\ell$. The corresponding update decomposes as $Z_t=(Z_{t,1},\ldots,Z_{t,L})$, where $Z_{t,\ell}\in\mathbb{R}^{m_\ell\times n_\ell}$ is the update to layer $\ell$ produced by the optimizer.
The Hessian operator $\mathcal{H}$ can likewise be decomposed into layer-wise blocks: for layers $\ell,\ell'\in[L]$, let $\mathcal{H}_{\ell\ell'}$ denote the block that maps a perturbation at layer $\ell'$ to the resulting second-order effect at layer $\ell$. By definition, the diagonal blocks $\mathcal{H}_{\ell\ell}$ capture within-layer curvature, while the off-diagonal blocks $\mathcal{H}_{\ell\ell'}$ ($\ell\neq\ell'$) capture cross-layer interactions.

This block structure allows us to decompose the \ac{nds} into within-layer and cross-layer contributions:
$\Sc_F(W_t;Z_t)=\Sc_F^{\mathrm{within}}(W_t;Z_t)+\Sc_F^{\mathrm{cross}}(W_t;Z_t)$,
where
\begin{align*}
    \Sc_F^{\mathrm{within}}(W_t;Z_t)=\sum_{\ell=1}^{L}\langle Z_{t,\ell},\mathcal H_{\ell\ell}[Z_{t,\ell}]\rangle/\|Z_t\|_F^2,\quad \Sc_F^{\mathrm{cross}}(W_t;Z_t)=\sum_{\ell\neq \ell^{\prime}}\langle Z_{t,\ell},\mathcal H_{\ell\ell^{\prime}}[Z_{t,\ell^{\prime}}]\rangle/\|Z_t\|_F^2.
\end{align*}
Intuitively, $\Sc_F^{\mathrm{within}}$ measures the curvature encountered when each layer's update interacts only with its own Hessian block, while $\Sc_F^{\mathrm{cross}}$ captures the additional curvature arising from interactions between updates at different layers.
We further define the relative within-layer contribution as $\rho_t^{\mathrm{within}}=\Sc_F^{\mathrm{within}}(W_t;Z_t)/\Sc_F(W_t;Z_t)$.

\begin{figure}[t]
\centering
\subfigure[\ac{nds} of within-layer and cross-layer components\label{fig:inner-a}]{
\begin{minipage}[b]{0.42\textwidth}
\centering
\includegraphics[width=\linewidth]{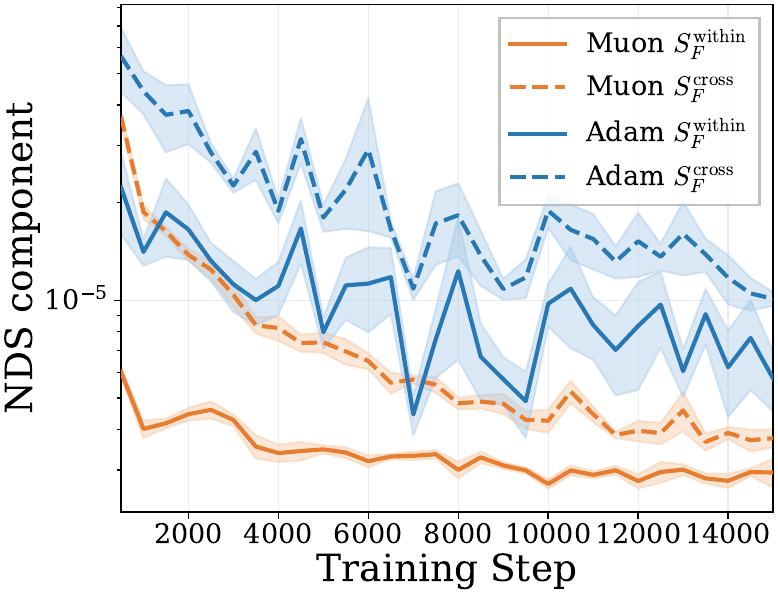}
\end{minipage}
}
\subfigure[Within-layer fraction\label{fig:inner-b}]{
\begin{minipage}[b]{0.42\textwidth}
\centering
\includegraphics[width=\linewidth]{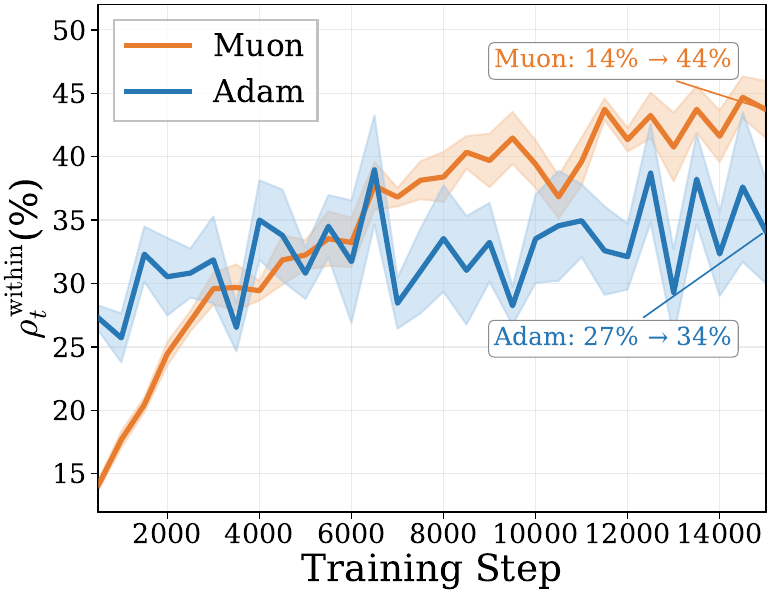}
\end{minipage}
}
\caption{Within-layer and cross-layer decomposition of directional sharpness over training.
Panel~(a) reports the within-layer and cross-layer components of $\Sc_F(W_t;Z_t)$ for Muon and Adam.
Panel~(b) reports the within-layer fraction.
The results show that, for Muon, the within-layer component accounts for an increasingly large share of directional sharpness over the course of training.
}
\label{fig:inner-cross-time}
\end{figure}

\vspace{5pt}
\noindent \textbf{Experiment findings.}
Figure~\ref{fig:inner-a} plots $\Sc_F^{\mathrm{within}}$ (solid lines) and $\Sc_F^{\mathrm{cross}}$ (dashed lines) for Adam (blue) and Muon (orange) under the same experimental setting as Section~\ref{sec:one-step-decrease}.
All four curves decrease over training, and both components are consistently smaller for Muon than for Adam: Muon's curves (orange) lie below Adam's curves (blue) throughout.
For Adam, the solid and dashed blue curves decrease at comparable rates, maintaining a roughly stable ratio between the two components.
For Muon, however, the dashed orange curve ($\Sc_F^{\mathrm{cross}}$) drops much faster than the solid orange curve ($\Sc_F^{\mathrm{within}}$), so the two Muon curves converge over training. This indicates that the \kw{within-layer component becomes increasingly dominant} in Muon's \ac{nds} as training proceeds.

Figure~\ref{fig:inner-b} quantifies this trend by plotting the within-layer fraction $\rho_t^{\mathrm{within}}$. {\color{figureorange}Muon's curve (orange)} rises steeply from about $14\%$ early in training to about $44\%$ later in training, nearly tripling its within-layer share. By contrast, {\color{figureblue}Adam's curve (blue)} fluctuates around $30\%$ with only a modest increase from about $27\%$ to about $34\%$.
This highlights the importance of Muon's small $\Sc_F^{\mathrm{within}}$ in keeping the full-model \ac{nds} low in the middle and late training stages, while Adam maintains a comparatively stable balance between within-layer and cross-layer contributions.
Moreover, Appendix~\ref{app:layer-localization} shows that the within-layer Adam--Muon sharpness gap is not uniformly distributed across layers, with nearly all of the gap concentrated in the first and deepest layers.\looseness=-1
\begin{highlightbox}
    {\bf Observation 4:} Over training, Muon's directional sharpness shifts increasingly toward within-layer Hessian blocks, while Adam's sharpness composition remains comparatively stable. Both within-layer and cross-layer components remain smaller for Muon than for Adam.
\end{highlightbox}

\section{A Case Study of  Structured Matrix-Block Quadratic Models}
\label{sec:case-study-kronecker}

Section~\ref{sec:main-results} establishes that Muon achieves a larger one-step loss decrease than Adam due to lower \ac{nds} (Observations~1--2), that data imbalance amplifies this gap (Observation~3), and that Muon's \ac{nds} advantage increasingly concentrates in within-layer Hessian blocks as training proceeds (Observation~4).
In this section, we provide theoretical justification for these empirical findings.
Since Observation~4 shows that within-layer curvature becomes the dominant component of Muon's \ac{nds} advantage, we isolate a single weight matrix and study the local curvature within this block on a quadratic model, comparing the curvature encountered by update directions induced by different optimizers.

\subsection{Structured Quadratic Model}\label{sec:model}

Motivated by the second-order Taylor approximation in Eqn.~\eqref{eq:taylor}, we focus on the local quadratic landscape around one optimization step in \ac{llm} pretraining. 
Given a fixed parameter $W_0\in\bbR^{d_1\times d_2}$, its gradient $G=\nabla\calL(W_0)$, and Hessian operator $\mathcal H=\nabla^2\calL(W_0)$, we consider the following quadratic model for an update $Y\in\bbR^{d_1\times d_2}$:
\begin{align}
\mathcal Q(Y)
    =\calL(W_0)-\langle G,Y\rangle+1/2\cdot\langle Y,\mathcal H[Y]\rangle .
\label{eq:quadratic-model}
\end{align}
To make this model representative of \ac{llm} pretraining, we impose four assumptions on the gradient and Hessian structure. Each assumption is empirically verified in real pretraining dynamics: we provide in-text verification figures below and additional details in Appendix~\ref{app:verify-relaxed-alignment}. We begin with a decomposition of the Hessian.

\begin{assumption}[Hessian Low Kronecker-Rankness]
\label{ass:block-curvature}
The local Hessian operator $\calH$ on the matrix block has small Kronecker rank. 
Specifically, let $\mat(\calH)\in\bbR^{d_1d_2\times d_1d_2}$ denote the matrix representation of $\calH$ under vectorization. 
Then there exist an integer $r \ll \min\{d_1^2,d_2^2\}$, symmetric matrices $A_k\in\bbR^{d_1\times d_1}$ and $B_k\in\bbR^{d_2\times d_2}$ such that
$\mat(\calH)=\sum_{k=1}^r B_k^\top\otimes A_k$,
where $\otimes$ denotes the Kronecker product.
\end{assumption}
This assumption states that the Hessian matrix admits a low Kronecker-rank approximation with rank $r$. 
It is motivated by prior work on K-FAC~\citep{martens2015optimizing,george2018fast}, which shows that the Fisher matrix can be effectively approximated by Kronecker-structured curvature factors. 
Since the Fisher matrix is closely related to the Hessian under standard conditions~\citep{wang2023theoretical}, these results suggest that Kronecker structure can also provide a useful approximation to Hessian curvature. 
We further verify this assumption in \ac{llm} pretraining with Muon.
Let $H_r\in\bbR^{d_1d_2\times d_1d_2}$ denote the best rank-$r$ Kronecker approximation of $\mat(\calH)$ in Frobenius norm. Figure~\ref{fig:kronecker-assumption-a} plots the explained Frobenius energy $\xi(r)=\|H_r\|_F^2/\|\mat(\calH)\|_F^2$ as a function of Kronecker rank $r$ for the four attention matrices ($W_Q$, $W_K$, $W_V$, $W_O$). All four curves rise steeply at small $r$ and saturate quickly: $W_K$ and $W_V$ cross the $\xi (r)=0.8$ threshold by $r\approx 3$--$5$, while $W_Q$ and $W_O$ reach it by $r\approx 7$--$9$.
Figure~\ref{fig:kronecker-assumption-b} visualizes the full Hessian $\mat(\calH)$, its rank-$4$ Kronecker approximation $H_4$, and the residual $\mat(\calH)-H_4$ for $W_V$ as heatmaps. The approximation captures the dominant block structure visible in the original Hessian, while the residual is uniformly faint, confirming that most energy is captured at low rank.
These results support Assumption~\ref{ass:block-curvature}.

\begin{figure}[t]
\centering
\subfigure[Explained Frobenius energy ratio $\xi(r)$ for the attention matrices.\label{fig:kronecker-assumption-a}]{
\includegraphics[width=0.27\textwidth,trim=6 4 4 1,clip]{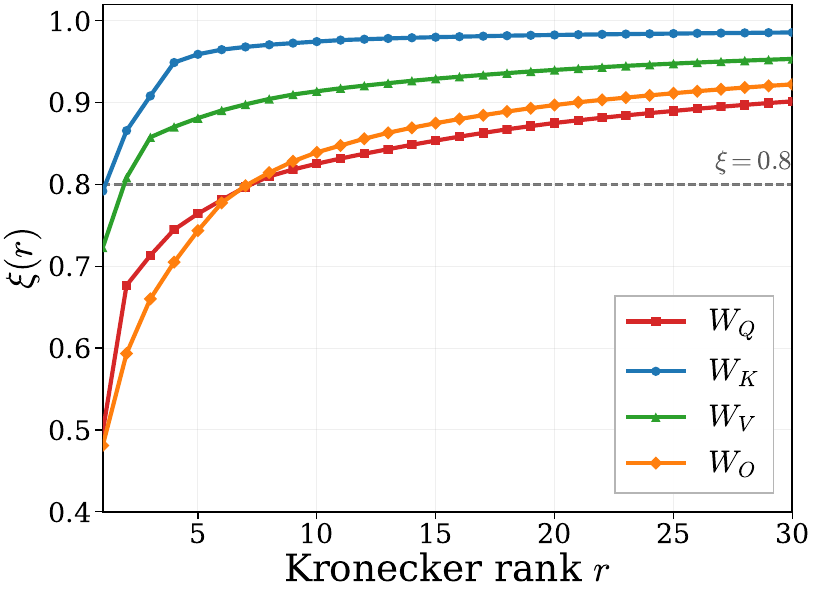}
}
\subfigure[$W_V$ Hessian, rank-$4$ Kronecker approximation, and residual.\label{fig:kronecker-assumption-b}]{
\includegraphics[width=0.68\textwidth]{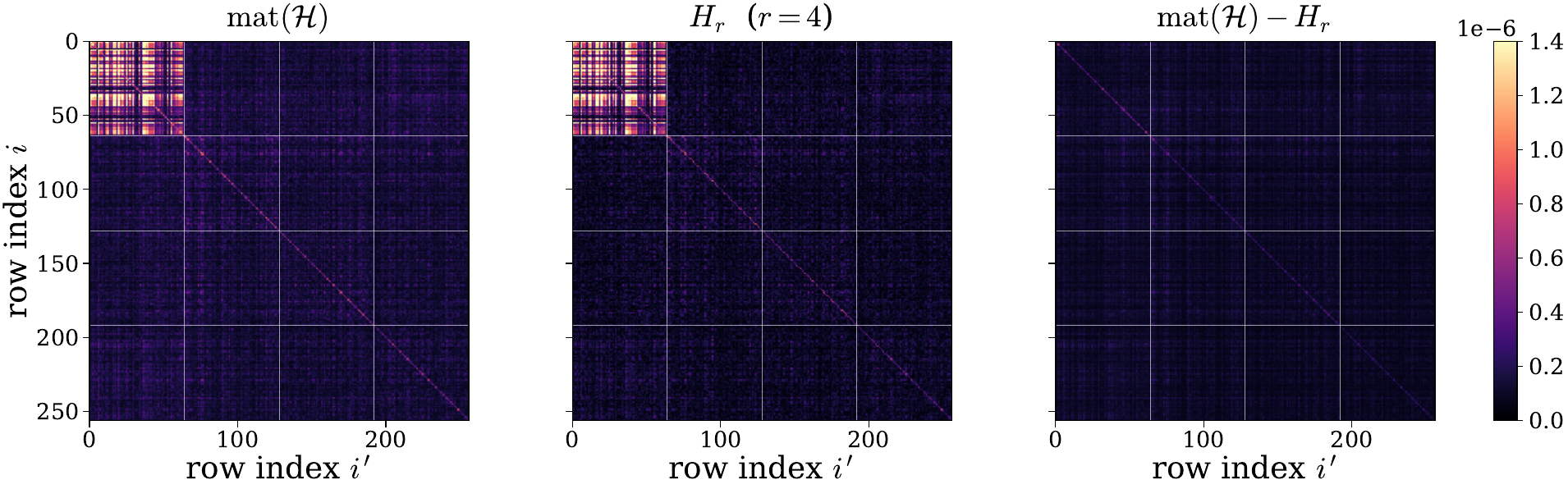}
}
\caption{Empirical support for Assumption~\ref{ass:block-curvature}.
Panel~(a) reports the fraction of Frobenius energy explained by low-rank Kronecker approximations to the Hessians of the four attention matrices.
Panel~(b) visualizes the $W_V$ Hessian, its rank-$4$ Kronecker approximation, and the residual error.
The results show that the Hessians of attention matrices can be well approximated by low-rank Kronecker products.
}
\label{fig:kronecker-assumption}

\end{figure}

\begin{assumption}[Simultaneous Diagonalization]
\label{ass:simul_diag}
Consider the rank-$r$ Kronecker decomposition $\{(A_k,B_k)\}_{k=1}^{r}$ in Assumption~\ref{ass:block-curvature}. 
Assume that $\{A_k\}_{k=1}^{r}$ and $\{B_k\}_{k=1}^{r}$ are simultaneously orthogonally diagonalized by orthogonal matrices $U \in \mathbb{R}^{d_1 \times d_1}$ and $V \in \mathbb{R}^{d_2 \times d_2}$, respectively. 
That is, for any $k=1,\ldots,r$, we have
$A_{k}
=U\Diag(a_{k}^{(1)},\ldots,a_{k}^{(d_1)}) U^\top$, and $
B_{k}=V\Diag(b_{k}^{(1)},\ldots,b_{k}^{(d_2)}) V^\top.$
\end{assumption}
This assumption states that the Kronecker factors $\{A_k\}_{k=1}^{r}$ and $\{B_k\}_{k=1}^{r}$ of the Hessian $\calH$ approximately share common orthogonal eigenbases. 
To empirically verify this assumption, we compute the simultaneous-diagonalization score
$\eta_{\mathrm{sd}}(\{X_k\}_{k=1}^r)=\max_{Q^\top Q=I}\sum_{k=1}^r \|\Diag(Q^\top X_k Q)\|_F^2/\sum_{k=1}^r \|X_k\|_F^2$
for both $\{A_k\}_{k=1}^{r}$ and $\{B_k\}_{k=1}^{r}$. The maximization over orthogonal $Q$ is approximated using the Joint Approximate Diagonalization of Eigenmatrices (JADE) Jacobi-sweep algorithm~\citep{cardoso1993blind}; see Appendix~\ref{app:simul_diag} for details.
Larger values of $\eta_{\mathrm{sd}}$ (up to a maximum of $1$) indicate that a common orthogonal basis captures more of the total energy of these matrices through their diagonal components. 
Figure~\ref{fig:joint_diag} reports the average $\eta_{\mathrm{sd}}$ scores as a bar chart: $\{A_k\}$ achieves $0.892$ and $\{B_k\}$ achieves $0.845$, both close to the maximum of $1.0$. These high scores indicate that a single shared orthogonal basis captures the vast majority of each factor family's energy, supporting Assumption~\ref{ass:simul_diag}.

Given the shared eigenbases from Assumption~\ref{ass:simul_diag}, we define paired curvatures that characterize the Hessian along each joint eigenmode. Recall that $\{A_k\}$ and $\{B_k\}$ are the Kronecker factors in the Hessian decomposition (Assumption~\ref{ass:block-curvature}), and $a_k^{(i)}$, $b_k^{(i)}$ are their respective eigenvalues under the shared bases $U$ and $V$ (Assumption~\ref{ass:simul_diag}). Let $d'=\min\{d_1,d_2\}$. For $i\in[d']$, let $w_{i}$ denote the $i$-th largest value of $\sum_{k=1}^r a_{k}^{(i^{\prime})}b_{k}^{(i^{\prime})}$ for $i^{\prime}\in[d']$. We then assume heterogeneity among the positive $w_i$.
\begin{assumption}[Curvature Heterogeneity]\label{ass:main-ordering}
Assume that exactly $q$ values in $w_i$ for $i\in[d^\prime]$ are positive and that these positive curvatures have a two-level structure: $w_i=w_{\rm H}$ for $i\in[m]$ and $w_i=w_{\rm L}$ for $i\in\{m+1,\ldots,q\}$ with $w_{\rm H}>w_{\rm L}$ and $\alpha=m/q<1/2$.
\end{assumption}
This assumption states that the positive paired curvatures of the Hessian are heterogeneous. 
We adopt the two-level structure only to simplify the calculation; the same intuition extends to more general heterogeneous spectra. 
Figure~\ref{fig:hetero} plots the positive paired curvatures $w_i$ (averaged over attention parameters) on a log scale. The bar chart reveals a strongly long-tailed distribution: the first few curvatures are on the order of $10^{-2}$, while the tail drops to $10^{-8}$, spanning over six orders of magnitude with $w_1/w_{88}\approx 2.59\times 10^6$.
Among the top-$128$ values of $|w_i|$, 88 are positive.
In addition, the condition $\alpha<1/2$ requires the proportion of high-curvature modes to be relatively small, which is consistent with the long-tailed pattern in the figure: only the first few indices carry curvatures orders of magnitude above the rest. These observations support Assumption~\ref{ass:main-ordering}.

\begin{figure}[t]
\centering
\subfigure[Simultaneous-diagonalization.\label{fig:joint_diag}]{
\includegraphics[width=0.3\textwidth]{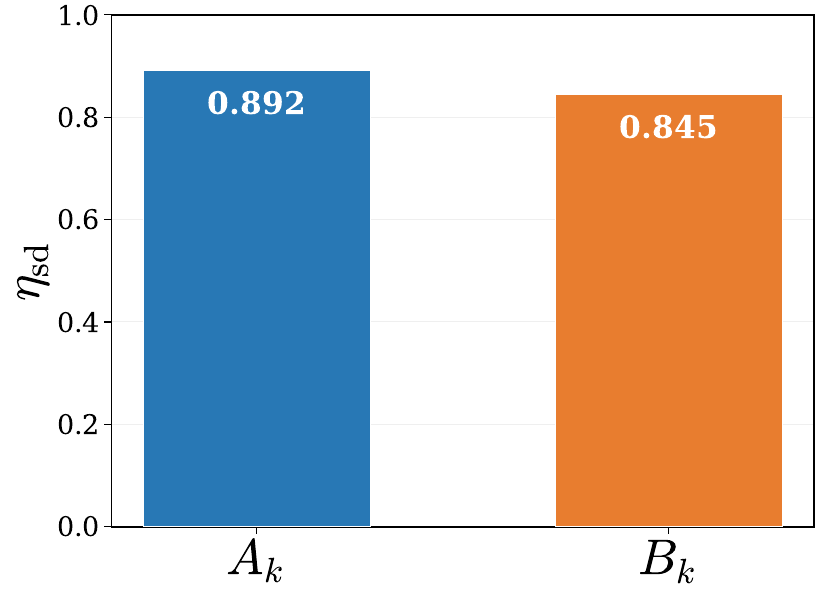}
}%
\hspace{0.003\textwidth}%
\subfigure[Positive heterogeneous curvatures.\label{fig:hetero}]{
\includegraphics[width=0.3\textwidth]{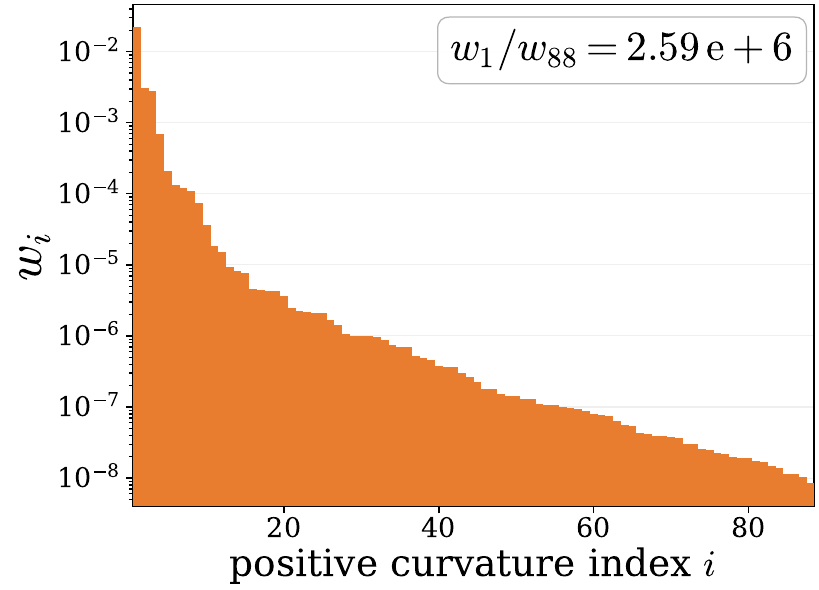}
}
\subfigure[Gradient energy ratio $\zeta(i)$.\label{fig:main-ordering-alignment}]{
\includegraphics[width=0.3\textwidth]{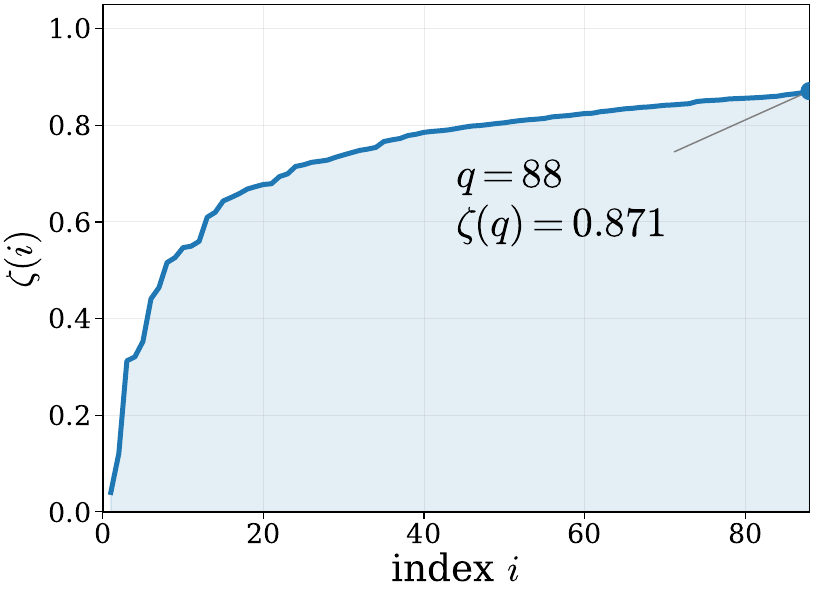}
}
\caption{Empirical support for Assumption~\ref{ass:simul_diag}-- \ref{ass:main-alignment}.
Panel (a) shows the average values of the simultaneous-diagonalization score $\eta_{\mathrm{sd}}$ for $\{A_k\}_{k=1}^r$ and $\{B_k\}_{k=1}^r$. Panel (b) shows the value of positive curvatures. Panel (c) shows the cumulative gradient energy ratio $\zeta(i)$. These results support the approximate simultaneous diagonalization of the matrices in the Hessian decomposition, as well as the alignment of gradients with the eigenvectors associated with the top eigenvalues of the Hessian spectrum.}

\label{fig:main-ordering-support}
\end{figure}

We now define the rank-one basis matrices that diagonalize the Hessian in the positive-curvature subspace. Let $u_i\in\bbR^{d_1}$ and $v_i\in\bbR^{d_2}$ denote the $i$-th columns of the shared eigenbases $U$ and $V$ from Assumption~\ref{ass:simul_diag}, respectively.
Let $\pi(i)$ denote the index of the $i$-th largest paired curvature among $\{w_{i'}=\sum_{k=1}^r a_k^{(i')}b_k^{(i')}:i'\in[d']\}$.
We define the rank-one matrix $M_i=u_{\pi(i)}v_{\pi(i)}^\top\in\bbR^{d_1\times d_2}$ for $i\in[q]$, which represents the $i$-th curvature eigenmode of the Hessian.
Since $U$ and $V$ are orthogonal, the matrices $\{M_i\}_{i=1}^{q}$ are orthonormal under the Frobenius inner product, and the Hessian acts diagonally on them: $\mathcal{H}[M_i]\approx w_i M_i$.
\begin{assumption}[Gradient Alignment]
\label{ass:main-alignment}
The gradient $G$ is in the subspace spanned by $\{M_i\}_{i=1}^{q}$, i.e.,  $G=\sum_{i=1}^q \sigma_{i} M_{i}$, where $\sigma_{i}=\langle G,M_{i}\rangle.$
In addition, the coefficients $\sigma_{i}$ have the same
two-group structure as Assumption~\ref{ass:main-ordering}: $\sigma_{i}=\sigma_{\rm H}, i\in[m]$, and $\sigma_{i}=\sigma_{\rm L}$, $i\in\{m+1,\ldots,q\}$ with $\sigma_{\rm H}>\sigma_{\rm L}$.
\end{assumption}
This assumption states that the gradient largely lies in the top-curvature subspace of the Hessian. 
Such gradient--Hessian alignment has been widely studied in deep learning~\citep{gur2018gradient,fort2019emergent}. 
To verify this assumption, we define $\calM_{i}$ as the subspace spanned by $\{M_{j}\}_{j=1}^{i}$ with $i\in[q]$ and let $\Pi_{\calM_{i}}$ denote the projection onto this subspace. 
Figure~\ref{fig:main-ordering-alignment} plots the cumulative gradient energy ratio $\zeta(i)=\|\Pi_{\mathcal M_{i}}G\|_F^2/\|G\|_F^2$ as a function of the number of included curvature directions. The curve rises steeply: by index $i\approx 30$ (roughly one-third of the $q=88$ positive-curvature directions), the cumulative energy $\zeta$ already exceeds $0.8$, and it reaches $\zeta(q)=0.871$ at the full positive-curvature subspace. This confirms that the gradient is strongly aligned with the top-curvature directions. 

Together, Assumptions~\ref{ass:block-curvature}--\ref{ass:main-alignment} reduce the local quadratic model to a low-dimensional approximation on the positive-curvature subspace, where $\mathcal H[M_i]\approx w_i M_i$ and $G\approx\sum_{i=1}^q \sigma_i M_i$. 
Here, the paired rank-one modes $M_i$ have positive and heterogeneous curvatures $w_i$, and higher-curvature modes tend to carry larger gradient energy.

We next analyze Muon's advantage in optimizing this quadratic problem.
While Section~\ref{sec:main-results} compares Muon with Adam empirically, Adam's coordinate-wise normalization does not admit a closed-form analysis within the active-mode framework above. However, on these stylized quadratic problems, Adam behaves similarly to GD and differs substantially from Muon.
Figure~\ref{fig:synth-muon-gd-signgd} confirms this: the average \ac{nds} and loss decrease for Adam and GD both behave much closer to each other than to Muon.
Therefore, to simplify the analysis, we focus theoretically on GD and Muon. Specifically, they update parameter $Y$ as follows.


\begin{figure}[t]
\centering
\subfigure[\ac{nds} ratio\label{fig:ratio-a}]{
\begin{minipage}[b]{0.32\textwidth}
\centering
\includegraphics[width=\linewidth]{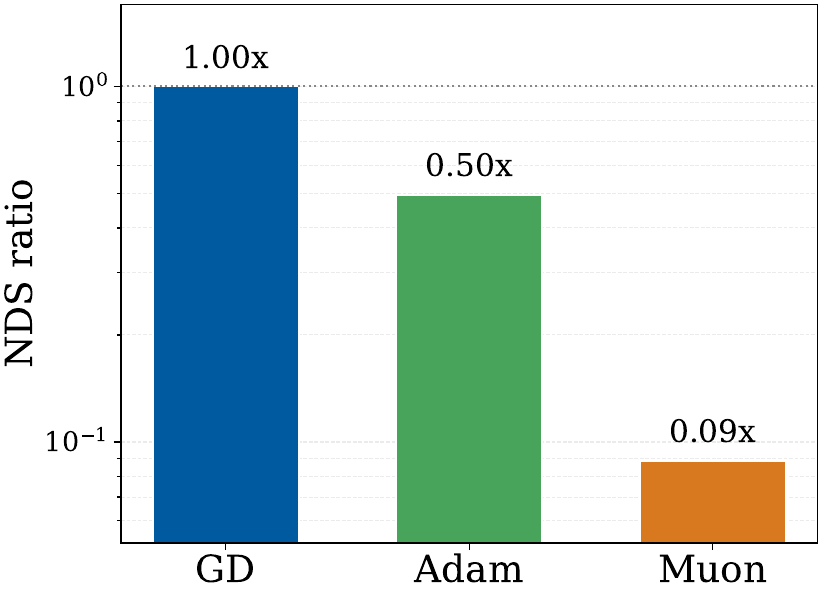}
\end{minipage}
}
\hspace{1em}
\subfigure[Loss decrease ratio\label{fig:ratio-b}]{
\begin{minipage}[b]{0.32\textwidth}
\centering
\includegraphics[width=\linewidth]{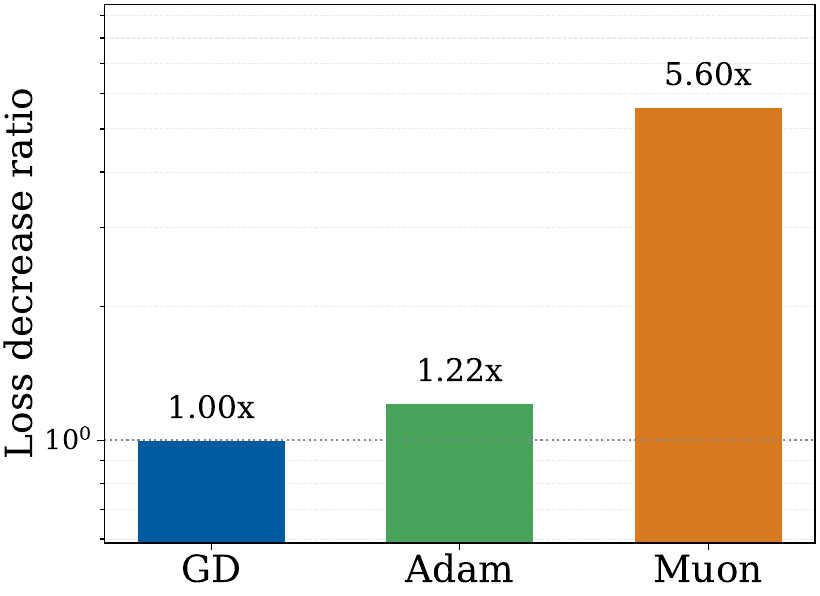}
\end{minipage}
}
\caption{Panel~(a) reports the \ac{nds} ratio, and Panel~(b) reports the loss-decrease ratio.
The results show that GD and Adam exhibit similar behavior in both \ac{nds} and loss decrease on quadratic problems satisfying Assumptions~\ref{ass:block-curvature}--\ref{ass:main-alignment}.}

\label{fig:synth-muon-gd-signgd}
\end{figure}

\begin{itemize}[leftmargin=1em]
\item  $\gd$ updates the parameter with the gradient, i.e., $Y_{t+1}^{\gd} = Y_{t}^{\gd}-\eta_t^{\gd}\nabla \calQ(Y_{t}^{\gd})=Y_{t}^{\gd}+\eta_t^{\gd}Z_t^{\gd}$.
\item With momentum set to $0$, Muon updates the parameters using the spectrally normalized gradient:
$Y_{t+1}^{\muon}=Y_{t}^{\muon}-\eta_t^{\muon}\spec(\nabla \calQ(Y_{t}^{\muon}))=Y_{t}^{\muon}+\eta_t^{\muon}Z_t^{\muon}$,
where $\spec(G)=UV^\top$ for $G=U\Sigma V^\top$, which normalizes all nonzero singular values $\Sigma$ of $G$ to one.
\end{itemize}

\subsection{Theoretical Results}\label{sec:theory}
Without loss of generality, both optimizers are initialized at $0$ and use exact line-search step sizes for a fair comparison. Specifically, for each $\opt\in\{\muon,\gd\}$, $\eta_t^{\opt}$ is set  as $\arg\max_{\eta\ge0}\{
\mathcal Q(Y_t^{\opt})-\mathcal Q(Y_t^{\opt}+\eta Z_{t}^{\opt})\}.$ Following  Eqn.~\eqref{eq:taylor_sharpness}, we define the step-wise
\ac{nds} by $\Sc_F(Z_t^{\opt})=\langle Z_t^{\opt},\mathcal H[Z_t^{\opt}]\rangle/\|Z_t^{\opt}\|_F^2$ and the finite-horizon averaged \ac{nds} by $\bar\Sc_T^{\opt}=T^{-1}\sum_{t=0}^{T-1}\Sc_F(Z_t^{\opt})$.

\begin{theorem}\label{thm:sharpness}
Let $\alpha=m/q$ denote the relative size of the high-curvature group, and let $\rho=w_{\rm H}/w_{\rm L}>1$ denote the curvature ratio. 
Under Assumptions~\ref{ass:block-curvature}--\ref{ass:main-alignment}, the following result holds.
    \begin{itemize}[leftmargin=1em]
        \item \textbf{Smaller \ac{nds} of Muon.} 
        For every finite horizon \(T\ge1\), Muon has a smaller finite-horizon averaged \ac{nds} than that of GD, i.e., $
        \bar \Sc_{T}^{\muon}<\bar \Sc_{T}^{\gd}.$
        \item \textbf{Larger Loss Decrease of Muon.} If $\rho+1>1/\alpha>1+\sigma_{\rm H}/\sigma_{\rm L}$ , for every finite horizon $T\ge 1$, Muon achieves a lower loss than GD, i.e.,  $\calQ(Y_T^{\muon})< \calQ(Y_T^{\gd})$.
    \end{itemize}
\end{theorem}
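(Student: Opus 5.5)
Throughout, $\rho=w_{\rm H}/w_{\rm L}$ and $\alpha=m/q$ as in Theorem~\ref{thm:sharpness}.

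\textbf{Reduction to two scalars.} Under Assumptions~\ref{ass:block-curvature}--\ref{ass:main-alignment}, work in the orthonormal basis $\{M_i\}_{i=1}^q$. Write $Y=\sum_i y_iM_i$, so $-\nabla\calQ(Y)=\sum_i r_iM_i$ with residuals $r_i=\sigma_i-w_iy_i$.

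Both optimizers start at $Y_0=0$ and move only along combinations of the $M_i$. Their updates are invariant under permutations inside each curvature group. Hence, by induction, $r_i$ takes one value $r_{\rm H}$ on $[m]$ and one value $r_{\rm L}$ on $\{m+1,\dots,q\}$.

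\textbf{Muon's update.} The $M_i=u_{\pi(i)}v_{\pi(i)}^\top$ have pairwise orthogonal left and right factors. So $\sum_i r_iM_i$ is already an SVD up to signs, and
$\spec\big(\sum_i r_iM_i\big)=\sum_{i:r_i\neq0}\mathrm{sgn}(r_i)M_i$.

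\textbf{Loss in residual form.} Completing the square gives
$\calQ(Y)=\calL(W_0)-\tfrac12\sum_i\sigma_i^2/w_i+\tfrac12\sum_i r_i^2/w_i$.
Thus comparing losses reduces to comparing $E=\alpha r_{\rm H}^2/w_{\rm H}+(1-\alpha)r_{\rm L}^2/w_{\rm L}$, after dividing by $q$.

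\textbf{NDS of each method.} As long as both residuals are nonzero, every Muon direction has $\pm1$ coefficients. Hence its NDS is the constant
$\bar w:=\alpha w_{\rm H}+(1-\alpha)w_{\rm L}$.

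For GD, the direction is the residual itself. With $x_t=\alpha (r^t_{\rm H})^2/((1-\alpha)(r^t_{\rm L})^2)$,
$$\Sc_F(Z_t^{\gd})=\frac{x_tw_{\rm H}+w_{\rm L}}{x_t+1}.$$

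\textbf{GD zigzag.} GD with exact line search on this effectively two-dimensional quadratic makes successive gradients orthogonal. In weighted form, $\alpha r^t_{\rm H}r^{t+1}_{\rm H}+(1-\alpha)r^t_{\rm L}r^{t+1}_{\rm L}=0$. Therefore $k_t=r^t_{\rm H}/r^t_{\rm L}$ satisfies $k_{t+1}=-(1-\alpha)/(\alpha k_t)$, which gives $x_{t+1}=1/x_t$ and the classical period-two zigzag.

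\textbf{First bullet.} Each consecutive pair of GD NDS values sums to
$\frac{xw_{\rm H}+w_{\rm L}}{x+1}+\frac{w_{\rm H}+xw_{\rm L}}{1+x}=w_{\rm H}+w_{\rm L}$.
Since $\alpha<1/2$, this exceeds $2\bar w$.

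If $T$ is odd, one unpaired term remains. Pairing steps $(1,2),(3,4),\dots$ leaves step $0$ unpaired. At step $0$ we have $x_0=\alpha\sigma_{\rm H}^2/((1-\alpha)\sigma_{\rm L}^2)>\alpha/(1-\alpha)$, because $\sigma_{\rm H}>\sigma_{\rm L}$. Since $f(x)=(xw_{\rm H}+w_{\rm L})/(x+1)$ is increasing and $f(\alpha/(1-\alpha))=\bar w$, we get $\Sc_F(Z_0^{\gd})>\bar w$. Summing yields $\bar\Sc_T^{\gd}>\bar w$.

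\textbf{Muon residuals (first obstacle).} I still need $\bar\Sc_T^{\muon}\le\bar w$. Muon's direction stops having NDS $\bar w$ only if some residual vanishes. If $r_{\rm H}=0$, the NDS drops to $w_{\rm L}<\bar w$, which only helps.

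I would handle $r_{\rm L}=0$ by writing the Muon recursion explicitly. Line search gives
$$\eta_t=\frac{\alpha|r_{\rm H}|+(1-\alpha)|r_{\rm L}|}{\alpha w_{\rm H}+(1-\alpha)w_{\rm L}},\qquad r_{\cdot}\leftarrow r_{\cdot}-\eta_t w_{\cdot}\,\mathrm{sgn}(r_{\cdot}).$$
This is a piecewise-linear two-dimensional map. I would track the sign pattern: after step $0$ one expects $r_{\rm H}<0<r_{\rm L}$ and then alternation. I would then show $r_{\rm L}$ can only hit zero on a measure-zero parameter set, or that Muon's NDS then equals $w_{\rm H}$ only after $r_{\rm H}$ has also collapsed. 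I expect to need a careful case check here.

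\textbf{Second bullet.} GD decreases $E$ by a fixed factor per two steps. The rate is determined by $x_0$ and $\rho$, via the Kantorovich-type formula $E_{t+1}=E_t\big(1-\frac{(\alpha r_{\rm H}^2+(1-\alpha)r_{\rm L}^2)^2}{(\alpha w_{\rm H}r_{\rm H}^2+(1-\alpha)w_{\rm L}r_{\rm L}^2)E_t}\big)$.

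For Muon, I would compute the closed-form iterates of the sign-based map above. The condition $1/\alpha>1+\sigma_{\rm H}/\sigma_{\rm L}$, i.e.\ $(1-\alpha)\sigma_{\rm L}>\alpha\sigma_{\rm H}$, should guarantee that the low-curvature group dominates $\sum_i|r_i|$. The condition $\rho>1/\alpha-1$ should ensure that Muon's step overshoots the high-curvature group only mildly while making large progress on the low group.

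I would then prove $E_t^{\muon}<E_t^{\gd}$ by induction, comparing the one-step contraction factors at each $t$. The hardest part is this comparison of per-step decreases: the Muon decrease is $(\alpha|r_{\rm H}|+(1-\alpha)|r_{\rm L}|)^2/(2\bar w)$, against the GD Rayleigh-type decrease. I would first try to show Muon's contraction factor for $E$ is uniformly smaller than GD's two-step rate under the stated inequalities, and fall back on direct closed-form iterates if needed.
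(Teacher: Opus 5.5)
Your first bullet is the paper's argument. The relation $x_{t+1}=1/x_t$ is exactly $P^{\gd}_{t+1}=1-P^{\gd}_t$. Pairing consecutive GD steps (each pair sums to $w_{\rm H}+w_{\rm L}>2\bar w$, with step $0$ left over and $x_0>\alpha/(1-\alpha)$) just rephrases the paper's computation of $\bar p_T$.

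Your ``first obstacle'' disappears once Muon is written in scaled residuals $c_{{\rm H},t}=|r^t_{\rm H}|/w_{\rm H}$, $c_{{\rm L},t}=|r^t_{\rm L}|/w_{\rm L}$. With $a=mw_{\rm H}$ and $b=(q-m)w_{\rm L}$, line search gives $\eta_t=(ac_{{\rm H},t}+bc_{{\rm L},t})/(a+b)$ and $c_{\cdot,t+1}=|c_{\cdot,t}-\eta_t|$. Hence $c_{{\rm H},1}=\tfrac{b}{a+b}d_0$ and $c_{{\rm L},1}=\tfrac{a}{a+b}d_0$, where $d_0=|\sigma_{\rm H}/w_{\rm H}-\sigma_{\rm L}/w_{\rm L}|$. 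From then on both are multiplied by $\Gamma=|a-b|/(a+b)$ at every step. The ratio is locked at $b/a$, so $r_{\rm L}$ never vanishes while $r_{\rm H}\neq0$. The two vanish together only when Muon lands exactly on the minimizer, which the paper excludes by a stopping convention. A measure-zero argument would not have proved a statement claimed for all parameters anyway.

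The genuine gap is the second bullet. You give a plan, but the comparison that is the actual content of the proof is left open, and three things are missing.

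\textbf{(i) Muon's rate is not uniform.} The lock yields the exact closed form $\Phi^{\muon}_T=\frac{ab}{2(a+b)}\Gamma^{2(T-1)}d_0^2$ for $T\ge1$, where $\Phi_t=\mathcal Q(Y_t)-\mathcal Q(Y^\star)$. Muon's first-step factor is $\kappa(s)=\frac{ab(s-1)^2}{(a+b)(as^2+b)}$ with $s=\tau/\rho$ and $\tau=\sigma_{\rm H}/\sigma_{\rm L}$. Since $\kappa$ is decreasing on $(0,1)$ and the hypotheses give $s<b/a<1$, we get $\kappa(s)>\kappa(b/a)=\Gamma^2$. So step $0$ needs its own inequality, $\Phi_1^{\muon}<\Phi_1^{\gd}$.

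\textbf{(ii) GD contracts by the same factor at every step, not only every two steps.} At high-curvature share $x$, $\Phi_{t+1}/\Phi_t=\frac{(w_{\rm H}-w_{\rm L})^2x(1-x)}{(w_{\rm L}+(w_{\rm H}-w_{\rm L})x)(w_{\rm H}-(w_{\rm H}-w_{\rm L})x)}$. This is invariant under $x\mapsto1-x$, so it equals $\sqrt R$ with $R=C(p)C(1-p)$. Knowing only the two-step factor, you cannot close a per-step induction or handle odd $T$.

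\textbf{(iii) The benchmark for $t\ge1$ must be $\Gamma^2<\sqrt R$.} If by ``GD's two-step rate'' you mean $R$, the comparison fails in the admissible regime. For example, $\alpha=0.3$, $\rho=100$, $\tau\downarrow1$ gives $R\approx0.9095<\Gamma^2\approx0.9109$.

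Your readings of the hypotheses also miss how they actually enter. Write $\beta=1-\alpha$.
\begin{itemize}
\item The condition $\rho+1>1/\alpha$ means $a>b$, so $\Gamma=(\alpha\rho-\beta)/(\alpha\rho+\beta)$.
\item Combined with $1/\alpha>1+\tau$, it gives $\tau<\rho$. This makes $\Phi_1^{\gd}-\Phi_1^{\muon}$ a positive multiple of $(\tau-1)\big[\alpha\tau^2(2\rho-\tau-1)+\beta(\rho(\tau+1)-2\tau)\big]>0$.
\item The condition $1/\alpha>1+\tau$ also gives $p=\alpha\tau^2/(\alpha\tau^2+\beta)\in(\alpha,1-\alpha)$, so $p(1-p)>\alpha\beta$.
\item Writing $\sqrt R=F(p(1-p))$ with $F(v)=(\rho-1)^2v/(\rho+(\rho-1)^2v)$ increasing gives $\sqrt R>F(\alpha\beta)$.
\item Finally, $F(\alpha\beta)-\Gamma^2$ is a positive multiple of $(1-2\alpha)(1+\alpha(\rho-1))\cdot(2\alpha\beta(\rho-1)+\alpha\rho-\beta)>0$.
\end{itemize}
Only then does $\Phi_T^{\muon}=\Phi_1^{\muon}\Gamma^{2(T-1)}<\Phi_1^{\gd}R^{(T-1)/2}=\Phi_T^{\gd}$ follow. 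None of these computations appears in your proposal, so the second bullet remains unproven.
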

Theorem~\ref{thm:sharpness} gives two results.
First, it shows that Muon has a smaller average \ac{nds} than that of GD for any finite horizon, providing theoretical support for Observation~2.
Second, under sufficient curvature \kw{heterogeneity}---specifically, when the curvature ratio satisfies $\rho>1/\alpha-1$ and the high-curvature group is small enough that $\alpha<\sigma_{\rm L}/(\sigma_{\rm L}+\sigma_{\rm H})$---Muon achieves a larger cumulative loss decrease than GD, supporting Observation~1.
The condition $\rho>1/\alpha-1$ requires the largest curvature values to be much larger than the others, consistent with Hessian eigenvalue outliers observed in prior work~\citep{sagun2017empirical,fort2019emergent}.
The condition $\alpha<1/(1+\sigma_{\rm H}/\sigma_{\rm L})$ requires the high-curvature group to occupy a sufficiently small fraction of the positive-curvature subspace, consistent with the small proportion of outlier curvature directions observed in practice~\citep{sagun2017empirical}.

The key mechanism behind both results is that Muon's spectral normalization \kw{equalizes the update amplitude} across all orthogonal curvature eigenmodes, distributing energy evenly between high- and low-curvature directions. In contrast, GD's update is proportional to the gradient, which concentrates more energy on high-curvature directions (by Assumption~\ref{ass:main-alignment}). This concentration causes GD to incur larger directional sharpness and, when curvature heterogeneity is strong enough, a larger curvature penalty that offsets its first-order gain.

We also highlight that our curvature perspective helps explain why some Muon variants improve training efficiency. The recent work of \citet{zhu2026accelerating} enhances Muon by increasing its update component toward flat Hessian directions, leading to further efficiency gains. This finding is supported by our analysis: shifting the update toward flatter directions should reduce \ac{nds}, thereby lowering the curvature penalty and potentially increasing the one-step loss decrease.

\subsection{Proof Sketch}
\label{subsec:proof_sketch}

We outline the main idea behind Theorem~\ref{thm:sharpness}; the full algebra is deferred to Appendix~\ref{app:two-group-proof}. 
Under Assumptions~\ref{ass:block-curvature}--\ref{ass:main-alignment}, the local quadratic model diagonalizes on the span of the orthonormal rank-one modes $\{M_i\}_{i=1}^q$. 
In this basis, we have that $
    G=\sum_{i=1}^q \sigma_i M_i$ with $
    \mathcal H[M_i]=w_iM_i,$
and hence, for any $Y=\sum_i y_iM_i$, the quadratic model decomposes into scalar components:
\begin{equation*}
    \mathcal Q(Y)
    =
    \mathcal L(W_0)-\sum_{i=1}^q\sigma_i y_i
    +\frac12\sum_{i=1}^q w_i y_i^2 .
\end{equation*}
Here, $w_i$ is the curvature of mode $M_i$, $\sigma_i$ is the gradient coefficient on that mode, and $y_i$ is the coordinate of the update $Y$. The two-group structure means that the first $m$ modes have curvature $w_{\rm H}$ and gradient coefficient $\sigma_{\rm H}$, while the remaining $q-m$ modes have curvature $w_{\rm L}$ and coefficient $\sigma_{\rm L}$. We write $\alpha=m/q$ for the fraction of high-curvature modes, with $w_{\rm H}>w_{\rm L}$ and $\sigma_{\rm H}>\sigma_{\rm L}$.

Consequently, the dynamics are fully characterized by the residual gradient coefficients $r_{i,t}^{\opt}=\sigma_i-w_i y_{i,t}^{\opt}$, where $\opt\in\{\muon,\gd\}$ and $Y_t^{\opt}=\sum_i y_{i,t}^{\opt}M_i$. In this coordinate system, the distinction between Muon and GD is transparent. Since the matrices $M_i$ are paired singular directions, spectral normalization maps the residual $\sum_i r_{i,t}^{\muon}M_i$ to
\begin{equation*}
    Z_t^{\muon}=\sum_{i=1}^q \sgn(r_{i,t}^{\muon})M_i,
\end{equation*}
so Muon assigns the same amplitude to every active mode. GD instead uses
\begin{equation*}
    Z_t^{\gd}=\sum_{i=1}^q r_{i,t}^{\gd}M_i,
\end{equation*}
so its update energy is proportional to the current residual energy. This is the central mechanism: Muon removes magnitude imbalance across modes, while GD inherits it.

\paragraph{\ac{nds} comparison.} Muon's equal-amplitude update induces a fixed curvature average,
\begin{equation*}
\Sc_F(Z_t^{\muon})
=
\alpha w_{\rm H}+(1-\alpha)w_{\rm L},
\end{equation*}
where $\alpha$ denotes the fraction of update directions lying in the high-curvature subspace under Assumption~\ref{ass:main-ordering}. GD's \ac{nds} is instead a residual-energy-weighted average,
\begin{equation*}
    \Sc_F(Z_t^{\gd})
    =
    P_t^{\gd}w_{\rm H}+(1-P_t^{\gd})w_{\rm L},\qquad
    P_t^{\gd}
    =
    \frac{\sum_{i=1}^m(r_{i,t}^{\gd})^2}
    {\sum_{i=1}^q(r_{i,t}^{\gd})^2}.
\end{equation*}
Here $P_t^{\gd}$ is the fraction of GD's residual energy lying in the high-curvature group at step $t$. Because $\sigma_{\rm H}>\sigma_{\rm L}$, GD starts with more residual energy in the high-curvature group than the group-size fraction alone would suggest:
\begin{equation*}
    P_0^{\gd}=p
    =
    \frac{m\sigma_{\rm H}^2}
    {m\sigma_{\rm H}^2+(q-m)\sigma_{\rm L}^2}
    >\alpha .
\end{equation*}
The update of GD then makes it overshoot the high-curvature group and undershoot the low-curvature group, which yields the recursion $P_{t+1}^{\gd}=1-P_t^{\gd}$ (Proposition~\ref{prop:gd_sharp}). Hence the time-averaged high-curvature weight of GD is always larger than $\alpha$, whereas Muon always assigns exactly an $\alpha$ fraction of its update energy to the high-curvature group. Since $w_{\rm H}>w_{\rm L}$, this gives $\bar\Sc_T^{\muon}<\bar\Sc_T^{\gd}$ for every finite horizon $T$, where $\bar\Sc_T^{\opt}=T^{-1}\sum_{t=0}^{T-1}\Sc_F(Z_t^{\opt})$.

\paragraph{Loss comparison.} The same residual viewpoint gives the terminal gap
\begin{equation*}
    \Phi_t^{\opt}
    =
    \mathcal Q(Y_t^{\opt})-\mathcal Q(Y^\star)
    =
    \frac12\sum_{i=1}^q\frac{(r_{i,t}^{\opt})^2}{w_i}.
\end{equation*}
Here $Y^\star$ is the minimizer of $\mathcal Q$ in the active mode subspace, and $\Phi_t^{\opt}$ is the suboptimality of optimizer $\opt\in\{\gd,\muon\}$ after $t$ steps. By the two-group symmetry, all high-curvature modes share a common residual $r_{\rm H,t}^{\muon}$ and all low-curvature modes share a common residual $r_{\rm L,t}^{\muon}$. Muon's equal-amplitude steps contract the two scaled residuals $|r_{\rm H,t}^{\muon}|/w_{\rm H}$ and $|r_{\rm L,t}^{\muon}|/w_{\rm L}$ toward one another at rate
\begin{equation*}
    \Gamma
    =
    \frac{|mw_{\rm H}-(q-m)w_{\rm L}|}{mw_{\rm H}+(q-m)w_{\rm L}} .
\end{equation*}
This form follows from Muon's update rule. Muon pulls both curvature groups toward the same weighted average, and the remaining gap is the old gap multiplied by the normalized imbalance between the total curvature weights $mw_{\rm H}$ and $(q-m)w_{\rm L}$, which gives $\Gamma$. GD, by contrast, alternates its residual energy between the two curvature groups; its contraction is governed by the two-step factor $R=C(P_0^{\gd})C(1-P_0^{\gd})$, where $C(\cdot)$ is defined as
\begin{equation*}
    C(x)
    =
    \frac{(w_{\rm H}-w_{\rm L})^2x(1-x)}
    {(w_{\rm L}+(w_{\rm H}-w_{\rm L})x)^2}.
\end{equation*}
To see this expression, suppose GD's current high-curvature residual-energy share is $x$. Exact line search along the residual sets
\begin{equation*}
    \eta_t^{\gd}
    =
    \frac{1}{w_{\rm L}+(w_{\rm H}-w_{\rm L})x},
\end{equation*}
the reciprocal of the residual-weighted average curvature. The high-curvature residuals are multiplied by $1-\eta_t^{\gd}w_{\rm H}$, while the low-curvature residuals are multiplied by $1-\eta_t^{\gd}w_{\rm L}$. Squaring these two factors and averaging them with weights $x$ and $1-x$ gives exactly $C(x)$. Hence $C(x)$ is the one-step GD residual-energy contraction at energy share $x$, and $R$ is GD's two-step contraction factor over the alternating pair $P_0^{\gd}$ and $1-P_0^{\gd}$. With these analyses of Muon and GD, we can show that
\begin{equation*}
    \Phi_T^{\muon}=\Phi_1^{\muon}\Gamma^{2(T-1)},
    \qquad
    \Phi_T^{\gd}=\Phi_1^{\gd}R^{(T-1)/2}.
\end{equation*}
Under the condition $\rho+1>1/\alpha>1+\sigma_{\rm H}/\sigma_{\rm L}$, where $\rho=w_{\rm H}/w_{\rm L}$ is the curvature ratio, Muon has both a smaller first-step gap, $\Phi_1^{\muon}<\Phi_1^{\gd}$, and a faster subsequent contraction, $\Gamma^2<\sqrt R$ (Propositions~\ref{prop:onestep_gap} and~\ref{prop:muon_contract}). Combining these two inequalities yields $\mathcal Q(Y_T^{\muon})<\mathcal Q(Y_T^{\gd})$ for all $T\ge1$.

Overall, the proof formalizes a simple intuition: when the gradient is biased toward a small set of sharp modes, GD follows that bias and repeatedly spends too much update energy in high-curvature directions. Muon's spectral normalization spreads the update evenly across the active singular modes, lowering directional sharpness and making more balanced progress across sharp and flat directions.

\section{Conclusion}
\label{sec:conclusion}

Our work takes a first step toward understanding Muon's superiority over Adam from a curvature perspective. 
We show that Muon's lower \ac{nds} reduces its curvature penalty, thereby yielding a larger one-step loss decrease than Adam. 
We further show that this \ac{nds} advantage is shaped by data imbalance and layer-wise interactions in the model. 
We also provide a quadratic-model analysis that theoretically establishes Muon's advantages in \ac{nds} and one-step loss decrease. 
A limitation of our work is that we focus on causal \acp{llm}; we leave the verification of these insights on other model classes, such as diffusion models, to future work.

\newpage
\bibliographystyle{ims}
\bibliography{ref}

\newpage 

\appendix

\section{Experimental Details}
\label{app:exp-details}

\subsection{FineWeb Main-Text Experiments}
\label{app:fineweb-experiments}

We use a 124M-parameter NanoGPT model with 12 Transformer layers, 12 attention heads, and hidden dimension of 768. 
The vocabulary size is 50{,}257 under the GPT-2 tokenizer. 
Training is conducted on FineWeb-10B with sequence length 1024~\citep{penedo2024fineweb}. For Muon, we apply Muon to all matrix parameters except the token embeddings and language-model head, using momentum $\mu=0.95$ and no weight decay. 
The momentum coefficient is linearly warmed up from $0.85$ to $0.95$ over the first 300 steps, and Newton--Schulz orthogonalization uses 5 iterations. 
The embedding layer, \texttt{lm\_head}, and all scalar or one-dimensional parameters are optimized by Adam with $\beta_1=0.8$ and $\beta_2=0.95$. 
For the Adam baseline, all parameters, including the attention and MLP matrices, are optimized by Adam with $\beta_1=0.8$ and $\beta_2=0.95$. 
For both Adam and Muon, learning rates are selected by grid search from $\{1,2,5\}\times\{10^{-1},10^{-2},10^{-3},10^{-4}\}$. 
We use Modded-NanoGPT as our code base~\citep{modded_nanogpt_2024}.

Directional sharpness is computed every 500 steps through Hessian--vector products. We conduct experiments on 4 A100 with 80 GB memory.

\subsection{Zipf-PCFG Dataset Construction and Experiment details in Section~\ref{sec:imbalance-results}}
\label{app:zipf-pcfg-dataset}

\subsubsection{Dataset Construction}

The dataset used in Section~\ref{sec:imbalance-results} is a synthetic Zipf-PCFG corpus
with latent topic structure and explicit grammatical constraints.

\paragraph{Vocabulary.} The vocabulary $\calV$ contains $4{,}411$ tokens
partitioned into $C=20$ grammatical classes (the $4{,}412^{\text{nd}}$
slot is the EOS token) so that $\calV=\cup_{c=1}^C\calV_c$. 

Class sizes are fixed at
$|N_{\text{anim}}|=700$, $|N_{\text{inanim}}|=1200$,
$|V_{\text{intrans}}|=380$, $|V_{\text{trans}}|=570$,
$|V_{\text{clause}}|=280$, $|V_{\text{dative}}|=190$, $|V_{\text{modal}}|=10$,
$|V_{\text{neg}}|=5$, $|\text{Adj}_{\text{grad}}|=380$,
$|\text{Adj}_{\text{nongrad}}|=330$, $|\text{Adv}_{\text{deg}}|=20$,
$|\text{Adv}_{\text{manner}}|=280$, $|P|=25$, $|\text{Det}_{\text{def}}|=
|\text{Det}_{\text{indef}}|=|\text{Comp}|=|\text{Coord}|=|\text{Punc}|=5$,
$|\text{Num}_{\text{sg}}|=|\text{Num}_{\text{pl}}|=8$.
Each class has its own internal Zipf rank.

\paragraph{Latent topics.} We instantiate $K=30$ latent topics. For each
topic $k$ and each class $c$ we draw a per-topic affinity
$\phi_{k,c} \in \Delta^{|c|-1}$ by normalising
$\mathrm{Gamma}(\alpha_\phi,1)$ with $\alpha_\phi=0.3$, so that affinities
are sparse over class tokens. The topic prior is
$\pi \sim \mathrm{Dirichlet}(\alpha_\pi=1.0)$. Documents drift through
topics under a Markov chain with transition matrix
$T_{kk} = 0.85$, $T_{k\ell} = 0.15/(K-1)$ for $\ell\ne k$, providing
strong topic persistence within each document.


\paragraph{Token emission.}
Within token class $c$ and topic $k$, each token $j\in\calV_c$ has a base topic-specific probability $\phi_{k,c}(j)$. We rank the tokens in $\calV_c$ in decreasing order of $\phi_{k,c}(j)$ and denote the rank of token $j$ by $r(j,c,k)$. Given imbalance level $s$, tokens are sampled from the reweighted distribution
\begin{equation*}
    P_s(j\mid c,k)=\frac{r(j,c,k)^{-s}\phi_{k,c}(j)}{\sum_{j'\in\calV_c} r(j',c,k)^{-s}\phi_{k,c}(j')}, \qquad j\in\calV_c .
\end{equation*}
Thus, when $s=0$, token emission is governed only by the topic-specific base distribution $\phi_{k,c}$; increasing $s$ further concentrates probability mass on higher-ranked tokens and yields heavier-tailed class-conditional token
distributions.

\paragraph{Grammatical structure.} We sample sentences from a PCFG with start symbol $\bar S$. Following
standard linguistic notation, $NP$ denotes a noun phrase, $VP$ a verb
phrase, $CP$ a complementiser phrase (clausal complement), $CP_{\text{fronted}}$
a fronted clausal complement, $\text{RelAdv}$ a relative-adverb clause,
and $\text{Coord}$ a coordinator. The seven productions are
$\bar S \to NP\, VP$ (probability $0.55$), $NP\, VP\, CP$ ($0.12$),
$\bar S\, \text{Coord}\, \bar S$ ($0.08$), $CP_{\text{fronted}}\, NP\, VP$
($0.08$), $NP\, \text{Modal-VP}$ ($0.07$), $NP\, \text{Neg-VP}$ ($0.05$),
and $NP\, VP\, \text{RelAdv}$ ($0.05$). Each $VP$ is generated under one of
four verb subcategorisation frames (intransitive, transitive, clausal,
dative) and respects: (i)~explicit number agreement between subject and
verb (with each noun assigned a fixed singular/plural feature, $\sim
40\%$ plural); (ii)~selectional restrictions, with animate nouns
preferred as agents (preference $5\times$) and inanimate nouns preferred
as patients (preference $3\times$); (iii)~role coherence (the agent and
patient of a clause are required to differ); and (iv)~negation and
modality scope tracking. Recursion is bounded at depth $14$.

\paragraph{Document and tokenisation.} Each sample is a multi-sentence
document obtained by repeatedly sampling sentences while drifting
through topics under the HMM above. Documents are flattened to a single
token stream, mapped to the integer vocabulary, and concatenated into
binary training shards. The training and validation Zipf exponents can
be set independently. The main comparison fixes the validation distribution at $s=0$, where the
rank-based Zipf bias is removed, and token emission is governed only by the
topic-specific base distribution $\phi_{k,c}$. Then, we vary the
training exponent $s\in\{0,0.5,1\}$, isolating the effect of training imbalance on optimizer behaviour.

\subsubsection{Experiment Setup} Different from our FineWeb experiments, considering the reduced task difficulty, we use a 9M-parameter NanoGPT model with 4 Transformer layers, 4 attention heads, and model
dimension 256. The vocabulary size is $4{,}412$ under the synthetic Zipf--PCFG tokenizer. Training uses the Zipf--PCFG corpus with sequence length $1{,}024$ and gradient
accumulation $8$, giving an effective batch size of $8{,}192$ tokens per
step. Validation uses $2{,}457{,}600$ tokens with sequence length $1{,}024$.
Total training is $10{,}000$ optimization steps. We compare two optimizers. For Muon, we apply Muon to all attention matrices ($W_Q$, $W_K$, $W_V$, $W_O$) and all
MLP matrices ($W_{\text{in}}$, $W_{\text{out}}$), with learning rate
$5\times 10^{-3}$, momentum $\mu=0.95$, and Newton--Schulz orthogonalization $5$ iterations.
The embedding, \texttt{lm\_head}, and all scalar (1D) parameters are
optimized by Adam with learning rate $1\times 10^{-2}$, $\beta_1=0.8$,
$\beta_2=0.95$.
For Adam, all parameters, including the attention and
MLP matrices, are optimized by Adam with learning rate
$1\times 10^{-2}$, $\beta_1=0.8$, $\beta_2=0.95$.

\section{Additional Experimental Results to Section~\ref{sec:sharpness-comparison}}\label{app:sharpness}

This section complements the main \ac{nds} comparison results in Section~\ref{sec:sharpness-comparison} by showing the sharpness comparison over the training steps and the corresponding ratio of curvature penalty, \ac{nds} and update norm over the training steps. This result follows the same conclusion as Observation~2.

\begin{figure}[h]
\centering
\subfigure[Sharpness comparison.\label{fig:sharpness-step}]{
\includegraphics[width=0.45\textwidth]{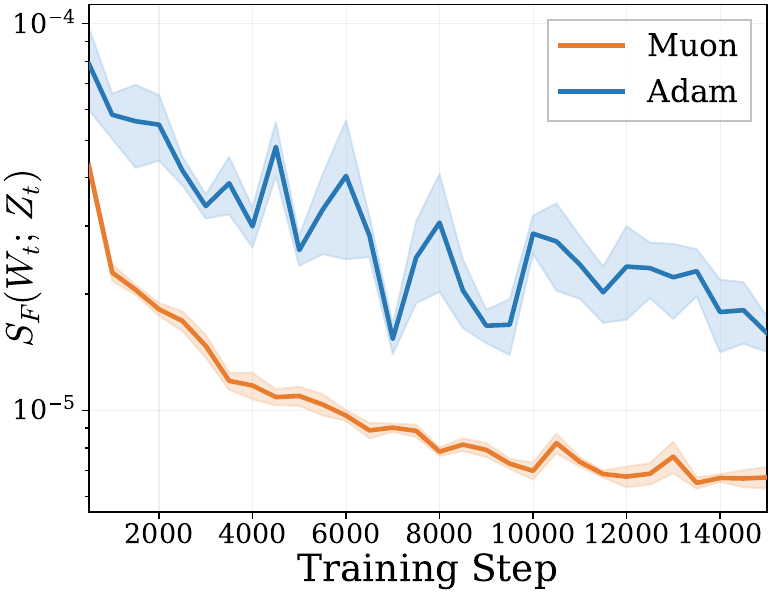}
}
\hspace{1em}
\subfigure[Adam/Muon ratios of curvature penalty, NDS, update scale\label{fig:ratio-step}]{
\includegraphics[width=0.45\textwidth]{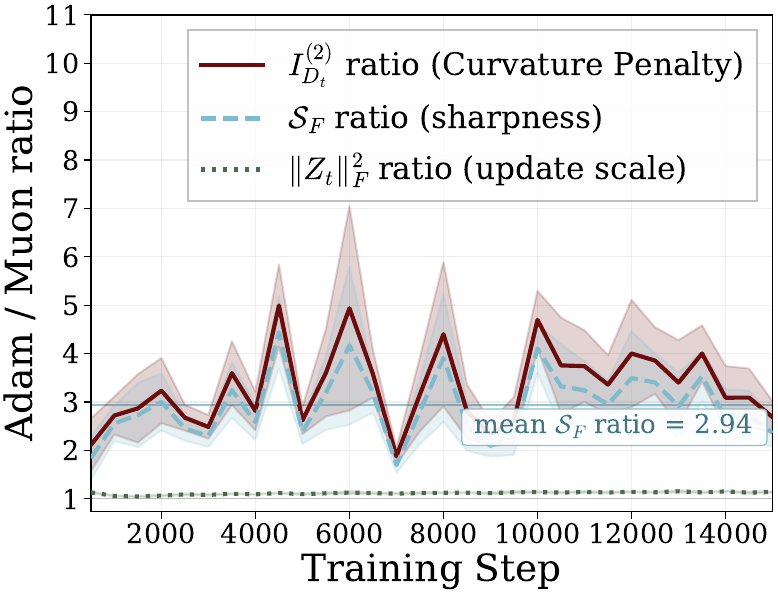}
}
\caption{\ac{nds} and the corresponding ratio comparison along the training steps of Muon and Adam. Panel (a): sharpness comparison. Panel (b): Adam/Muon ratios of the curvature penalty, the \ac{nds}, and the squared Frobenius update norm.}
\label{fig:sharpness-frob}
\end{figure}

Figure~\ref{fig:sharpness-step} shows that Muon maintains consistently smaller NDS than Adam throughout the entire training trajectory, with the gap persisting from early to late training. Figure~\ref{fig:ratio-step} reports the Adam-to-Muon ratios of the curvature penalty, NDS, and squared update norm over training steps. The update norm ratio remains close to 1, confirming that the two optimizers have comparable update norms at every step. In contrast, the NDS ratio stays well above 1, with a mean value of 2.94 over the trajectory. The curvature-penalty ratio closely tracks the NDS ratio, reinforcing Observation~2 that Muon's smaller curvature penalty is driven by lower NDS rather than smaller update norms.

We note that the mean NDS ratio at matched training steps (2.94) is larger than the ratio at matched validation loss reported in Section~\ref{sec:sharpness-comparison} (1.76). This is because Muon reaches a lower validation loss than Adam at the same training step, aligning by step effectively compares Muon at a more advanced stage of optimization against Adam at a less advanced stage, which amplifies the apparent gap. The matched-validation-loss comparison in the main text provides an assessment of the per-step curvature difference, while the step-aligned comparison here confirms that the qualitative conclusion of Observation~2 is robust to the choice of alignment.

\section{Layerwise NDS contribution}
\label{app:layer-localization}

To complement the sharpness comparison in Observation~4, Figure~\ref{fig:layer-localization} decomposes the directional sharpness into per-layer contributions
$\Sc_F^{(\ell,\opt)}=\langle Z_\ell^{\opt},\mathcal H_{\ell\ell}[Z_\ell^{\opt}]\rangle/\|Z^{\opt}\|_F^2$ for the 12 Transformer layers and optimizer $\opt\in\{\muon,Adam\}$. Panel~(a) shows that the two optimizers do not accumulate within-layer curvature uniformly across depth. Panel~(b) reports the per-layer share of the total Adam--Muon within-layer gap, $\Delta \Sc_F^{(\ell)}/\sum_{\ell=1}^{12} \Delta \Sc_F^{(\ell)}$, where $\Delta \Sc_F^{(\ell)}=\Sc_F^{(\ell,Adam)}-\Sc_F^{(\ell,\muon)}$.

\begin{figure}[h]
\centering
\includegraphics[width=0.9\textwidth]{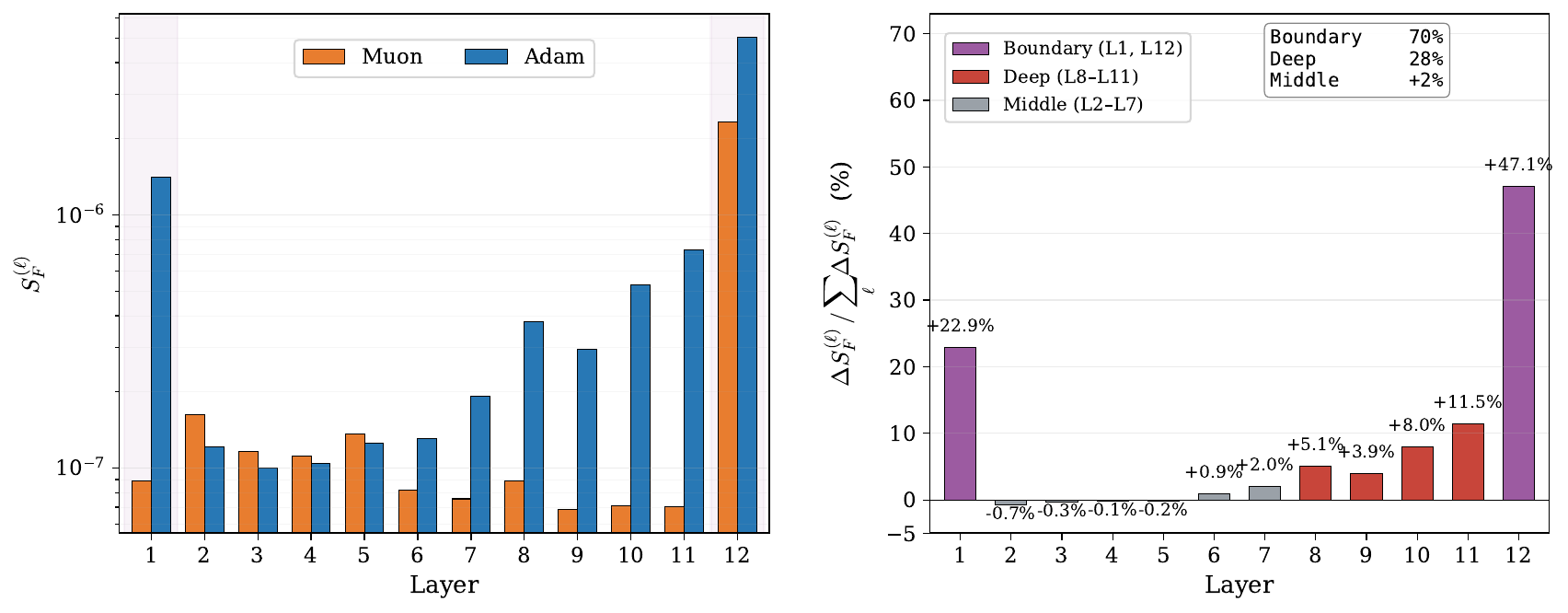}
\caption{Layerwise localization of the Adam--Muon within-layer sharpness gap across the 12 Transformer layers. Panel~(a) plots the per-layer \ac{nds} contribution $\Sc_F^{(\ell)}$ for the updates of the two optimizers. Panel~(b) reports the share ratio $\Delta \Sc_F^{(\ell)}/\sum_\ell \Delta \Sc_F^{(\ell)}$.
}
\label{fig:layer-localization}
\end{figure}

From Figure~\ref{fig:layer-localization}, we can see that the gap is strongly depth-localized. Specifically, roughly $70\%$ of the within-layer \ac{nds} gap between Muon and Adam comes from the two boundary layers L1 and L12, and about $28\%$ from the deep layers L8--L11, and only about $2\%$ from the middle layers L2--L7. The concentration of the gap at L1 and L12 is consistent with the fact that these boundary layers interact most directly with the token embedding and the output logits, where the data distribution has the most immediate influence on the local curvature.

\section{Empirical Verification Details for Section~\ref{sec:case-study-kronecker}}
\label{app:verify-relaxed-alignment}

This appendix describes the experimental setup, data source and
computational pipeline behind each verification figure in
Section~\ref{sec:case-study-kronecker}.
All experiments use the last checkpoint trained on the Zipf-PCFG dataset shown in Appendix~\ref{app:exp-details} under the Muon optimizer. Block Hessians are computed for the four attention matrices $W_Q, W_K, W_V, W_O \in \mathbb{R}^{256\times 256}$ with each dense Hessian
$\mat(\mathcal H) \in \mathbb{R}^{256^2 \times 256^2} = \mathbb{R}^{65536\times 65536}$.

\subsection{Effective low-Kronecker-rank block Hessian (Assumption~\ref{ass:block-curvature})}
\label{app:kron}

Let $\mat(\mathcal H) \in \mathbb{R}^{d_1 d_2 \times d_1 d_2}$ be the dense Hessian for one
attention block, reshaped under the Van~Loan rearrangement~\citep{van1993approximation} into a matrix
$R(\mat(\mathcal H)) \in \mathbb{R}^{d_1^2 \times d_2^2}$ for which an SVD
$R(\mat(\mathcal H)) = \sum_k s_k\, \mathrm{vec}(A_k)\, \mathrm{vec}(B_k)^\top$
yields the optimal rank-$r$ Kronecker approximation
$H_r = \sum_{k=1}^{r} B_k^T \otimes A_k$ in Frobenius norm. We compute $R(\mat(\mathcal H))$
with a randomised SVD on the $65536\times 65536$ rearranged matrix and
report the Frobenius energy ratio
\begin{equation*}
    \xi(r) = \|H_r\|_F^2 / \|\mat(\mathcal H)\|_F^2 .
\end{equation*}
Figure~\ref{fig:kronecker-assumption} plots
$\xi(r)$ for $r = 1,\dots,30$ on each of the four attention matrices.
We observe that $\xi(4) = 0.75, 0.95, 0.87, 0.71$ for $W_Q, W_K, W_V, W_O$
respectively. Panel~(b) of
Figure~\ref{fig:kronecker-assumption} visualises the residual heatmap $\mat(\mathcal H) - H_4$ on $W_V$, confirming
that the residual is small and approximately diagonal, consistent with
Assumption~\ref{ass:block-curvature}. The dense Hessian is a $65{,}536 \times 65{,}536$ matrix, far too many
to plot directly. We therefore visualise its $i$–$i'$ \emph{row-side
structure} by mean-pooling over the column indices, $P[i, i'] = \frac{1}{d_2^2} \sum_{j, j'=1}^{d_2}\big| H[(i, j), (i', j')] \big|\in\mathbb{R}^{d_1\times d_1} = \mathbb{R}^{256\times 256}$, which preserves the row-side coupling captured by the $A$-side Kronecker factors $\{A_k\}$ in Assumption~\ref{ass:block-curvature}. The same mean-pool is applied to $H_4$ and $\mat(\mathcal H) - H_4$, giving the three heatmaps. Figures~\ref{fig:wq_kron}--\ref{fig:wo_kron}
present the analogous heatmaps for $W_Q$, $W_K$, and $W_O$,
all of which exhibit similarly small residuals under the rank-4
Kronecker approximation.

\begin{figure}[h]
    \centering
    \includegraphics[width=0.9\linewidth]{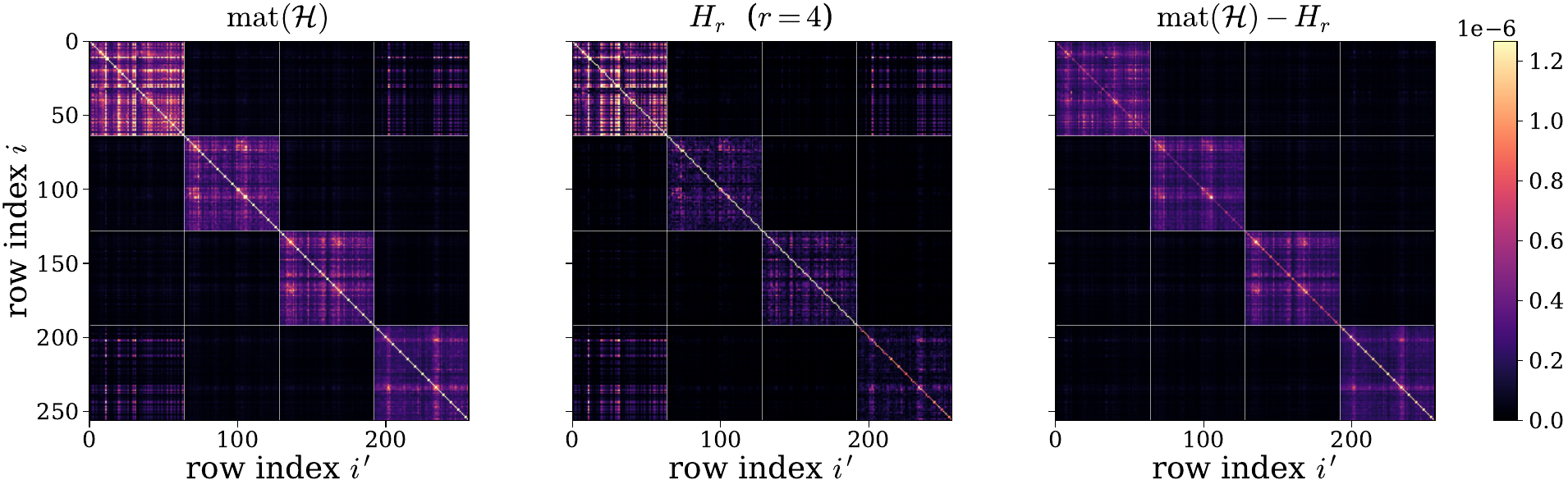}
    \caption{$W_Q$ Hessian, rank-$4$ Kronecker approximation, and residual.}
    \label{fig:wq_kron}
\end{figure}

\begin{figure}[h]
    \centering
    \includegraphics[width=0.9\linewidth]{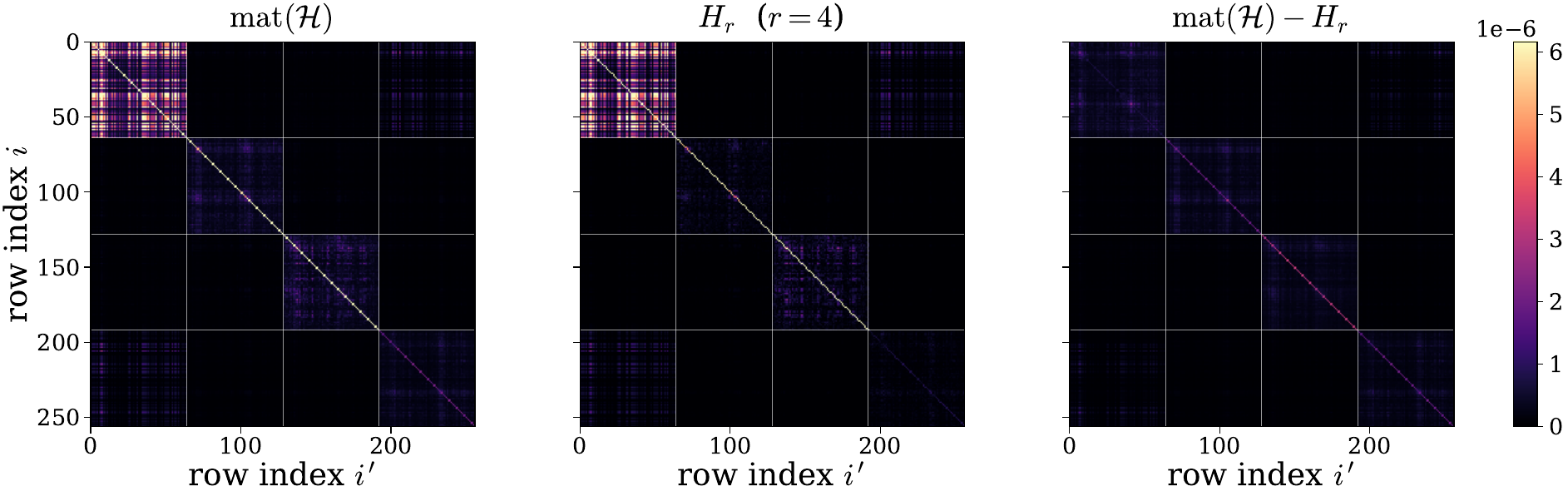}
    \caption{$W_K$ Hessian, rank-$4$ Kronecker approximation, and residual.}
    \label{fig:wk_kron}
\end{figure}

\begin{figure}[h]
    \centering
    \includegraphics[width=0.9\linewidth]{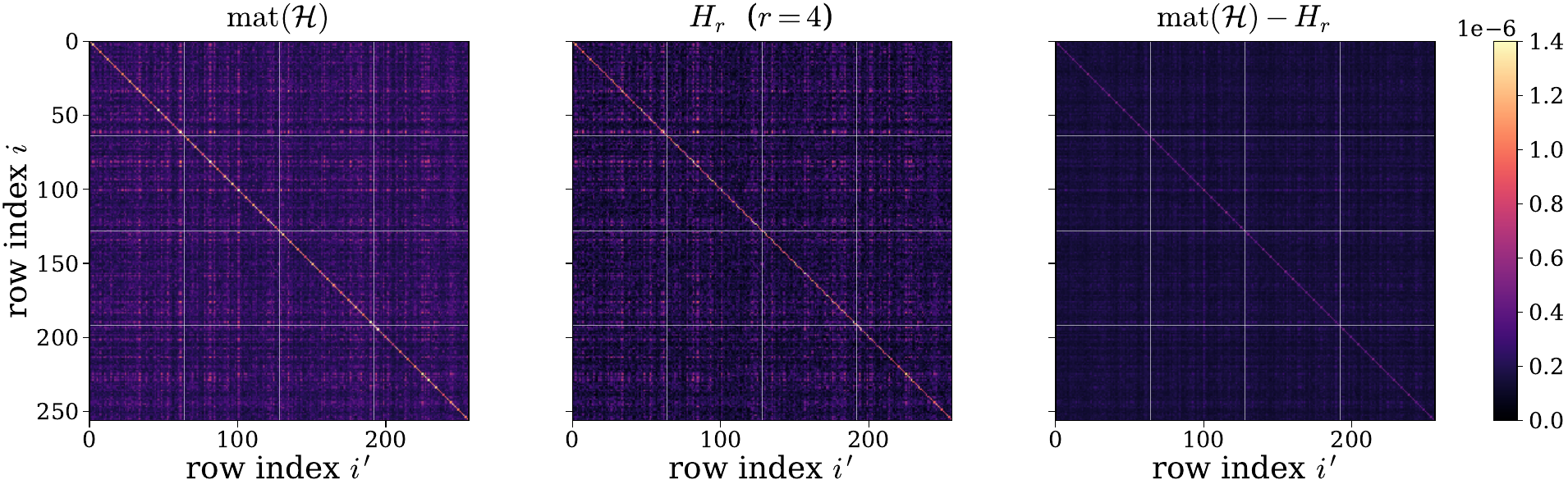}
    \caption{$W_O$ Hessian, rank-$4$ Kronecker approximation, and residual.}
    \label{fig:wo_kron}
\end{figure}

\subsection{Simultaneous Diagonalization (Assumption~\ref{ass:simul_diag})}\label{app:simul_diag}

We verify this assumption on the attention block, using Kronecker rank $r=4$. This choice balances approximation quality and computational tractability where at $r=4$, the Kronecker approximation already captures  $\xi(4) \geq 0.87$ of the Frobenius energy for three of the four attention matrices (Appendix~\ref{app:kron}), while keeping the number of matrix pairs small enough for reliable JADE joint diagonalization. For a matrix family
$\{X_k\}_{k=1}^{r}$ we compute
\begin{equation*}
\eta_{\mathrm{sd}}(\{X_k\}_{k=1}^r)
\;=\; \max_{Q^\top Q = I}
\frac{\sum_{k=1}^{r} \sum_{i} (e_i^\top Q^\top X_k Q\, e_i)^2}
     {\sum_{k=1}^{r} \|X_k\|_F^2 } .
\end{equation*}

The maximisation is approximated with the JADE Jacobi-sweep algorithm
\citep{cardoso1993blind,cardoso1996jacobi} run for 20 sweeps.
The JADE algorithm estimates an orthogonal demixing matrix by solving a joint approximate diagonalization problem. Given a collection of symmetric matrices $
\mathcal{C}=\{C_1,\ldots,C_K\},C_k\in\mathbb{R}^{d\times d},$
the goal is to find an orthogonal matrix \(V\in\mathbb{R}^{d\times d}\) such that the transformed matrices $ V^\top C_k V$, for $k=1,\ldots,K,$
are as diagonal as possible. This can be formulated as minimizing the total off-diagonal energy
\[
\min_{V\in \mathbb{O}(d)}
\sum_{k=1}^K \left\|V^\top C_k V-\operatorname{diag}\left(V^\top C_k V\right)\right\|_F^2,
\]
where \(\operatorname{diag}(A)\) denotes the matrix obtained from the diagonal elements of \(A\), and $\mathbb{O}(d)$ denotes the set of all the $d\times d$ unitary matrices. The Jacobi-sweep procedure approximately minimizes this objective by applying a sequence of pairwise rotations. For each coordinate pair \((p,q)\in[d]\times[d]\), it selects a rotation matrix that rotates coordinates in dimension $p$ and $q$ with optimized angle $\theta$, i.e., \(G_{pq}(\theta)\), to decrease the joint off-diagonal energy of the transformed matrices, and updates \[ C_k \leftarrow G_{pq}(\theta)^\top C_k G_{pq}(\theta), \qquad V \leftarrow V G_{pq}(\theta). \] A full sweep visits all pairs \(1\le p<q\le d\), and repeated sweeps progressively make the matrices in \(\mathcal C\) more nearly diagonal.

Let $U$ and $V$ denote the JADE-optimal orthogonal matrices for
$\{A_k\}_{k=1}^4$ and $\{B_k\}_{k=1}^4$, respectively, so that
$A_k \approx U\,\mathrm{Diag}(a_k^{(1)},\dots,a_k^{(d_1)})\,U^\top$
and
$B_k \approx V\,\mathrm{Diag}(b_k^{(1)},\dots,b_k^{(d_2)})\,V^\top$.
We obtain
$\eta_{\mathrm{sd}}(\{A_k\}_{k=1}^4) = 0.892$ and
$\eta_{\mathrm{sd}}(\{B_k\}_{k=1}^4) = 0.845$,
i.e.\ only about 11--15\% of the squared Frobenius energy remains
off-diagonal as shown in Figure~\ref{fig:joint_diag}.

\subsection{Curvature Heterogeneity (Assumption~\ref{ass:main-ordering})}\label{app:hetero}

With the JADE-aligned $U, V$ from Appendix~\ref{app:simul_diag}, we form $w_i = \sum_{k=1}^{4} a_{k}^{(i)} b_{k}^{(i)}$ and rank by
$|w_i|$. We consider on the top-$128$ values of $|w_i|$, as the remaining entries have negligible magnitude (the $128$-th largest $|w_i|$ is less than $10^{-6}$ of $|w_1|$). Within the top-$128$ values of $|w_i|$, $q=88$ are positive as shown in Figure~\ref{fig:hetero}; their spectrum is strongly heterogeneous, $w_1 / w_{88} \approx 2.59 \times 10^{6}$, with $w_1$ alone accounting for a large fraction of the positive mode trace. 

\subsection{Gradient alignment (Assumption~\ref{ass:main-alignment})}
\label{app:gradient-alignment}

Let $u_i\in\bbR^{d_1}$ and $v_i\in\bbR^{d_2}$ denote the $i$-th columns of $U$ and $V$ in Assumption~\ref{ass:simul_diag}, respectively. 
Let $\pi(i)$ denote the index of the $i$-th largest paired curvature among $\{\sum_{k=1}^r a_k^{(j)}b_k^{(j)}:j\in[d']\}$, where $d^\prime=\min(d_1,d_2)$. 
We define $M_i=u_{\pi(i)}v_{\pi(i)}^\top$ for $i\in[q]$. 
Since $U$ and $V$ are orthogonal matrices, the matrices $\{M_i\}_{i=1}^{q}$ are orthonormal under the Frobenius inner product. To verify this assumption, we define $\calM_{i}$ as the subspace spanned by $\{M_{j}\}_{j=1}^{i}$ with $i\in[q]$ and let $\Pi_{\calM_{i}}$ denote the projection onto this subspace and the cumulative energy ratio as $\zeta(i)=\|\Pi_{\mathcal M_{i}}G\|_F^2/\|G\|_F^2$. We compute $\zeta(i)$ and the paired-diagonal decomposition below
on the same checkpoint and dense Hessian used in Appendices~\ref{app:kron}–\ref{app:hetero}.

We first re-run the same forward/backward pass that generated the dense Hessian, with identical
batch and seed, and flatten the gradient on the parameter matrix into $g \in \mathbb{R}^{d_1 d_2}$.
To compute $\zeta(i)$, we project $g$ onto the subspace spanned by
$\{\mathrm{vec}(M_1),\dots,\mathrm{vec}(M_i)\}$, where $M_i$ are the
JADE-based rank-one modes defined above.
We also compute the top $q=88$ positive
eigenpairs $(\lambda_i, \psi_i)$ of $\mathrm{mat}(\mathcal{H})$ and
verify that the resulting cumulative energy ratios are nearly identical,
confirming that the JADE-based modes are close to true Hessian eigenvectors. Figure~\ref{fig:main-ordering-alignment} plots $\zeta(i)$ as a function of
$i$; at $i = q = 88$ we obtain $\zeta(q) = 0.871$, so the active subspace
captures $87.1\%$ of the gradient Frobenius energy.

The main-text diagnostic for Assumption~\ref{ass:main-alignment} supports that most of the gradient energy lies in the positive row/column subspace, with $\zeta(q)=0.871$.
However, this does not imply the stronger paired-diagonal form $G = U_{q}\Sigma V_{q}^\top$, where $\Sigma$ is diagonal; the latter would require the JADE-basis coefficients
$\Gamma=U^\top G V$ under Assumption~\ref{ass:main-ordering} to be concentrated on entries $\Gamma_{ii}$.
We therefore check whether the Hessian quadratic form relevant to the sharpness is nevertheless determined by the paired-diagonal positive component. Under Assumption~\ref{ass:main-ordering}, let $\Gamma= U^\top G V$, $\Gamma_{ij} =\gamma_{ij}$,  and $\Lambda_{ij}=\sum_{k=1}^r a_{k}^{(i)} b_{k}^{(j)}$. In this paired basis, the gradient norm and Hessian quadratic form become
\begin{equation*}
\langle G,\mathcal{H}[G]\rangle=\sum_{i,j}\Lambda_{ij}\gamma_{ij}^2,
\qquad\|G\|_F^2=\sum_{i,j}\gamma_{ij}^2.
\end{equation*}
Let $\mathcal{A} = \{i : w_i > 0\}, w_i = \Lambda_{ii}$ denote the positive set from Assumption~\ref{ass:main-ordering}. We partition both sums into three segments: (A)~the intended
paired-diagonal component ($i = j$, $i \in \mathcal{A}$),     (B)~the off-diagonal component ($i \neq j$), (C)~the non-positive diagonal component ($i = j$, $i \notin \mathcal{A}$).
Since segment~(C) contributes approximately $0\%$, we only report the first two groups in Table~\ref{tab:relaxed-a33-decomposition}.

\begin{table}[h]
\centering
\caption{Hessian quadratic-form decomposition}
\label{tab:relaxed-a33-decomposition}
\begin{tabular}{lc}
\toprule
segment & share of $\langle G, \mathcal{H}[G]\rangle$ \\
\midrule
(A) intended paired-diagonal & 88.0\%  \\
(B) off paired-diagonal & 12.0\% \\
\bottomrule
\end{tabular}
\end{table}

Table~\ref{tab:relaxed-a33-decomposition} shows that the Hessian is
dominated by the paired-diagonal positive component: it accounts for $88.0\%$ of
$\langle G,\mathcal H[G]\rangle$, while the off paired-diagonal component
accounts for only $12.0\%$. Thus, although $G$ is not exactly paired-diagonal in the JADE
basis, the curvature cost of the gradient-aligned direction is still primarily
controlled by the same paired-diagonal mechanism.
This shows that our structured quadratic model used in the theory is a
reasonable idealization.

\subsection{Synthetic GD vs.\ Adam vs.\ Muon under the structured quadratic model}
\label{app:synthetic}

The synthetic experiments in Figure~\ref{fig:synth-muon-gd-signgd} quantify the NDS
ratio and the optimal-step quadratic-model decrease ratio for Muon, Adam,
and GD on random instances of the structured quadratic model satisfying
Assumptions~\ref{ass:block-curvature}--\ref{ass:main-alignment}.
Each instance is parameterised by $(d_1, d_2, q, \alpha_w, \alpha_\sigma)$
with defaults $d_1 = d_2 = 256$, $q = 88$, $\alpha_w = 1.3$,
$\alpha_\sigma = 0.5$.
We sample two random orthogonal matrices
$U \in \mathbb{R}^{d_1 \times d_1}$ and
$V \in \mathbb{R}^{d_2 \times d_2}$,
and construct the curvature--gradient pair as
$w_i \propto i^{-\alpha_w}$ and $\sigma_i = w_i^{\alpha_\sigma}$
for $i \in [q]$, with $w_i = 0$ for $i > q$.
Thus only $q = 88$ modes carry positive curvature, matching the
effective dimensionality observed empirically in
Appendix~\ref{app:hetero}.
The power-law spectrum generalises the heterogeneous property in
Assumption~\ref{ass:main-ordering}.

All three optimizers are initialised at $Y_0 = 0$ and use exact
line-search step sizes as described in Section~\ref{sec:case-study-kronecker}. Figure~\ref{fig:ratio-a} reports the NDS ratio normalised by
GD's value. Muon achieves an NDS roughly $11\times$ smaller than
GD's ($0.09\times$), while Adam's NDS is about half of GD's
($0.50\times$).
Figure~\ref{fig:ratio-b} reports the loss-decrease ratio.
Muon attains a $5.60\times$ larger loss decrease than GD, whereas
Adam's advantage over GD is modest ($1.22\times$).
Notably, Adam and GD exhibit similar NDS and loss-decrease profiles,
which justifies our theoretical focus on the GD--Muon comparison
in Section~\ref{sec:case-study-kronecker}.

\section{Proof of Theorem~\ref{thm:sharpness}}
\label{app:two-group-proof}

Assumptions~\ref{ass:block-curvature}--\ref{ass:main-alignment} reduce the local quadratic model
\begin{align*}
\mathcal Q(Y)
    =\calL(W_0)-\langle G,Y\rangle+1/2\cdot\langle Y,\mathcal H[Y]\rangle 
\end{align*}
to a low-dimensional model on the positive-curvature subspace. Specifically, the gradient and Hessian are simultaneously represented along paired rank-one modes $\{M_i\}_{i=1}^q$ as
\begin{equation*}
     G=\sum_{i=1}^q \sigma_iM_i, \text{ and }\mathcal H[M_i]=w_iM_i.
\end{equation*}
Here, the modes $M_i$ have positive and heterogeneous curvatures $w_i$, and higher-curvature modes tend to carry larger gradient energy, as formalized by Assumptions~\ref{ass:block-curvature}, \ref{ass:simul_diag}, and \ref{ass:main-alignment}. We refer to this span as the positive-mode subspace. Under the two-group profile in Assumption~\ref{ass:main-ordering}, the positive curvatures take the form
\begin{equation*}
    w_i=w_{\rm H},\quad i=1,\ldots,m,
    \qquad
    w_i=w_{\rm L},\quad i=m+1,\ldots,q,
\end{equation*}
where $w_{\rm H}>w_{\rm L}>0$. Similarly, the coefficients in
$G=\sum_{i=1}^q \sigma_iM_i$ follow the same grouping:
\begin{equation*}
    \sigma_i=\sigma_{\rm H},\quad i=1,\ldots,m,
    \qquad
    \sigma_i=\sigma_{\rm L},\quad i=m+1,\ldots,q,
\end{equation*}
where $\sigma_{\rm H}>\sigma_{\rm L}>0$, as specified in Assumption~\ref{ass:main-alignment}. For convenience, define
\begin{equation*}
    \alpha=\frac{m}{q},
    \qquad
    \beta=1-\alpha=1-\frac{m}{q},
    \qquad
    \rho=\frac{w_{\rm H}}{w_{\rm L}}>1,
    \qquad
    \tau=\frac{\sigma_{\rm H}}{\sigma_{\rm L}}>1.
\end{equation*}

In the following, we focus on the non-degenerate case in which the residuals defining the normalized update directions remain nonzero throughout the considered time horizon. If a method reaches the quadratic minimizer earlier, then its suboptimality is zero, and the comparison follows from the same formulas under the natural stopping convention. Our proof proceeds in three main steps:
\begin{itemize}[leftmargin=2em]
    \item \textbf{Step 1: Residual Representation and exact line-search stepsize.} 
    We derive the residual dynamics of Muon and GD in the positive mode subspace 
    and establish the exact line-search stepsize.
    
    \item \textbf{Step 2: Sharpness Comparison.} 
    Using the residual representation, we compute the \ac{nds} for Muon and GD, and compare their average sharpness across the trajectory.
    
    \item \textbf{Step 3: Loss Decrease Comparison.} 
    We analyze the cumulative loss decrease over the horizon by computing the 
    exact line-search updates and the suboptimality for both Muon and GD. 
    This step provides a closed-form comparison of their total loss decrease.
\end{itemize}

\noindent \textbf{Step 1: Residual representation and exact line-search stepsize.}
We begin by showing that all iterates $Y_t^{\opt}$ and residuals $G-\mathcal H[Y_t^{\opt}]$ remain in the positive-mode subspace for $\opt\in\{\muon,\gd\}$, so that the dynamics can be tracked through scalar residual coefficients. Specifically, let $\mathcal{M}_q=\operatorname{span}(\{M_i\}_{i=1}^q)$. For each
$\opt\in\{\muon,\gd\}$ and every $t=0,\ldots,T$, we will prove
\begin{itemize}[leftmargin=1em]
    \item The parameter $Y_t$ and the residual $G-\mathcal H[Y_t^{\opt}]$ are in $\mathcal{M}_q$, i.e., $Y_t^{\opt}\in\mathcal{M}_q,
    G-\mathcal H[Y_t^{\opt}]\in\mathcal{M}_q.$
    \item For every $t=0,\ldots,T-1$, the update increment satisfies $\eta_t^{\opt}Z_t^{\opt}\in\mathcal{M}_q.$
\end{itemize}
We prove these results by induction on $t$. At $t=0$, we have $Y_0^{\opt}=0\in\mathcal{M}_q$.
Since $G=\sum_{i=1}^q\sigma_iM_i\in\mathcal{M}_q$, we also have $G-\mathcal H[Y_0^{\opt}]=G\in\mathcal{M}_q$. Assume $Y_t^{\opt}\in\mathcal{M}_q$ for $t\geq 0$. Then there exist coefficients $y_{i,t}^{\opt}$ such that
\begin{equation*}
    Y_t^{\opt}=\sum_{i=1}^q y_{i,t}^{\opt}M_i.
\end{equation*}
Using $\mathcal H[M_i]=w_iM_i$, we get $\mathcal H[Y_t^{\opt}]=\sum_{i=1}^q w_i y_{i,t}^{\opt}M_i$. Therefore, we obtain
\begin{equation*}
    G-\mathcal H[Y_t^{\opt}]
    =
    \sum_{i=1}^q
    \left(\sigma_i-w_i y_{i,t}^{\opt}\right)M_i
    =
    \sum_{i=1}^q r_{i,t}^{\opt}M_i
    \in\mathcal{M}_q.
\end{equation*}

For Muon, the update direction  is the spectral sign of the residual, which takes the form 
\begin{equation*}
    Z_t^{\muon}
    =
    \sum_{i=1}^q \sgn(r_{i,t}^{\muon})M_i
    \in\mathcal{M}_q,
\end{equation*}
where $\sgn(\cdot)$ takes the sign of the input. For GD, the update direction is the residual itself,
\begin{equation*}
    Z_t^{\gd}
    =
    \sum_{i=1}^q r_{i,t}^{\gd}M_i
    \in\mathcal{M}_q.
\end{equation*}
Therefore, in both cases, we have that
\begin{equation*}
\eta_t^{\opt}Z_t^{\opt}\in\mathcal{M}_q,\qquad Y_{t+1}^{\opt}=Y_t^{\opt}+\eta_t^\opt Z_t^{\opt}\in\mathcal{M}_q.
\end{equation*}
This completes the induction.  Consequently, writing
$Y_t^{\opt}=\sum_{i=1}^q y_{i,t}^{\opt}M_i$ and defining the per-mode residual $r_{i,t}^{\opt}=\sigma_i-w_i y_{i,t}^{\opt}$, we obtain
\begin{equation*}
    G-\mathcal H[Y_t^{\opt}]
    =\sum_{i=1}^q r_{i,t}^{\opt}M_i.
\end{equation*}
Here, $r_{i,t}^{\opt}$ is the stationarity residual along mode $M_i$.

With the residual representation above, we next state a lemma for the exact line-search step size $\eta_t^{\opt}$, which is heavily used in the later proof.

\begin{lemma}[Exact line-search stepsize]\label{lem:line_stepsize}
Let $\mathcal R_t=G-\mathcal H[Y_t]$ denote the residual at step $t$. For any direction $Z$, the local quadratic model satisfies
\begin{equation*}
    \mathcal Q(Y_t+\eta Z)
    =\mathcal Q(Y_t)
    -\eta\langle \mathcal R_t,Z\rangle
    +\frac{\eta^2}{2}\langle Z,\mathcal H[Z]\rangle.
\end{equation*}
Then the optimal step size is
\begin{equation*}
    \eta_t(Z)
    =\max\bigg\{\frac{\langle \mathcal R_t,Z\rangle}{\langle Z,\mathcal H[Z]\rangle},0\bigg\}. 
\end{equation*}
\end{lemma}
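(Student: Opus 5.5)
The plan is a direct expansion of the quadratic followed by a one-dimensional minimization. First I will establish the expansion. By the definition of $\mathcal Q$ in Eqn.~\eqref{eq:quadratic-model}, write
\begin{equation*}
\mathcal Q(Y_t+\eta Z)=\calL(W_0)-\langle G,Y_t\rangle-\eta\langle G,Z\rangle+\frac12\langle Y_t,\mathcal H[Y_t]\rangle+\eta\langle Z,\mathcal H[Y_t]\rangle+\frac{\eta^2}{2}\langle Z,\mathcal H[Z]\rangle .
\end{equation*}
The cross term is collected into a single term $\eta\langle Z,\mathcal H[Y_t]\rangle$ using the self-adjointness of $\mathcal H$ with respect to the Frobenius inner product. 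This holds because $\mathcal H$ is a Hessian of a $C^2$ function; equivalently, under Assumption~\ref{ass:block-curvature}, $\mat(\calH)=\sum_k B_k^\top\otimes A_k$ with $A_k,B_k$ symmetric, so $\mat(\calH)$ is symmetric. Grouping the terms linear in $\eta$ gives $-\eta\langle G-\mathcal H[Y_t],Z\rangle=-\eta\langle\mathcal R_t,Z\rangle$, and the remaining $\eta$-free terms are exactly $\mathcal Q(Y_t)$.

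Next I will minimize the scalar quadratic $\phi(\eta)=-\eta c+\tfrac{\eta^2}{2}a$ over $\eta\ge0$, where $c=\langle\mathcal R_t,Z\rangle$ and $a=\langle Z,\mathcal H[Z]\rangle$. Maximizing the decrease $\mathcal Q(Y_t)-\mathcal Q(Y_t+\eta Z)$ is the same as minimizing $\phi$. In the setting of interest, $a>0$. This follows because, for $Z\in\mathcal M_q$ and $Z\ne0$ (as established in Step~1), $a=\sum_i w_i z_i^2>0$, since all $w_i>0$.

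With $a>0$, $\phi$ is strictly convex and its unconstrained minimizer is $c/a$. If $c\ge0$, this minimizer is feasible. If $c<0$, then $\phi$ is increasing on $[0,\infty)$, so the constrained minimizer is $\eta=0$. Combining the two cases gives $\eta_t(Z)=\max\{c/a,0\}$.

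The main (minor) obstacle is the need for positivity of $\langle Z,\mathcal H[Z]\rangle$; without it the argmax could be infinite. I will handle this by restricting to directions in the positive-mode subspace, which Step~1 guarantees for both Muon and GD. I will also note that in the non-degenerate case $c>0$: for GD, $c=\|\mathcal R_t\|_F^2$, and for Muon, $c=\sum_i|r_{i,t}|$.
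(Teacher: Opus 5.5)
Your proof is correct and follows the paper's argument: you expand $\mathcal Q(Y_t+\eta Z)$ using linearity and self-adjointness of $\mathcal H$, collect the linear term into $-\eta\langle \mathcal R_t,Z\rangle$, and optimize the resulting scalar quadratic over $\eta\ge 0$. You also note explicitly that the formula needs $\langle Z,\mathcal H[Z]\rangle>0$, and you secure this by restricting to nonzero $Z\in\mathcal M_q$, where all $w_i>0$; this is a worthwhile refinement, since the paper states the lemma for any direction $Z$ and leaves the condition implicit.
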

The proof is in Appendix~\ref{app:line_stepsize}

\vspace{5pt}
\noindent \textbf{Step 2: Sharpness comparison.}
We now use the residual representation to compare the \ac{nds} of Muon and GD. The key observation is that Muon's update direction weights all modes $M_i$ equally regardless of their residual magnitudes, whereas GD's direction concentrates on whichever group currently carries more residual energy. We compute the \ac{nds} for each optimizer in turn, then compare their averages over the trajectory.


\vspace{5pt}
\noindent {\bf\emph{\ac{nds} of Muon.}} By the definition of $Z_t^{\muon}$, its update direction is written as $Z_t^{\muon}=\sum_{i=1}^q \sgn(r_{i,t}^{\muon})M_i$.
Since the modes $M_i$ are Frobenius-orthonormal, i.e., $\langle M_i,M_j\rangle=\delta_{ij}$, and the step is non-degenerate, it gives
\begin{equation*}
    \|Z_t^{\muon}\|_F^2
    =\sum_{i=1}^q \sgn(r_{i,t}^{\muon})^2=q.
\end{equation*}
Also, we can write the $\mathcal H[Z_t^{\muon}]$ as follows
\begin{equation*}
    \mathcal H[Z_t^{\muon}]
    =
    \sum_{i=1}^q
    w_i\sgn(r_{i,t}^{\muon})M_i.
\end{equation*}
Then, the curvature term evaluates to
\begin{equation*}
\langle Z_t^{\muon},\mathcal H[Z_t^{\muon}]\rangle=\sum_{i=1}^q w_i=mw_{\rm H}+(q-m)w_{\rm L}.
\end{equation*}
Therefore, the \ac{nds} of Muon is
\begin{align}
    \Sc_F(Z_t^{\muon})
    =\frac{mw_{\rm H}+(q-m)w_{\rm L}}{q}=
    \alpha w_{\rm H}+(1-\alpha)w_{\rm L}.\label{eq:nds_m}
\end{align}
The Muon \ac{nds} is constant across steps because the spectral normalization erases all magnitude information, making the curvature seen by Muon depend only on the fixed group structure. In contrast, GD's \ac{nds} fluctuates from step to step. To track this fluctuation, we introduce the high-curvature energy share, which measures how much of the total residual energy is concentrated in the high-curvature modes.

\vspace{5pt}
\noindent {\bf\emph{\ac{nds} of GD.}} The value of \ac{nds} is summarized in the following proposition.
\begin{proposition}[\ac{nds} of GD]\label{prop:gd_sharp}
For GD with $G-\mathcal H[Y_t^{\gd}]=\sum_{i=1}^q r_{i,t}^{\gd}M_i$, we define the high-curvature energy share as
\begin{equation*}
    P_t^{\gd}=\frac{\sum_{i=1}^m (r_{i,t}^{\gd})^2}
    {\sum_{i=1}^q (r_{i,t}^{\gd})^2},
\end{equation*}
which measures the fraction of total residual energy carried 
by the high-curvature modes. Then the \ac{nds} of GD is
\begin{equation*}
    \Sc_F(Z_t^{\gd})
    =P_t^{\gd}\,w_{\rm H}+(1-P_t^{\gd})\,w_{\rm L}.
\end{equation*}
In addition, the value of $P_t^{\gd}$ alternates between two values according to 
$P_{t+1}^{\gd}=1-P_t^{\gd}$, with initial value 
\begin{equation*}
P_0^{\gd}=p=\frac{m\sigma_{\rm H}^2}
{m\sigma_{\rm H}^2+(q-m)\sigma_{\rm L}^2}.
\end{equation*}
In particular, $\Sc_F(Z_t^{\gd})$ oscillates between 
$p\,w_{\rm H}+(1-p)\,w_{\rm L}$ and 
$(1-p)\,w_{\rm H}+p\,w_{\rm L}$.
\end{proposition}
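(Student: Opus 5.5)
The argument has three parts: the NDS formula, a symmetry reduction to two scalar group residuals, and the recursion $P_{t+1}^{\gd}=1-P_t^{\gd}$ obtained from one exact line-search step.

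\emph{NDS formula.} From Step~1, $Z_t^{\gd}=\sum_{i=1}^q r_{i,t}^{\gd}M_i$. Since the $M_i$ are Frobenius-orthonormal and $\mathcal H[M_i]=w_iM_i$, we have $\|Z_t^{\gd}\|_F^2=\sum_i (r_{i,t}^{\gd})^2$ and $\langle Z_t^{\gd},\mathcal H[Z_t^{\gd}]\rangle=\sum_i w_i (r_{i,t}^{\gd})^2$. Splitting the second sum into the high group ($w_i=w_{\rm H}$) and the low group ($w_i=w_{\rm L}$) and dividing by the first gives $\Sc_F(Z_t^{\gd})=P_t^{\gd}w_{\rm H}+(1-P_t^{\gd})w_{\rm L}$ directly.

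\emph{Symmetry reduction.} I will show by induction that all high modes share a common residual $r_{{\rm H},t}$ and all low modes share $r_{{\rm L},t}$. This holds at $t=0$, where $r_{i,0}=\sigma_i$ and the $\sigma_i$ are constant on each group. It is preserved by the update, because the GD step multiplies each residual by $1-\eta_t w_i$: indeed $r_{i,t+1}=\sigma_i-w_i(y_{i,t}+\eta_t r_{i,t})=r_{i,t}(1-\eta_t w_i)$. Hence
\[
P_t^{\gd}=\frac{m r_{{\rm H},t}^2}{m r_{{\rm H},t}^2+(q-m)r_{{\rm L},t}^2},
\]
and at $t=0$ this equals $p$ as stated.

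\emph{Key step: the recursion.} By Lemma~\ref{lem:line_stepsize} with $Z=\mathcal R_t$, the step size is $\eta_t=\|\mathcal R_t\|_F^2/\langle \mathcal R_t,\mathcal H[\mathcal R_t]\rangle=1/\bar w_t$, where $\bar w_t=P_tw_{\rm H}+(1-P_t)w_{\rm L}$. This value is positive because we are in the non-degenerate case. The two multipliers are then
\[
1-\eta_tw_{\rm H}=-\frac{(1-P_t)(w_{\rm H}-w_{\rm L})}{\bar w_t},\qquad 1-\eta_tw_{\rm L}=\frac{P_t(w_{\rm H}-w_{\rm L})}{\bar w_t}.
\]
Substituting, the new high and low energies $m r_{{\rm H},t+1}^2$ and $(q-m)r_{{\rm L},t+1}^2$ are proportional, with the common factor $(w_{\rm H}-w_{\rm L})^2/\bar w_t^2$, to $E_{\rm H}(1-P_t)^2$ and $E_{\rm L}P_t^2$. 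Here $E_{\rm H}=P_tE$ and $E_{\rm L}=(1-P_t)E$ are the current group energies. The ratio of the new energies is therefore $P_t(1-P_t)^2 : (1-P_t)P_t^2=(1-P_t):P_t$, which gives $P_{t+1}=1-P_t$. An equivalent check is that the new residual is orthogonal to the old one under exact line search, since $\langle \mathcal R_{t+1},\mathcal R_t\rangle=0$.

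\emph{Conclusion and degenerate cases.} Iterating the recursion, $P_t$ alternates between $p$ and $1-p$, so $\Sc_F(Z_t^{\gd})$ alternates between the two stated values. The only delicate point is the degenerate case, where one of $P_t\in\{0,1\}$ or a group's residual vanishes. If $P_t\in(0,1)$ then both multipliers are nonzero, so $P_{t+1}$ stays in $(0,1)$. Since $p\in(0,1)$ because $m\ge1$ and $q-m\ge1$, non-degeneracy persists for all $t$, which also justifies dividing by $E$ throughout.
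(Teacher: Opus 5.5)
Your proof is correct and follows the paper's argument essentially step for step: the NDS formula from Frobenius orthonormality, the line-search step $\eta_t=1/\Sc_F(Z_t^{\gd})$, the two opposite-sign group multipliers, and the ratio of new group energies giving $P_{t+1}^{\gd}=1-P_t^{\gd}$. Two small differences: your within-group symmetry reduction is not needed, since the multiplier $1-\eta_t w_i$ depends only on the group and the paper works directly with the group energies $A_t,B_t$. Your observation that nonzero multipliers keep $P_t^{\gd}\in(0,1)$ usefully justifies the non-degeneracy the paper simply assumes, though $p\in(0,1)$ also uses $\sigma_{\rm L}>0$.
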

The proof is in Appendix~\ref{app:gd_sharp}. In the following, we compare the \ac{nds} of $\muon$ and $\gd$. 
Eqn.~\eqref{eq:nds_m} and Proposition~\ref{prop:gd_sharp}, the difference in average \ac{nds} is
\begin{equation*}
    \bar \Sc_T^{\gd}-\bar \Sc_T^{\muon}=(\bar p_T-\alpha)(w_{\rm H}-w_{\rm L}),
\end{equation*}
where $\bar p_T=T^{-1}\sum_{t=0}^{T-1}P_t^{\gd}$ is the time-averaged energy share. Since $P_t^{\gd}$ alternates between $p$ and $1-p$, we have
\begin{equation*}
    \bar p_T=
    \begin{cases}
        \frac12, & T \text{ even},\\[4pt]
        \frac12+\frac{p-\frac12}{T}, & T \text{ odd}.
    \end{cases}
\end{equation*}
Since $w_{\rm H}>w_{\rm L}$, it suffices to show $\bar p_T>\alpha$. The condition $\sigma_{\rm H}>\sigma_{\rm L}$ implies $p>\alpha$. If $T$ is even, $\bar p_T=1/2>\alpha$. If $T=2N+1$ is odd, then $\bar p_T=(N+p)/(2N+1)$ and
\begin{equation*}
    \bar p_T-\alpha=\frac{N(1-2\alpha)+(p-\alpha)}{2N+1}>0,
\end{equation*}
since $\alpha<1/2$ and $p>\alpha$. This proves $\bar \Sc_T^{\muon}<\bar \Sc_T^{\gd}$. 

\vspace{5pt}
\noindent \textbf{Step 3: Loss decrease comparison.}

We now turn to the cumulative loss decrease. Our strategy is to express the terminal objective gap in a common residual form, derive its closed form separately for Muon and GD, and then show that Muon has a smaller gap at every horizon.
In the following, we will analyze
\begin{align*}
    \Phi_t^{\muon}=
    \mathcal Q(Y_t^{\muon})-\mathcal Q(Y^\star), \text{ and }\Phi_t^{\gd}=
    \mathcal Q(Y_t^{\gd})-\mathcal Q(Y^\star),
\end{align*}
where $Y^*=\argmin_{Y\in\mathcal{M}_q}  \mathcal Q(Y)$ is the minimizer of the loss within the subspace $\mathcal{M}_q$. We now derive the residual dynamics for each optimizer, starting with Muon. For Muon, we have the following proposition.
\begin{proposition}
[Muon suboptimality]\label{prop:muon_gap}
Let $d_0=|\sigma_{\rm H}/w_{\rm H}-\sigma_{\rm L}/w_{\rm L}|$ denote the initial gap between the scaled residuals of the two groups, and let
\begin{equation*}
    \Gamma
    =
    \frac{|a-b|}{a+b}
    =
    \frac{
        |mw_{\rm H}-(q-m)w_{\rm L}|
    }{
        mw_{\rm H}+(q-m)w_{\rm L}
    }
\end{equation*}
be the normalized curvature imbalance between the two groups, where $a=mw_{\rm H}, b=(q-m)w_{\rm L}$. For every $T\ge1$, we have the suboptimality of Muon $\Phi_T^{\muon}$ as follows
\begin{equation*}
    \Phi_T^{\muon}=
    \mathcal Q(Y_T^{\muon})-\mathcal Q(Y^\star)
    =
    \frac{ab}{2(a+b)}
    \Gamma^{2(T-1)}
    d_0^2.
\end{equation*}
\end{proposition}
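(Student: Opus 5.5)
The plan is to work entirely in the scaled-residual coordinates supplied by Step~1. Because Muon's update is $\sum_i\sgn(r^{\muon}_{i,t})M_i$ and the two groups start with identical data, induction shows that all high-curvature modes share one residual $r_{\rm H,t}$ and all low-curvature modes share one residual $r_{\rm L,t}$. Define the scaled residuals $e_{\rm H,t}=r_{\rm H,t}/w_{\rm H}$ and $e_{\rm L,t}=r_{\rm L,t}/w_{\rm L}$; these are the distances to $Y^\star$ along each group. With $a=mw_{\rm H}$ and $b=(q-m)w_{\rm L}$, the suboptimality becomes
\[
\Phi_t^{\muon}=\tfrac12\sum_i (r_{i,t})^2/w_i=\tfrac12\bigl(a\,e_{\rm H,t}^2+b\,e_{\rm L,t}^2\bigr).
\]
Lemma~\ref{lem:line_stepsize}, applied to $Z=\sum_i\sgn(r_i)M_i$, gives
\[
\eta_t=\frac{a|e_{\rm H,t}|+b|e_{\rm L,t}|}{a+b}.
\]
The update is then $e_{\rm H,t+1}=e_{\rm H,t}-\eta_t\sgn(e_{\rm H,t})$, and likewise for $e_{\rm L,t+1}$.

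I will treat a single step in two sign regimes.

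\emph{Same-sign regime.} If $e_{\rm H}$ and $e_{\rm L}$ have the same sign, both move by the same amount $\eta$. Hence the gap $e_{\rm H}-e_{\rm L}$ is preserved, and the formula for $\eta$ forces $a e'_{\rm H}+b e'_{\rm L}=0$. Explicitly, $e'_{\rm H}=b(e_{\rm H}-e_{\rm L})/(a+b)$ and $e'_{\rm L}=-a(e_{\rm H}-e_{\rm L})/(a+b)$. This gives
\[
\Phi'=\frac{ab}{2(a+b)}(e_{\rm H}-e_{\rm L})^2 .
\]

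\emph{Opposite-sign regime.} Here I start from the state $a e_{\rm H}+b e_{\rm L}=0$ produced by the previous case. Then $\eta=2a|e_{\rm H}|/(a+b)$, and direct substitution yields
\[
e'_{\rm H}=e_{\rm H}\,\frac{b-a}{a+b},\qquad e'_{\rm L}=\frac{a}{b}\,e'_{\rm H}.
\]
The new residuals again have the same sign, so the iteration alternates between the two regimes.

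The first step is in the same-sign regime, since $e_{\rm H,0}=\sigma_{\rm H}/w_{\rm H}>0$ and $e_{\rm L,0}=\sigma_{\rm L}/w_{\rm L}>0$. It gives $\Phi_1^{\muon}=\frac{ab}{2(a+b)}d_0^2$, which is the claim for $T=1$. For the induction step I will check that each subsequent step multiplies $\Phi$ by exactly $\Gamma^2$, in both alternation states.

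From the state $a e_{\rm H}+b e_{\rm L}=0$, we have
\[
\Phi=\tfrac12 a e_{\rm H}^2\,\frac{a+b}{b},\qquad \Phi'=\tfrac12 a e_{\rm H}'^2\,\frac{a+b}{b},
\]
so the ratio is $(b-a)^2/(a+b)^2=\Gamma^2$.

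From the state $e_{\rm L}=(a/b)e_{\rm H}$, we have $\Phi=\tfrac12 a e_{\rm H}^2(a+b)/b$. Applying the same-sign formula with $e_{\rm H}-e_{\rm L}=e_{\rm H}(b-a)/b$ gives $\Phi'=\frac{ab}{2(a+b)}e_{\rm H}^2(b-a)^2/b^2$, and again the ratio is $\Gamma^2$. Chaining these ratios gives $\Phi_T^{\muon}=\Phi_1^{\muon}\Gamma^{2(T-1)}$, which is the stated formula.

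I expect the main obstacle to be the sign bookkeeping: identifying the two-state invariant and confirming that the alternation never leaves it. Degenerate cases, where some residual vanishes (for instance $a=b$, or $d_0=0$), make $\Gamma=0$ or $\Phi_1=0$. I will handle them with the stopping convention stated at the start of the appendix, where $\sgn(0)=0$ means the affected modes stop moving and the formula still holds with value $0$.
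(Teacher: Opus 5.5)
Your proof is correct and is essentially the paper's argument: the same group symmetry, the exact line-search step as the $(a,b)$-weighted average of the two scaled residuals, and contraction by $\Gamma$ in the gap, i.e. $\Gamma^2$ in $\Phi$, at every step after the first. The only difference is that the paper tracks the unsigned quantities $c_{i,t}=|r_{i,t}^{\muon}|/w_i$, whose update $c_{i,t+1}=|c_{i,t}-\eta_t^{\muon}|$ holds whatever the signs are; this collapses your same-sign/opposite-sign alternation into a single computation, yielding $c_{{\rm H},t}=\frac{b}{a+b}\Gamma^{t-1}d_0$ and $c_{{\rm L},t}=\frac{a}{a+b}\Gamma^{t-1}d_0$ directly, while your signed bookkeeping additionally makes the zig-zag of the iterates explicit.
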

The proof is in Appendix~\ref{app:muon_gap}. This result shows that Muon has linear convergence to $Y^\star$. One-step optimization will decrease the suboptimality with coefficient $\Gamma^{2T}$ multiplicatively. We now turn to GD. Unlike Muon, GD's direction is proportional to the residual itself.
\begin{proposition}
[GD suboptimality]\label{prop:gd_gap}
Let $p=m\sigma_{\rm H}^2/(m\sigma_{\rm H}^2+(q-m)\sigma_{\rm L}^2)$ be the initial high-curvature energy share. Define the per-step contraction function as
\begin{equation*}
    C(x)=
    \frac{
        (w_{\rm H}-w_{\rm L})^2x(1-x)
    }{
        \left(w_{\rm L}+(w_{\rm H}-w_{\rm L})x\right)^2
    },
\end{equation*}
which measures how much total residual energy is retained after one GD step when the high-curvature share is $x$, and let $R=C(p)C(1-p)$ be the two-step contraction factor. Then for every $T\ge0$,
the suboptimality of GD can be written as follows.
\begin{equation*}
    \Phi_T^{\gd}
    =
    \Phi_0^{\gd} R^{T/2},
\end{equation*}
where $\Phi_0^{\gd}$ is the suboptimality at initialization, i.e., 
\begin{equation*}
    \Phi_0^{\gd}
    =
    \frac12
    \left(
        \frac{m\sigma_{\rm H}^2}{w_{\rm H}}
        +
        \frac{(q-m)\sigma_{\rm L}^2}{w_{\rm L}}
    \right).
\end{equation*}
Equivalently, the GD suboptimality contracts by the same factor $\sqrt R$ at every step:
\begin{equation*}
    \Phi_{t+1}^{\gd}=\sqrt R\,\Phi_t^{\gd}.
\end{equation*}
\end{proposition}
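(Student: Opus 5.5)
The plan is to reduce GD to a two-scalar recursion on the residual energy $E_t=\sum_{i=1}^q (r_{i,t}^{\gd})^2$ and the high-curvature share $P_t^{\gd}$, and then express $\Phi_t^{\gd}$ through these two quantities.

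\textbf{Step 1 (suboptimality in residual form).} By Step~1 of the proof, $Y_t^{\gd}\in\mathcal M_q$ and $G-\mathcal H[Y_t^{\gd}]=\sum_i r_{i,t}^{\gd}M_i$. Since $y_i^\star=\sigma_i/w_i$ and $\mathcal Q$ is separable in these coordinates, the suboptimality is
$\Phi_t^{\gd}=\tfrac12\sum_i (r_{i,t}^{\gd})^2/w_i$. With $x=P_t^{\gd}$ this becomes
\begin{equation*}
\Phi_t^{\gd}=\frac{E_t}{2}\Big(\frac{x}{w_{\rm H}}+\frac{1-x}{w_{\rm L}}\Big)=\frac{E_t\,\big(xw_{\rm L}+(1-x)w_{\rm H}\big)}{2w_{\rm H}w_{\rm L}}.
\end{equation*}
At $t=0$ we have $r_{i,0}=\sigma_i$, which gives the stated formula for $\Phi_0^{\gd}$.

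\textbf{Step 2 (one GD step).} Apply Lemma~\ref{lem:line_stepsize} with $Z=\mathcal R_t$. This gives
\begin{equation*}
\eta_t=\frac{E_t}{\sum_i w_i r_{i,t}^2}=\frac{1}{\bar w(x)},\qquad \bar w(x)=w_{\rm L}+(w_{\rm H}-w_{\rm L})x.
\end{equation*}
The residuals update as $r_{i,t+1}=(1-\eta_t w_i)r_{i,t}$. Substituting $\eta_t$ yields
\begin{equation*}
1-\eta_t w_{\rm H}=-\frac{(w_{\rm H}-w_{\rm L})(1-x)}{\bar w(x)},\qquad 1-\eta_t w_{\rm L}=\frac{(w_{\rm H}-w_{\rm L})x}{\bar w(x)}.
\end{equation*}
Therefore
\begin{equation*}
E_{t+1}=E_t\,\frac{(w_{\rm H}-w_{\rm L})^2\big[x(1-x)^2+(1-x)x^2\big]}{\bar w(x)^2}=C(x)\,E_t,
\end{equation*}
and the new share is $x(1-x)^2/\big(x(1-x)(1-x+x)\big)=1-x$. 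This also re-derives the alternation in Proposition~\ref{prop:gd_sharp}.

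\textbf{Step 3 (per-step contraction of $\Phi$).} Combining Steps~1 and~2,
\begin{equation*}
\frac{\Phi_{t+1}^{\gd}}{\Phi_t^{\gd}}=C(x)\,\frac{(1-x)w_{\rm L}+xw_{\rm H}}{xw_{\rm L}+(1-x)w_{\rm H}}.
\end{equation*}
This is the step I expect to be the crux: showing that this ratio equals $\sqrt{C(x)C(1-x)}$, independent of $x$ once $x\in\{p,1-p\}$.

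Note that $\bar w(1-x)=xw_{\rm L}+(1-x)w_{\rm H}$ and $\bar w(x)=(1-x)w_{\rm L}+xw_{\rm H}$. The numerators of $C(x)$ and $C(1-x)$ coincide, so
\begin{equation*}
\frac{C(x)}{C(1-x)}=\frac{\bar w(1-x)^2}{\bar w(x)^2}.
\end{equation*}
The ratio in Step~3 is $C(x)\,\bar w(x)/\bar w(1-x)$, which is therefore $C(x)\sqrt{C(1-x)/C(x)}=\sqrt{C(x)C(1-x)}=\sqrt R$.

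Since the shares alternate between $p$ and $1-p$, and $C(p)C(1-p)=C(1-p)C(p)$, the factor $\sqrt R$ is the same at every step. Iterating gives $\Phi_T^{\gd}=\Phi_0^{\gd}R^{T/2}$. Under the non-degeneracy convention of the proof, we need $x\in(0,1)$, which holds here because $\sigma_{\rm H},\sigma_{\rm L}>0$, so all the divisions above are legitimate.
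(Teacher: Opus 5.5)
Your proposal is correct and follows the paper's proof essentially step for step. You write $\Phi_t^{\gd}$ in terms of the total residual energy and the high-curvature share, use $E_{t+1}=C(P_t)E_t$ with $P_{t+1}=1-P_t$, and show that the ratio $\Phi_{t+1}^{\gd}/\Phi_t^{\gd}=C(x)\,\bar w(x)/\bar w(1-x)$ equals $\sqrt{C(x)C(1-x)}$, which is invariant under $x\mapsto 1-x$. The differences are cosmetic: the paper simplifies this ratio to the symmetric form $(w_{\rm H}-w_{\rm L})^2x(1-x)/\bigl(\bar w(x)\bar w(1-x)\bigr)$ and squares it, where you use $C(x)/C(1-x)=\bar w(1-x)^2/\bar w(x)^2$, and you re-derive the energy recursion instead of citing Proposition~\ref{prop:gd_sharp}.
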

The proof is in Appendix~\ref{app:gd_gap}. This result shows that GD also converges linearly. One-step optimization decreases the suboptimality with $\sqrt R$ multiplicatively. With the suboptimalities for both optimizers in closed form, it remains to show that Muon's gap is strictly smaller at every horizon. We achieve this by two steps: 
\begin{itemize}[leftmargin=2em]
    \item After the first-step optimization, Muon has a lower loss than GD.
    \item During the whole optimization process, the multiplicative coefficient of Muon $\Gamma^2$ is smaller than that of GD $\sqrt{R}$.
\end{itemize}
The first step is achieved by the following proposition.
\begin{proposition}
[One-step terminal-gap comparison]\label{prop:onestep_gap}
Under Assumptions~\ref{ass:block-curvature}--\ref{ass:main-alignment} and $\rho+1>1/\alpha>1+\sigma_{\rm H}/\sigma_{\rm L}$ with $\rho=w_{\rm H}/w_{\rm L}$, we have
\begin{equation*}
    \Phi_1^{\muon}<\Phi_1^{\gd}.
\end{equation*}
\end{proposition}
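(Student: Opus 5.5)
The plan is to skip the closed forms of Propositions~\ref{prop:muon_gap} and~\ref{prop:gd_gap} and compare the two first exact line-search steps from $Y_0=0$ directly. The point is that the starting gap is the same for both methods: $\Phi_0^{\muon}=\Phi_0^{\gd}=\Phi_0$. So it suffices to show that Muon's first-step decrease is strictly larger.

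By Lemma~\ref{lem:line_stepsize}, with $\mathcal R_0=G$, an exact line-search step along $Z$ decreases $\mathcal Q$ by $\langle G,Z\rangle^2/(2\langle Z,\mathcal H[Z]\rangle)$. I would evaluate this in the mode basis.
\begin{itemize}[leftmargin=1em]
\item For Muon, $Z_0^{\muon}=\sum_i M_i$, so the decrease is $(m\sigma_{\rm H}+(q-m)\sigma_{\rm L})^2/\bigl(2(mw_{\rm H}+(q-m)w_{\rm L})\bigr)$.
\item For GD, $Z_0^{\gd}=G$, so the decrease is $(m\sigma_{\rm H}^2+(q-m)\sigma_{\rm L}^2)^2/\bigl(2(mw_{\rm H}\sigma_{\rm H}^2+(q-m)w_{\rm L}\sigma_{\rm L}^2)\bigr)$.
\end{itemize}
Divide by $q$ and rescale so that $\sigma_{\rm L}=w_{\rm L}=1$, with $\tau=\sigma_{\rm H}/\sigma_{\rm L}$ and $\beta=1-\alpha$. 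The claim $\Phi_1^{\muon}<\Phi_1^{\gd}$ then reduces, after cross-multiplying the positive denominators, to
\[
F(\rho):=(\alpha\tau+\beta)^2(\alpha\rho\tau^2+\beta)-(\alpha\tau^2+\beta)^2(\alpha\rho+\beta)>0 .
\]

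I would first observe that $F$ is affine in $\rho$, with slope
\[
\alpha\bigl[(\alpha\tau^2+\beta\tau)^2-(\alpha\tau^2+\beta)^2\bigr]=\alpha\beta(\tau-1)(2\alpha\tau^2+\beta\tau+\beta)>0,
\]
which is positive since $\tau>1$. The hypothesis $\rho+1>1/\alpha$ says exactly that $\rho>\rho_0:=\beta/\alpha$. Hence it suffices to show $F(\rho_0)\ge 0$, and strictness then comes from the positive slope. Substituting $\alpha\rho_0=\beta$ gives
\[
F(\rho_0)=\beta\bigl[(\alpha\tau+\beta)^2(\tau^2+1)-2(\alpha\tau^2+\beta)^2\bigr].
\]
The bracket vanishes at $\tau=1$, since $\alpha+\beta=1$. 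So I would factor out $(\tau-1)$ and study the remaining polynomial in $\tau$, using the second hypothesis $1/\alpha>1+\tau$, i.e.\ $\beta>\alpha\tau$. That hypothesis says the low-curvature group carries more total gradient mass $\sum_i\sigma_i$ than the high group, and it is what should keep the Muon numerator large relative to GD's. Concretely, I would rewrite the bracket in terms of $s=\alpha\tau\in(0,\beta)$ and check that the quotient by $(\tau-1)$ is nonnegative on this range, for instance by monotonicity in $\beta$ at fixed $s$.

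The main obstacle is this last sign check: the cubic-to-quartic expression in $(\alpha,\tau)$ at the boundary $\rho_0$. It is not obviously signed, and the threshold may need the full strength of both inequalities. If $F(\rho_0)\ge 0$ fails on part of the allowed region, the fallback is to avoid evaluating exactly at $\rho_0$. Instead I would bound $F(\rho)$ using the slope term together with $\rho>\beta/\alpha$ and $\beta>\alpha\tau$ jointly. For example, I would substitute $\rho\ge\beta/\alpha$ only into the part of $F$ that needs it and use $\beta\ge\alpha\tau$ to dominate the negative terms term by term.
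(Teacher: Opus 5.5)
Your proposal is correct, and the sign check you flag as the main obstacle goes through, so the fallback is not needed. Pulling out the factor $(\tau-1)$,
\[
(\alpha\tau+\beta)^2(\tau^2+1)-2(\alpha\tau^2+\beta)^2=(\tau-1)\bigl[(\tau+1)(\beta^2-\alpha^2\tau^2)+2\alpha\beta\tau(\tau-1)\bigr].
\]
In your variable $s=\alpha\tau\in(0,\beta)$, the square bracket is $(\tau+1)(\beta^2-s^2)+2\beta s(\tau-1)>0$. Hence $F(\rho_0)>0$, and $F(\rho)>F(\rho_0)>0$ by the positive slope.

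The paper arrives at the same polynomial by different bookkeeping. It writes closed forms for $\Phi_1$ of both methods (the $T=1$ cases of Propositions~\ref{prop:muon_gap} and~\ref{prop:gd_gap}) and subtracts. Up to a positive factor, this gives the bracket $L(\rho)=\alpha\tau^2(2\rho-\tau-1)+\beta\bigl(\rho(\tau+1)-2\tau\bigr)$ of \eqref{eq:diff_1}; in fact your $F(\rho)=\alpha\beta(\tau-1)L(\rho)$. The paper then signs the two terms of $L$ separately, using only $1<\tau<\rho$, which follows by chaining the two hypotheses.

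Your version is leaner in two ways. It never computes $\Phi_1$, only the two first-step decreases $\langle G,Z\rangle^2/(2\langle Z,\mathcal H[Z]\rangle)$. It also exposes the exact threshold through the affine dependence on $\rho$: the root is $\rho^{*}=\tau(\alpha\tau^2+\alpha\tau+2\beta)/(2\alpha\tau^2+\beta\tau+\beta)$. The cost is that evaluating at $\rho_0=\beta/\alpha$ forces you to invoke $\beta>\alpha\tau$ as a separate ingredient. Evaluating at $\rho=\tau$ instead gives $F(\tau)=\alpha\beta\tau(\tau-1)^2(\alpha\tau+\beta)>0$ at once. Since $\rho>\beta/\alpha>\tau$, this finishes the proof in one line. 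It also recovers the paper's finding that $\rho>\tau$ alone suffices, because $\tau-\rho^{*}=\tau(\tau-1)(\alpha\tau+\beta)/(2\alpha\tau^2+\beta\tau+\beta)>0$.
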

The proof is in Appendix~\ref{app:onestep_gap}, where we directly calculate the parameters after first optimization step and compare the suboptimality. The second step is achieved by the following proposition.

\begin{proposition}
[Muon contracts faster after the first step]\label{prop:muon_contract}
Under Assumptions~\ref{ass:block-curvature}--\ref{ass:main-alignment} and $\rho+1>1/\alpha>1+\sigma_{\rm H}/\sigma_{\rm L}$ with $\rho=w_{\rm H}/w_{\rm L}$, we have
\begin{equation*}
    \sqrt R>\Gamma^2,
\end{equation*}
where $\Gamma=|mw_{\rm H}-(q-m)w_{\rm L}|/(mw_{\rm H}+(q-m)w_{\rm L}).$
\end{proposition}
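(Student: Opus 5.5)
The plan is to reduce $\sqrt R>\Gamma^2$ to an explicit scalar inequality in $(\alpha,\rho,\tau)$, with $\tau=\sigma_{\rm H}/\sigma_{\rm L}$, and then close it using the two halves of the hypothesis $\rho+1>1/\alpha>1+\tau$ separately. Both sides are scale-invariant in the curvatures, so I will normalize $w_{\rm L}=1$, $w_{\rm H}=\rho$, and write $\delta=\rho-1>0$ and $\beta=1-\alpha$.

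\emph{Step 1: simplify $\sqrt R$.} With this normalization, $C(x)=\delta^2x(1-x)/(1+\delta x)^2$. Hence $\sqrt R=\delta^2 s/\big((1+\delta p)(1+\delta(1-p))\big)$, where $s=p(1-p)$. The denominator expands to $1+\delta+\delta^2 s=\rho+\delta^2 s$, which gives
\begin{equation*}
\sqrt R=\frac{\delta^2 s}{\rho+\delta^2 s}.
\end{equation*}

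\emph{Step 2: simplify $\Gamma$.} Dividing numerator and denominator by $q$ gives $\Gamma=|\alpha\rho-\beta|/(\alpha\rho+\beta)$. The hypothesis $\rho+1>1/\alpha$ is exactly $\alpha\rho>\beta$, so the absolute value can be dropped: $\Gamma=(\alpha\rho-\beta)/(\alpha\rho+\beta)$.

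\emph{Step 3: reduce to a clean inequality.} Cross-multiplying $\delta^2 s(\alpha\rho+\beta)^2>(\rho+\delta^2 s)(\alpha\rho-\beta)^2$ and using $(\alpha\rho+\beta)^2-(\alpha\rho-\beta)^2=4\alpha\beta\rho$, the claim becomes $4\alpha\beta\delta^2 s>(\alpha\rho-\beta)^2$. Since both $\delta$ and $\alpha\rho-\beta$ are positive, this is equivalent to
\begin{equation*}
2\sqrt{\alpha\beta s}\,(\rho-1)>\alpha\rho-\beta.
\end{equation*}

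\emph{Step 4: the key bound $4\beta s>\alpha$.} This is the step I expect to be the main obstacle. The difficulty is that $p$ may lie on either side of $1/2$, so I cannot bound $s=p(1-p)$ by monotonicity. Instead I substitute directly: $p=\alpha\tau^2/(\alpha\tau^2+\beta)$, so $s=\alpha\beta\tau^2/(\alpha\tau^2+\beta)^2$. Then $4\beta s>\alpha$ becomes $4\beta^2\tau^2>(\alpha\tau^2+\beta)^2$, i.e.\ $2\beta\tau>\alpha\tau^2+\beta$, i.e.\ $\alpha/\beta<(2\tau-1)/\tau^2$.

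The second hypothesis $1/\alpha>1+\tau$ gives $\alpha/\beta<1/\tau$. Moreover $1/\tau\le(2\tau-1)/\tau^2$ is equivalent to $\tau\ge1$, which holds by Assumption~\ref{ass:main-alignment}. Therefore $4\beta s>\alpha$, and taking square roots, $2\sqrt{\alpha\beta s}>\alpha$.

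\emph{Step 5: conclude.} Multiplying by $\rho-1>0$ gives $2\sqrt{\alpha\beta s}(\rho-1)>\alpha\rho-\alpha$. Since $\alpha<1/2$ implies $\alpha<\beta$, we have $\alpha\rho-\alpha>\alpha\rho-\beta$. Chaining these yields the inequality of Step 3, and hence $\sqrt R>\Gamma^2$.
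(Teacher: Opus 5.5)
Your proof is correct, and its skeleton matches the paper's: write $\sqrt R$ as an increasing function of $s=p(1-p)$, drop the absolute value in $\Gamma$ via $\alpha\rho>\beta$, and lower-bound $s$ using $\alpha\tau<\beta$. The difference is the threshold for $s$ at which you split the final inequality.

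The paper proves $s>\alpha\beta$ by showing $p\in(\alpha,1-\alpha)$. It then uses that $x(1-x)$ is concave and symmetric about $1/2$, so it is minimized on $[\alpha,1-\alpha]$ at the endpoints. This is how the paper handles the issue you flagged about $p$ possibly exceeding $1/2$. It then checks $F(\alpha\beta)>\Gamma^2$, with $F(v)=(\rho-1)^2v/(\rho+(\rho-1)^2v)$, via the factorization $2\alpha\beta(\rho-1)-(\alpha\rho-\beta)=(1-2\alpha)(1+\alpha(\rho-1))>0$. That computation is exactly your Step~3 criterion evaluated at $s=\alpha\beta$.

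You instead derive the criterion $4\alpha\beta(\rho-1)^2 s>(\alpha\rho-\beta)^2$ for general $s$ and aim for the smaller threshold $s>\alpha/(4\beta)$ (smaller because $\beta>1/2$). With that threshold, the closing step reduces to $\alpha<\beta$.

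What your version buys:
\begin{itemize}
\item It is shorter.
\item It replaces the interval and concavity argument with a direct substitution.
\item It only uses $\alpha\tau^2<\beta(2\tau-1)$, which is weaker than $\alpha\tau<\beta$ when $\tau>1$.
\end{itemize}

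What the paper's threshold buys is interpretability. The value $\alpha\beta$ is $p(1-p)$ at $\tau=1$, where GD's first step coincides with Muon's. So the paper's split separates a $\tau$-free comparison, namely that unbiased GD already contracts more slowly than Muon, from the effect of gradient bias. For $\alpha\tau<\beta$, that bias only raises $p(1-p)$ above this baseline.
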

The proof is in Appendix~\ref{app:muon_contract}, which follows from the direct calculations.
With these results, we prove that Muon has a lower suboptimality than GD as follows. From Propositions~\ref{prop:muon_gap} and~\ref{prop:gd_gap}, the suboptimalities factor as
\begin{equation*}
    \Phi_T^{\muon}=\Phi_1^{\muon}\,\Gamma^{2(T-1)},
    \qquad
    \Phi_T^{\gd}=\Phi_1^{\gd}\,R^{(T-1)/2}.
\end{equation*}
Proposition~\ref{prop:onestep_gap} gives $\Phi_1^{\muon}<\Phi_1^{\gd}$ and Proposition~\ref{prop:muon_contract} gives $\Gamma^2<\sqrt{R}$. Therefore, for every $T\ge1$,
\begin{equation*}
    \Phi_T^{\muon}=\Phi_1^{\muon}\,\Gamma^{2(T-1)}<\Phi_1^{\gd}\,R^{(T-1)/2}=\Phi_T^{\gd}.
\end{equation*}
Since $\Phi_T^{\opt}=\mathcal Q(Y_T^{\opt})-\mathcal Q(Y^\star)$, this implies 
\begin{equation*}
\mathcal Q(Y_T^{\muon})<\mathcal Q(Y_T^{\gd}).
\end{equation*}

Therefore, we conclude the proof of Theorem~\ref{thm:sharpness}.

\section{Proofs of Supporting Propositions and Lemmas}
\subsection{Proof of Lemma~\ref{lem:line_stepsize}}\label{app:line_stepsize}
By the definition of the quadratic loss function $\mathcal Q$, we have
\begin{equation*}
\begin{aligned}
    \mathcal Q(Y_t+\eta Z)
    &=
    \mathcal L(W_{0})
    -
    \langle G,Y_t+\eta Z\rangle
    +
    \frac12
    \langle Y_t+\eta Z,\mathcal H[Y_t+\eta Z]\rangle.
\end{aligned}
\end{equation*}
Expanding the linear term gives
\begin{equation*}
    -\langle G,Y_t+\eta Z\rangle
    =
    -\langle G,Y_t\rangle
    -
    \eta\langle G,Z\rangle.
\end{equation*}
Since $\mathcal H$ is linear and self-adjoint under the Frobenius inner product, the quadratic term expands as
\begin{equation*}
\begin{aligned}
    \frac12\cdot
    \langle Y_t+\eta Z,\mathcal H[Y_t+\eta Z]\rangle
    &=
    \frac12\cdot\langle Y_t,\mathcal H[Y_t]\rangle
    +
    \eta\cdot\langle Z,\mathcal H[Y_t]\rangle
    +
    \frac{\eta^2}{2}\cdot\langle Z,\mathcal H[Z]\rangle.
\end{aligned}
\end{equation*}
Combining and grouping the terms linear in $\eta$ yields
\begin{equation*}
\begin{aligned}
    \mathcal Q(Y_t+\eta Z)
    &=
    \mathcal Q(Y_t)
    -
    \eta\langle G,Z\rangle
    +
    \eta\langle \mathcal H[Y_t],Z\rangle
    +
    \frac{\eta^2}{2}\langle Z,\mathcal H[Z]\rangle\\
    &=
    \mathcal Q(Y_t)
    -
    \eta\langle G-\mathcal H[Y_t],Z\rangle
    +
    \frac{\eta^2}{2}\langle Z,\mathcal H[Z]\rangle\\
    &=
    \mathcal Q(Y_t)
    -
    \eta\langle \mathcal R_t,Z\rangle
    +
    \frac{\eta^2}{2}\langle Z,\mathcal H[Z]\rangle.
\end{aligned}
\end{equation*}
Hence the decrease along $Z$ is
\begin{equation*}
    \mathcal Q(Y_t)-\mathcal Q(Y_t+\eta Z)
    =
    \eta\langle \mathcal R_t,Z\rangle
    -
    \frac{\eta^2}{2}\langle Z,\mathcal H[Z]\rangle.
\end{equation*}
Maximizing the per-step decrease $\eta\langle \mathcal R_t,Z\rangle-\frac{\eta^2}{2}\langle Z,\mathcal H[Z]\rangle$ over $\eta$ gives 
\begin{equation*}
\eta_t(Z)=\max\bigg\{\frac{\langle \mathcal R_t,Z\rangle}{\langle Z,\mathcal H[Z]\rangle},0\bigg\}.
\end{equation*}
Thus, we conclude the proof of Lemma~\ref{lem:line_stepsize}.

\subsection{Proof of Proposition~\ref{prop:gd_sharp}}\label{app:gd_sharp}
Let $A_t=\sum_{i=1}^m (r_{i,t}^{\gd})^2$ and 
$B_t=\sum_{i=m+1}^q (r_{i,t}^{\gd})^2$ denote the 
residual energy in the high- and low-curvature groups, respectively. With $S_t=A_t+B_t$, the high-curvature energy share $P_t^{\gd}$ is $P_t^{\gd}=A_t/S_t$.

We first derive the \ac{nds} expression. Since the GD direction is the same as that of the residual itself, i.e.,  
$Z_t^{\gd}=G-\mathcal H[Y_t^{\gd}]=\sum_{i=1}^q r_{i,t}^{\gd}M_i$. 
By $\mathcal H[M_i]=w_iM_i$ and the 
Frobenius orthonormality $\langle M_i,M_j\rangle=\delta_{ij}$, 
we have that
\begin{equation*}
    \|Z_t^{\gd}\|_F^2
    =\sum_{i=1}^q (r_{i,t}^{\gd})^2
    =A_t+B_t=S_t,\quad 
    \langle Z_t^{\gd},\mathcal H[Z_t^{\gd}]\rangle
    =\sum_{i=1}^q w_i(r_{i,t}^{\gd})^2
    =w_{\rm H}A_t+w_{\rm L}B_t.
\end{equation*}
Their ratio gives
\begin{equation*}
    \Sc_F(Z_t^{\gd})
    =\frac{w_{\rm H}A_t+w_{\rm L}B_t}{S_t}
    =w_{\rm H}\,P_t^{\gd}+w_{\rm L}(1-P_t^{\gd}).
\end{equation*}

We then derive the Energy-share recursion. By Lemma~\ref{lem:line_stepsize}, the exact line-search step size is
\begin{equation*}
    \eta_t^{\gd}
    =\max\bigg\{\frac{\langle \mathcal R_t^{\gd},Z_t^{\gd}\rangle}
    {\langle Z_t^{\gd},\mathcal H[Z_t^{\gd}]\rangle},0\bigg\}.
\end{equation*}
Since the GD direction equals the residual, 
the numerator is 
$\langle \mathcal R_t^{\gd},Z_t^{\gd}\rangle
=\|\mathcal R_t^{\gd}\|_F^2=S_t$ 
and the denominator is 
$\langle Z_t^{\gd},\mathcal H[Z_t^{\gd}]\rangle
=\Sc_F(Z_t^{\gd})\cdot S_t$, 
giving $\eta_t^{\gd}=1/\Sc_F(Z_t^{\gd})$.
With $y_{i,t} = \langle M_i, Y_t^{\gd}\rangle$ ,the coordinate update is 
$y_{i,t+1}^{\gd}=y_{i,t}^{\gd}+\eta_t^{\gd}\,r_{i,t}^{\gd}$, 
so the residual at the next step is
\begin{equation*}
    r_{i,t+1}^{\gd}
    =\sigma_i-w_i y_{i,t+1}^{\gd}
    =r_{i,t}^{\gd}-w_i\,\eta_t^{\gd}\,r_{i,t}^{\gd}
    =r_{i,t}^{\gd}\left(1-\frac{w_i}{\Sc_F(Z_t^{\gd})}\right).
\end{equation*}
Evaluating the contraction factor $w_i/\Sc_F(Z_t^{\gd})$ for each group gives
\begin{equation*}
    1-\frac{w_{\rm H}}{\Sc_F(Z_t^{\gd})}
    =-\frac{(w_{\rm H}-w_{\rm L})(1-P_t^{\gd})}{\Sc_F(Z_t^{\gd})},
    \qquad
    1-\frac{w_{\rm L}}{\Sc_F(Z_t^{\gd})}
    =\frac{(w_{\rm H}-w_{\rm L})P_t^{\gd}}{\Sc_F(Z_t^{\gd})}.
\end{equation*}
The opposite signs reveal the zig-zag mechanism that GD overshoots 
the high-curvature modes (negative factor) while undershooting 
the low-curvature ones (positive factor), causing the energy to 
alternate between groups. Squaring and summing within each group yields
\begin{equation*}
    A_{t+1}=A_t
    \frac{(w_{\rm H}-w_{\rm L})^2(1-P_t^{\gd})^2}{\Sc_F(Z_t^{\gd})^2},
    \qquad
    B_{t+1}=B_t
    \frac{(w_{\rm H}-w_{\rm L})^2(P_t^{\gd})^2}{\Sc_F(Z_t^{\gd})^2}.
\end{equation*}
Using $A_t=P_t^{\gd}S_t$ and $B_t=(1-P_t^{\gd})S_t$, we obtain
\begin{equation*}
    P_{t+1}^{\gd}
    =\frac{P_t^{\gd}(1-P_t^{\gd})^2}
    {P_t^{\gd}(1-P_t^{\gd})^2+(1-P_t^{\gd})(P_t^{\gd})^2}
    =1-P_t^{\gd}.
\end{equation*}
At initialization $r_{i,0}^{\gd}=\sigma_i$, so 
$P_0^{\gd}=m\sigma_{\rm H}^2/(m\sigma_{\rm H}^2+(q-m)\sigma_{\rm L}^2)=p$. 
The alternation $P_t^{\gd}\in\{p,1-p\}$ then gives the stated 
oscillation of $\Sc_F(Z_t^{\gd})$. Thus, we conclude the proof of Proposition~\ref{prop:gd_sharp}.

\subsection{Proof of Proposition~\ref{prop:muon_gap}}\label{app:muon_gap}
We start by describing the suboptimality of any parameter $Y_t$.
\begin{lemma}[Residual form of the suboptimality]\label{lem:terminal_gap_residual}
Let $Y^\star=\sum_{i=1}^q y_{i}^\star M_i$ with $y_i^\star=\sigma_i/w_i$ for $i\in[q]$ be the minimizer of $\mathcal Q$ in the positive subspace. For any iterate $Y_t=\sum_{i=1}^q y_{i,t}M_i$ with residual coefficients $r_{i,t}=\sigma_i-w_i y_{i,t}$, the suboptimality is
\begin{equation*}
    \Phi_t=\mathcal Q(Y_t)-\mathcal Q(Y^\star)=\frac12\sum_{i=1}^q\frac{r_{i,t}^2}{w_i}.
\end{equation*}
\end{lemma}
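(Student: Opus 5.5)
The plan is to work entirely in the coordinates of the orthonormal modes $\{M_i\}_{i=1}^q$ and reduce $\mathcal Q$ to a separable sum of scalar quadratics. By Step~1, $Y_t\in\mathcal M_q$, so we may write $Y_t=\sum_i y_{i,t}M_i$. Frobenius orthonormality together with $G=\sum_i\sigma_iM_i$ and $\mathcal H[M_i]=w_iM_i$ gives
\begin{equation*}
    \mathcal Q(Y_t)=\mathcal L(W_0)-\sum_{i=1}^q\sigma_i y_{i,t}+\frac12\sum_{i=1}^q w_i y_{i,t}^2 .
\end{equation*}
Each $w_i>0$ on the positive-mode subspace, so every summand is a strictly convex scalar quadratic in $y_{i,t}$. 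It is minimized at $y_i^\star=\sigma_i/w_i$. This confirms that $Y^\star=\sum_i (\sigma_i/w_i)M_i$ is the unique minimizer of $\mathcal Q$ over $\mathcal M_q$, with $\mathcal Q(Y^\star)=\mathcal L(W_0)-\frac12\sum_i\sigma_i^2/w_i$.

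Next, I complete the square mode by mode:
\begin{equation*}
    -\sigma_i y+\tfrac12 w_i y^2=\tfrac12 w_i\bigl(y-\sigma_i/w_i\bigr)^2-\tfrac{\sigma_i^2}{2w_i}.
\end{equation*}
Subtracting $\mathcal Q(Y^\star)$ cancels the constants, which leaves $\Phi_t=\frac12\sum_i w_i(y_{i,t}-y_i^\star)^2$. The last step is to substitute $y_{i,t}-y_i^\star=-(\sigma_i-w_iy_{i,t})/w_i=-r_{i,t}/w_i$. This gives $\frac12\sum_i w_i\, r_{i,t}^2/w_i^2=\frac12\sum_i r_{i,t}^2/w_i$, as claimed.

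An equivalent coordinate-free route uses Lemma~\ref{lem:line_stepsize}. Take $Z=Y^\star-Y_t$ and $\eta=1$, and note that $\mathcal R_t=\mathcal H[Y^\star-Y_t]$. This gives $\mathcal Q(Y_t)-\mathcal Q(Y^\star)=\frac12\langle Y_t-Y^\star,\mathcal H[Y_t-Y^\star]\rangle$, which can then be evaluated in the mode basis.

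There is no real obstacle here. The only point requiring care is that the identity is restricted to $\mathcal M_q$, where $\mathcal H$ is invertible with eigenvalues $w_i>0$. Step~1 already guarantees that the iterates and residuals never leave this subspace, so dividing by $w_i$ is legitimate.
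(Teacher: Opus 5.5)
Your proof is correct and follows the paper's argument: expand $\mathcal Q$ in the orthonormal mode basis, complete the square coordinate by coordinate to obtain $\frac12\sum_i w_i(y_{i,t}-y_i^\star)^2$, and substitute $y_{i,t}-y_i^\star=-r_{i,t}/w_i$. Your coordinate-free aside via Lemma~\ref{lem:line_stepsize} is also valid, since $\mathcal H[Y^\star]=G$ gives $\mathcal R_t=\mathcal H[Y^\star-Y_t]$.
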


\begin{proof}
In positive mode, the quadratic loss of $Y=\sum_{i=1}^q y_{i}M_i$ can be written as 
\begin{equation*}
\mathcal Q(Y)=\mathcal L(W_{0})-\sum_{i=1}^q\sigma_i y_i+\frac12\sum_{i=1}^q w_i y_i^2.
\end{equation*}
The first-order condition gives $y_i^\star=\sigma_i/w_i$. Completing the square mode by mode yields
\begin{equation*}
    \mathcal Q(Y)-\mathcal Q(Y^\star)
    =\frac12\sum_{i=1}^q w_i\left(y_i-\frac{\sigma_i}{w_i}\right)^2
    =\frac12\sum_{i=1}^q\frac{(\sigma_i-w_i y_i)^2}{w_i}
    =\frac12\sum_{i=1}^q\frac{r_i^2}{w_i}.
\end{equation*}
Thus, we conclude the proof of Lemma~\ref{lem:terminal_gap_residual}.
\end{proof}
This residual form allows us to track the suboptimality through the scalar residual coefficients alone.  Then we derive the dynamics of $r_{i,t}^{\muon}$ via the following proposition.
\begin{proposition}   
[Muon residual dynamics]\label{prop:muon_residual_dynam}
Define the scaled residual $c_{i,t}^{\muon}= |r_{i,t}^{\muon}|/w_i$. With Assumption~\ref{ass:main-ordering}, there exist $c_{{\rm H},t}$ and $c_{{\rm L},t}$ such that
\begin{equation*}
    c_{i,t}^{\muon}=c_{{\rm H},t},
    \quad i=1,\ldots,m,
    \qquad
    c_{i,t}^{\muon}=c_{{\rm L},t},
    \quad i=m+1,\ldots,q.
\end{equation*}
Let $a=mw_{\rm H}, b=(q-m)w_{\rm L}$. Then the exact line-search stepsize of Muon is
\begin{equation*}
    \eta_t^{\muon}
    =
    \frac{
        a c_{{\rm H},t}
        +
        b c_{{\rm L},t}
    }{
        a+b
    },
\end{equation*}
and the scaled residuals contract as
\begin{equation*}
    c_{{\rm H},t+1}
    =
    |c_{{\rm H},t}-\eta_t^{\muon}|,
    \qquad
    c_{{\rm L},t+1}
    =
    |c_{{\rm L},t}-\eta_t^{\muon}|.
\end{equation*}
\end{proposition}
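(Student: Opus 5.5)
We argue by induction on $t$, working in the residual coordinates from Step~1. The base case is $t=0$. Then $r_{i,0}^{\muon}=\sigma_i$, so $c_{i,0}^{\muon}=\sigma_i/w_i$, which equals $\sigma_{\rm H}/w_{\rm H}$ on the high group and $\sigma_{\rm L}/w_{\rm L}$ on the low group. So the two-group structure of the scaled residuals holds at $t=0$ by Assumptions~\ref{ass:main-ordering}--\ref{ass:main-alignment}.

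For the inductive step we also need sign homogeneity within each group: all $r_{i,t}^{\muon}$ with $i\le m$ share a common sign $s_{{\rm H},t}$, and all with $i>m$ share a common sign $s_{{\rm L},t}$. This holds at $t=0$ since every $\sigma_i>0$. It is preserved because each update acts identically on every mode of a group.

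Suppose the structure holds at step $t$. From Step~1, $Z_t^{\muon}=\sum_i \sgn(r_{i,t}^{\muon})M_i$. We evaluate the two ingredients of Lemma~\ref{lem:line_stepsize}. First, $\langle \mathcal R_t,Z_t^{\muon}\rangle=\sum_i |r_{i,t}^{\muon}|=\sum_i w_i c_{i,t}^{\muon}=a c_{{\rm H},t}+b c_{{\rm L},t}$. Second, $\langle Z_t^{\muon},\mathcal H[Z_t^{\muon}]\rangle=\sum_i w_i=a+b$, as already computed in \eqref{eq:nds_m}. The numerator is nonnegative, so the $\max$ with $0$ is inactive, and Lemma~\ref{lem:line_stepsize} gives the stated $\eta_t^{\muon}$.

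Next we compute the residual update. Since $y_{i,t+1}=y_{i,t}+\eta_t^{\muon}\sgn(r_{i,t}^{\muon})$, we get
\[
r_{i,t+1}^{\muon}=r_{i,t}^{\muon}-w_i\eta_t^{\muon}\sgn(r_{i,t}^{\muon})=\sgn(r_{i,t}^{\muon})\,w_i\bigl(c_{i,t}^{\muon}-\eta_t^{\muon}\bigr).
\]
Dividing the absolute value by $w_i>0$ yields $c_{i,t+1}^{\muon}=|c_{i,t}^{\muon}-\eta_t^{\muon}|$. This value depends on $i$ only through its group. The new sign $\sgn(r_{i,t})\,\sgn(c_{{\rm G},t}-\eta_t^{\muon})$ is also constant within each group ${\rm G}\in\{{\rm H},{\rm L}\}$. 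Together these close the induction.

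The main obstacle is justifying that spectral normalization of $\sum_i r_{i,t}M_i$ returns exactly $\sum_i \sgn(r_{i,t})M_i$. Here we use that $M_i=u_{\pi(i)}v_{\pi(i)}^\top$ with distinct indices $\pi(i)$ and orthonormal columns of $U,V$. We can therefore write $\sum_i r_i M_i=\sum_i |r_i|\,(\sgn(r_i)u_{\pi(i)})v_{\pi(i)}^\top$, which is a valid (compact) SVD whenever every $r_i\neq 0$. The polar factor is then $\sum_i \sgn(r_i)u_{\pi(i)}v_{\pi(i)}^\top$, and it is well defined even when singular values repeat. This is why we need the paper's non-degeneracy convention. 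If some group residual hits zero, that group drops out of the update and the process stops, consistent with the stopping convention.
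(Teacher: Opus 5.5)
Your proof is correct and follows the paper's argument. You evaluate the line-search numerator as $\sum_i |r_{i,t}^{\muon}|$ and the denominator as $a+b$, then take absolute values to get $|r_{i,t+1}^{\muon}|=w_i\,|c_{i,t}^{\muon}-\eta_t^{\muon}|$, exactly as the paper does; your explicit induction and compact-SVD justification of the spectral normalization make rigorous what the paper asserts "by symmetry." The within-group sign homogeneity you track is harmless but not needed, since every quantity involved depends only on $|r_{i,t}^{\muon}|$.
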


\begin{proof}
For Muon, the update direction is written as $Z_t^{\muon}=\sum_{i=1}^q \sgn(r_{i,t}^{\muon})M_i$. Thus, the updates along different modes $M_i$ are symmetric within each group of Assumption~\ref{ass:main-ordering}. Thus, $c_{i,t}^{\muon}=c_{{\rm H},t}$ for $ i=1,\ldots,m$, and $c_{i,t}^{\muon}=c_{{\rm L},t}$, for $i=m+1,\ldots,q$. By the exact line-search identity, the stepsize is shown as
\begin{equation*}
    \eta_t^{\muon}
    =
    \frac{
        \langle \mathcal R_t^{\muon},Z_t^{\muon}\rangle
    }{
        \langle Z_t^{\muon},\mathcal H[Z_t^{\muon}]\rangle
    }.
\end{equation*}
We then calculate the numerator and denominator, respectively. The numerator part is
\begin{equation*}
\langle \mathcal R_t^{\muon},Z_t^{\muon}\rangle
    =
    \bigg\langle
        \sum_{i=1}^q r_{i,t}^{\muon}M_i,
        \sum_{i=1}^q \sgn(r_{i,t}^{\muon})M_i
    \bigg\rangle=
    \sum_{i=1}^q |r_{i,t}^{\muon}|.
\end{equation*}
The denominator part is written as $\langle Z_t^{\muon},\mathcal H[Z_t^{\muon}]\rangle=\sum_{i=1}^q w_i=a+b$.
Thus, the stepsize can be rewritten as $\eta_t^{\muon}=\sum_{i=1}^q |r_{i,t}^{\muon}|/(a+b)$. Since $|r_{i,t}^{\muon}|=w_i c_{i,t}^{\muon}$, we get
\begin{equation*}
    \eta_t^{\muon}
    =
    \frac{\sum_{i=1}^q w_i c_{i,t}^{\muon}}{a+b}
    =
    \frac{
        a c_{{\rm H},t}
        +
        b c_{{\rm L},t}
    }{
        a+b
    }.
\end{equation*}

Then the update of residual $r_{i,t}^{\muon}$ is
\begin{equation*}
    r_{i,t+1}^{\muon}
    =
    r_{i,t}^{\muon}
    -
    w_i\eta_t^{\muon}\sgn(r_{i,t}^{\muon}).
\end{equation*}
Taking absolute values gives
\begin{equation*}
\begin{aligned}
|r_{i,t+1}^{\muon}|=\left||r_{i,t}^{\muon}|-w_i\eta_t^{\muon}\right|=w_i\left|c_{i,t}^{\muon}-\eta_t^{\muon}\right|.
\end{aligned}
\end{equation*}
Dividing by $w_i$ yields
\begin{equation*}
    c_{i,t+1}^{\muon}
    =
    |c_{i,t}^{\muon}-\eta_t^{\muon}|.
\end{equation*}
Thus, we conclude the proof of Proposition~\ref{prop:muon_residual_dynam}.
\end{proof}

At initialization, $c_{{\rm H},0}=\sigma_{\rm H}/w_{\rm H}$ and $c_{{\rm L},0}=\sigma_{\rm L}/w_{\rm L}$. Since $\eta_0^{\muon}$ is a convex combination of $c_{{\rm H},0}$ and $c_{{\rm L},0}$, Proposition~\ref{prop:muon_residual_dynam} gives the followings.
\begin{equation*}
    c_{{\rm H},1}
    =
    |c_{{\rm H},0}-\eta_0^{\muon}|
    =
    \frac{b}{a+b}
    |c_{{\rm H},0}-c_{{\rm L},0}|
    =
    \frac{b}{a+b}d_0,
\end{equation*}
and
\begin{equation*}
    c_{{\rm L},1}
    =
    |c_{{\rm L},0}-\eta_0^{\muon}|
    =
    \frac{a}{a+b}
    |c_{{\rm H},0}-c_{{\rm L},0}|
    =
    \frac{a}{a+b}d_0.
\end{equation*}
Therefore, we have
\begin{equation*}
    |c_{{\rm H},1}-c_{{\rm L},1}|
    =
    \frac{|a-b|}{a+b}d_0
    =
    \Gamma d_0.
\end{equation*}
Repeating the argument, the gap between the two groups contracts by $\Gamma$ at every subsequent step, yielding for $t\ge1$,
\begin{equation*}
    c_{{\rm H},t}
    =
    \frac{b}{a+b}\Gamma^{t-1}d_0,
    \qquad
    c_{{\rm L},t}
    =
    \frac{a}{a+b}\Gamma^{t-1}d_0.
\end{equation*}

By Lemma~\ref{lem:terminal_gap_residual}, we have
\begin{equation*}
\begin{aligned}
    \Phi_T^{\muon}
    =
    \frac12
    \left(
        mw_{\rm H}c_{{\rm H},T}^2
        +
        (q-m)w_{\rm L}c_{{\rm L},T}^2
    \right)=
    \frac12
    \left(
        a c_{{\rm H},T}^2
        +
        b c_{{\rm L},T}^2
    \right).
\end{aligned}
\end{equation*}
Substituting the closed form of $c_{{\rm H},T}$ and $c_{{\rm L},T}$, we have the closed form of $\Phi_T^{\muon}$ as follows.
\begin{equation*}
\begin{aligned}
    \Phi_T^{\muon}
    &=
    \frac12
    \left[
        a
        \left(
            \frac{b}{a+b}\Gamma^{T-1}d_0
        \right)^2
        +
        b
        \left(
            \frac{a}{a+b}\Gamma^{T-1}d_0
        \right)^2
    \right]\\
    &=
    \frac12
    \frac{
        ab^2+ba^2
    }{
        (a+b)^2
    }
    \Gamma^{2(T-1)}d_0^2=
    \frac{ab}{2(a+b)}
    \Gamma^{2(T-1)}d_0^2.
\end{aligned}
\end{equation*}
Thus, we conclude the proof of Proposition~\ref{prop:muon_gap}.

\subsection{Proof of Proposition~\ref{prop:gd_gap}}\label{app:gd_gap}
Follow Lemma~\ref{lem:terminal_gap_residual}, we define $Y_t^{\gd}=\sum_{i=1}^q y_{i,t}^{\gd}M_i$ with residual coefficients $r_{i,t}^{\gd}=\sigma_i-w_i y_{i,t}^{\gd}$. Let $A_t=\sum_{i=1}^m (r_{i,t}^{\gd})^2$, $B_t=\sum_{i=m+1}^q (r_{i,t}^{\gd})^2$, $S_t=A_t+B_t$, and $P_t=A_t/S_t$. From Proposition~\ref{prop:gd_sharp}, $P_{t+1}=1-P_t$ and $S_{t+1}=C(P_t)S_t$. By Lemma~\ref{lem:terminal_gap_residual}, the suboptimality can be written as
\begin{equation*}
\begin{aligned}
    \Phi_t^{\gd}=
    \frac12
    \left(
        \frac{A_t}{w_{\rm H}}
        +
        \frac{B_t}{w_{\rm L}}
    \right)=
    \frac{S_t}{2}
    \left(
        \frac{P_t}{w_{\rm H}}
        +
        \frac{1-P_t}{w_{\rm L}}
    \right).
\end{aligned}
\end{equation*}
Moreover, we have
\begin{align*}
    \frac{\Phi_{t+1}^{\gd}}{\Phi_t^{\gd}}
    &=
    C(P_t)
    \cdot \frac{
        \frac{1-P_t}{w_{\rm H}}
        +
        \frac{P_t}{w_{\rm L}}
    }{
        \frac{P_t}{w_{\rm H}}
        +
        \frac{1-P_t}{w_{\rm L}}
    }.
\end{align*}
Let $x=P_t$. Substituting $C(x)$ gives
\begin{align*}
    C(x)
    \frac{
        \frac{1-x}{w_{\rm H}}
        +
        \frac{x}{w_{\rm L}}
    }{
        \frac{x}{w_{\rm H}}
        +
        \frac{1-x}{w_{\rm L}}
    }
    &=
    \frac{
        (w_{\rm H}-w_{\rm L})^2x(1-x)
    }{
        \left(w_{\rm L}+(w_{\rm H}-w_{\rm L})x\right)^2
    }
    \frac{
        w_{\rm L}(1-x)+w_{\rm H}x
    }{
        w_{\rm L}x+w_{\rm H}(1-x)
    }\\
    &=
    \frac{
        (w_{\rm H}-w_{\rm L})^2x(1-x)
    }{
        \left(w_{\rm L}+(w_{\rm H}-w_{\rm L})x\right)
        \left(w_{\rm H}-(w_{\rm H}-w_{\rm L})x\right)
    }.
\end{align*}
The last expression is symmetric under $x\mapsto 1-x$. Therefore, it is the same for $x=p$ and $x=1-p$. Its square is $C(p)C(1-p)=R$. Hence
\begin{equation*}
    \frac{\Phi_{t+1}^{\gd}}{\Phi_t^{\gd}}
    =
    \sqrt R.
\end{equation*}
By induction, we have
\begin{equation*}
    \Phi_T^{\gd}
    =
    \Phi_0^{\gd}R^{T/2}.
\end{equation*}
Thus, we conclude the proof of Proposition~\ref{prop:gd_gap}.

\subsection{Proof of Proposition~\ref{prop:onestep_gap}}\label{app:onestep_gap}
By direct calculations, we have that
\begin{equation*}
    \Phi_1^{\muon}
    =\frac{q\sigma_{\rm L}^2}{2w_{\rm L}}\cdot
    \frac{
        \alpha\beta\rho
    }{
        \alpha\rho+\beta
    }
    \left(
        1-\frac{\tau}{\rho}
    \right)^2,\quad
    \Phi_1^{\gd}
    =\frac{q\sigma_{\rm L}^2}{2w_{\rm L}}\cdot
    \frac{
        (\rho-1)^2\alpha\beta\tau^2
        \left(
            \beta/\rho+\alpha\tau^2
        \right)
    }{
        \left(
            \alpha\rho\tau^2+\beta
        \right)^2
    },
\end{equation*}
where $\tau=\sigma_{\rm H}/\sigma_{\rm L}$, $\alpha=m/q$ is defined in Assumption~\ref{ass:main-ordering}, and $\beta=1-\alpha$. With some calculations, we have that
\begin{align}
    \Phi_1^{\gd}-\Phi_1^{\muon}=\frac{q\sigma_{\rm L}^2\rho(\tau-1)}{2w_{\rm L}(\alpha\rho\tau^2+\beta)^2\rho(\alpha\rho+\beta)}\cdot\left[
        2\alpha\rho\tau^2
        -
        \alpha\tau^3
        -
        \alpha\tau^2
        +
        \beta\rho\tau
        +
        \beta\rho
        -
        2\beta\tau
    \right].\label{eq:diff_1}
\end{align}

Since $\rho>0$ and $\tau>1$, it remains to show that the second term on the right-hand side of Eqn.~\eqref{eq:diff_1} is positive. We rewrite it as
\begin{align}
    2\alpha\rho\tau^2
    -
    \alpha\tau^3
    -
    \alpha\tau^2
    +
    \beta\rho\tau
    +
    \beta\rho
    -
    2\beta\tau
    =
    \alpha\tau^2(2\rho-\tau-1)
    +
    \beta\left(\rho(\tau+1)-2\tau\right).\label{eq:diff_2}
\end{align}
Since $1/(1+\rho)<\alpha$, we have $1<\tau<\rho$, and then
$2\rho-\tau-1>\rho-1>0.$
Thus the first term in the right-hand side of Eqn.~\eqref{eq:diff_2} is positive. For the second term, if $\rho\ge2$, then
\begin{equation*}
    \rho(\tau+1)-2\tau
    =
    \rho+\tau(\rho-2)>0.
\end{equation*}
If $1<\rho<2$, then the function $\rho+\tau(\rho-2)$ is decreasing in $\tau$, and its minimum over $\tau<\rho$ is larger than its value at $\tau=\rho$:
\begin{equation*}
    \rho+\rho(\rho-2)
    =
    \rho(\rho-1)>0.
\end{equation*}
Therefore
\begin{equation*}
    \rho(\tau+1)-2\tau>0.
\end{equation*}
Thus, we have that $
    \Phi_1^{\muon}<\Phi_1^{\gd}$, which conclude the proof of Proposition~\ref{prop:onestep_gap}.

\subsection{Proof of Proposition~\ref{prop:muon_contract}}\label{app:muon_contract}
The proof proceeds in three parts: we first simplify $\sqrt{R}$ 
into a monotone function of $p(1-p)$, then show $p(1-p)>\alpha(1-\alpha)=\alpha\beta$ 
under the assumed conditions, and finally verify that the resulting 
lower bound exceeds $\Gamma^2$.

\emph{Part 1: Simplifying $\sqrt{R}$.}
Using $\rho=w_{\rm H}/w_{\rm L}$, we rewrite the contraction 
function $C(x)$ in Proposition~\ref{prop:gd_gap} as
\begin{equation*}
    C(x)
    =\frac{(w_{\rm H}-w_{\rm L})^2x(1-x)}{(w_{\rm L}+(w_{\rm H}-w_{\rm L})x)^2}
    =\frac{(\rho-1)^2x(1-x)}{(1+(\rho-1)x)^2},
\end{equation*}
where we have divided numerator and denominator by $w_{\rm L}^2$.
Therefore
\begin{equation*}
    R=C(p)\,C(1-p)
    =\frac{(\rho-1)^4[p(1-p)]^2}
    {\big(1+(\rho-1)p\big)^2\,\big(\rho-(\rho-1)p\big)^2}.
\end{equation*}
Taking square roots gives
\begin{equation*}
    \sqrt{R}
    =\frac{(\rho-1)^2\,p(1-p)}
    {\big(1+(\rho-1)p\big)\,\big(\rho-(\rho-1)p\big)}.
\end{equation*}
To simplify the denominator, we have that $(1+(\rho-1)p\big)\,(\rho-(\rho-1)p\big)=\rho+(\rho-1)^2p(1-p)$. Substituting back, we simplify the expression of $\sqrt{R}$ as follows.
\begin{equation*}
    \sqrt{R}
    =\frac{(\rho-1)^2\,p(1-p)}
    {\rho+(\rho-1)^2\,p(1-p)}
    =F\big(p(1-p)\big),
\end{equation*}
where $F(v)=(\rho-1)^2v\,/\,(\rho+(\rho-1)^2v)$. Since the 
numerator is increasing in $v$ and the denominator is also increasing 
but starts from the positive constant $\rho$, the function $F$ is 
strictly increasing for $v>0$.

\emph{Part 2: Showing $p(1-p)>\alpha\beta$.}
We express $\Gamma$ and $p$ in terms of $\alpha,\beta,\rho,\tau$. 
For $\Gamma$ in Proposition~\ref{prop:muon_gap}, using $a=mw_{\rm H}=q\alpha\rho w_{\rm L}$ and 
$b=(q-m)w_{\rm L}=q\beta w_{\rm L}$, we get
\begin{equation*}
    \Gamma=\frac{|a-b|}{a+b}
    =\frac{\alpha\rho-\beta}{\alpha\rho+\beta},
\end{equation*}
where the absolute value is dropped because the assumption 
$\alpha>1/(1+\rho)$ implies $\alpha\rho>\beta$. For $p$, 
dividing numerator and denominator by $q\sigma_{\rm L}^2$ gives
\begin{equation*}
    p=\frac{m\sigma_{\rm H}^2}{m\sigma_{\rm H}^2+(q-m)\sigma_{\rm L}^2}
    =\frac{\alpha\tau^2}{\alpha\tau^2+\beta},
\end{equation*}
where $\tau=\sigma_{\rm H}/\sigma_{\rm L}$. We now show $p\in(\alpha,\beta)$. For the lower bound,
\begin{equation*}
    p-\alpha
    =\frac{\alpha\tau^2}{\alpha\tau^2+\beta}-\alpha
    =\frac{\alpha\beta(\tau^2-1)}{\alpha\tau^2+\beta}>0,
\end{equation*}
since $\tau>1$. For the upper bound,
\begin{equation*}
    \beta-p
    =\beta-\frac{\alpha\tau^2}{\alpha\tau^2+\beta}
    =\frac{\beta^2-\alpha^2\tau^2}{\alpha\tau^2+\beta}
    =\frac{(\beta-\alpha\tau)(\beta+\alpha\tau)}{\alpha\tau^2+\beta}>0,
\end{equation*}
since the assumption $\tau<\beta/\alpha$ ensures $\beta-\alpha\tau>0$.

Since $p\in(\alpha,\beta)$ and $\alpha+\beta=1$, we have 
$p\in(\alpha,1-\alpha)$. The function $v(x)=x(1-x)$ is strictly 
concave and symmetric about $x=1/2$, so it is strictly increasing 
on $(\alpha,1/2]$ and achieves its minimum on the interval 
$[\alpha,1-\alpha]$ at the endpoints, where $v(\alpha)=v(1-\alpha)
=\alpha\beta$. Since $p$ lies strictly in the interior, we conclude
\begin{equation*}
    p(1-p)>\alpha\beta.
\end{equation*}

\emph{Part 3: Verifying $F(\alpha\beta)\ge\Gamma^2$.}
Since $F$ is strictly increasing and $p(1-p)>\alpha\beta$, we have 
$\sqrt{R}=F(p(1-p))>F(\alpha\beta)$. It now suffices to show 
$F(\alpha\beta)>\Gamma^2$. The different between them is
\begin{equation*}
    F(\alpha\beta)-\Gamma^2
    =\frac{\rho\left[2\alpha\beta(\rho-1)
    -(\alpha\rho-\beta)\right]
    \left[2\alpha\beta(\rho-1)
    +(\alpha\rho-\beta)\right]}
    {(\rho+(\rho-1)^2\alpha\beta)(\alpha\rho+\beta)^2}.
\end{equation*}
The denominator is positive. For the value of $2\alpha\beta(\rho-1)-(\alpha\rho-\beta)$, substituting $\beta=1-\alpha$ and expanding gives
\begin{equation*}
    2\alpha\beta(\rho-1)-(\alpha\rho-\beta)
    =2\alpha(1-\alpha)(\rho-1)-\alpha\rho+(1-\alpha)
    =(1-2\alpha)(1+\alpha(\rho-1))>0,
\end{equation*}
since $\alpha<1/2$ and $\rho>1$. In addition, the value of $2\alpha\beta(\rho-1)+(\alpha\rho-\beta)$ is positive because 
both terms are positive (recall $\alpha\rho>\beta$). Therefore $F(\alpha\beta)>\Gamma^2$, 
and combining with Part~2 gives $\sqrt{R}>F(\alpha\beta)>\Gamma^2$. Thus, we conclude the proof of Proposition~\ref{prop:muon_contract}.

\end{document}